\newtheorem{thm}{Theorem}
\newtheorem*{lemma1}{Lemma 1 in \cite{elesedy2021provably}}
\newtheorem*{prop11}{Proposition 11 in \cite{elesedy2021provably}}
\newtheorem*{theorem13}{Theorem 13 in \cite{elesedy2021provably}}
\newtheorem*{theoremRON}{Theorem 3.6 from \cite{levie2023graphonsignal}}
\newtheorem{lemma}[thm]{Lemma}
\theoremstyle{definition}
\newtheorem{example}{Example}[section]
\setlist[enumerate]{itemsep=0.2ex, topsep=0.2\topsep}
\setlist[description]{itemsep=0.2ex, topsep=0.2\topsep}
\def\SS{{\mathcal S_{2}}}
\def\C{{\mathbb C}}
\def\R{{\mathbb R}}
\newcommand{\gnet}{$\mathcal{G}$\textrm{-}\text{Net}}
\newcounter{RonCounter}
\newcounter{TesCounter}
  \newcounter{SoleCounter}
\title{Approximately Equivariant Graph Networks}
\author{%
Ningyuan (Teresa) Huang \\
Johns Hopkins University \\
\texttt{nhuang19@jhu.edu} \\
\And 
Ron Levie \\
Technion – Israel Institute of Technology \\
\texttt{levieron@technion.ac.il} \\
\And
Soledad Villar  \\
Johns Hopkins University  \\
\texttt{svillar3@jhu.edu} \\
}
\begin{document}

\maketitle

\begin{abstract}
    Graph neural networks (GNNs) are commonly described as being permutation equivariant with respect to node relabeling in the graph. This symmetry of GNNs is often compared to the translation equivariance of Euclidean convolution neural networks (CNNs). However, these two symmetries are fundamentally different: The translation equivariance of CNNs corresponds to symmetries of the fixed domain acting on the image signals
(sometimes known as \emph{active symmetries}), whereas in GNNs any permutation acts on both the graph signals and the graph domain (sometimes described as \emph{passive symmetries}). In this work, we focus on the active symmetries of GNNs, by considering a learning setting where signals are supported on a fixed graph. In this case, the natural symmetries of GNNs are the automorphisms of the graph. Since real-world graphs tend to be asymmetric, we relax the notion of symmetries by formalizing approximate symmetries via graph coarsening. 
We present a bias-variance formula that quantifies the tradeoff between the loss in expressivity and the gain in the regularity of the learned estimator, depending on the chosen symmetry group. To illustrate our approach, we conduct extensive experiments on image inpainting, traffic flow prediction, and human pose estimation with different choices of symmetries. We show theoretically and empirically that the best generalization performance can be achieved by choosing a suitably larger group than the graph automorphism, but smaller than the permutation~group. 

\end{abstract}

\section{Introduction}

Graph Neural Networks (GNNs) are popular 
tools to learn functions on graphs. They are commonly designed to be permutation equivariant since the node ordering can be arbitrary (in the matrix representation of a graph). Permutation equivariance serves as a strong geometric prior and allows GNNs to generalize well \cite{bronstein2017geometric, gama2020stability, bronstein2021geometric}. Yet in many applications, the node ordering across different graphs is matched or fixed a priori, such as a time series of social networks where the nodes identify the same users, or a set of skeleton graphs where the nodes represent the same joints. 
In such settings, the natural symmetries arise from graph automorphisms, which effectively only act on the graph signals; This is inherently different from the standard equivariance in GNNs that concerns all possible permutations acting on both the signals and the graph domain. A permutation of both the graph and the graph signal can be seen as a \emph{change of coordinates} since it does not change the object it represents, just the way to express it. This parallels the \emph{passive symmetries} in physics, where physical observables are independent of the coordinate system one uses to express them \cite{rovelli2000loop, villar2023passive}. In contrast, a permutation of the graph signal on a fixed graph potentially transforms the object itself, not only its representation. This parallels the \emph{active symmetries} in physics, where the coordinate system (in this case, the graph or domain) is fixed but a group transformation on the signal results in a predictable transformation of the outcome (e.g., permuting left and right joints in the human skeleton changes a left-handed person to right-handed). The active symmetries on a fixed graph are similar to the translation equivariance symmetries in Euclidean convolutional neural networks (CNNs), where the domain is a fixed-size grid and the signals are images. 

In this work, we focus on active symmetries in GNNs. Specifically, we consider a fixed graph domain $G$ with $N$ nodes, an adjacency matrix $A \in \mathbb R^{N\times N}$, and input graph signals $X\in \mathbb R^{N\times d}$. We are interested in learning equivariant functions $f$ that satisfy (approximate) active symmetries
\begin{equation}
  f(\Pi X) \approx \Pi f(X) \text{ for permutations } \Pi \in \mathcal G\subseteq \mathcal S_N, \label{eqn:active_sym} 
\end{equation}
where $\mathcal G$ is a subgroup of the permutation group $\mathcal S_N$ that depends on $G$. For example, $\mathcal{G}$ can be the graph automorphism group $\mathcal{A}_G = \{\Pi: \Pi A = A \Pi \}$. Each choice of $\mathcal{G}$ induces a hypothesis class $\mathcal{H}_{\mathcal{G}}$ for $f$: the smaller the group $\mathcal{G}$, the larger the class $\mathcal{H}_{\mathcal{G}}$. We aim to select $\mathcal{G}$ so that the learned function $f \in \mathcal{H}_{\mathcal{G}}$ generalizes well, also known as the \emph{model selection} problem \cite[Chp.4]{mohri2018foundations}.
 In contrast, standard graph learning methods (GNNs, spectral methods) are defined to satisfy passive symmetries, by treating $A$ as input and requiring $f (\Pi A \Pi^{\top}, \Pi X) = \Pi f(A, X)$ for all permutations $\Pi \in \mathcal{S}_N$ and all $N$. But for the fixed graph setting, we argue that active symmetries are more relevant. Thus we use $A$ to define the hypothesis class rather than treating it an input. By switching from passive symmetries to active symmetries, we will show how to design GNNs for signals supported on a fixed graph with different levels of expressivity and generalization properties. 

While enforcing symmetries has been shown to improve generalization when the symmetry group is known a priori \cite{bietti2021sample, elesedy2021provably, mei2021learning, lyle2020benefits, elesedy2022group, elesedy2021provably2, behboodi2022pac}, the problem of \emph{symmetry model selection} is not completely solved, particularly when the data lacks exact symmetries (see for instance \cite{benton2020learning, cahill2023lie, portilheiro2022tradeoff} and references therein). Motivated by this, we study the symmetry model selection problem for learning on a fixed graph domain. This setting is particularly interesting since (1) the graph automorphism group serves as the natural \textit{oracle} symmetry; (2) real-world graphs tend to be asymmetric, but admit cluster structure or local symmetries. Therefore, we define \emph{approximate symmetries of graphs} using the cut distance between graphs from graphon analysis. An approximate symmetry of a graph $G$ is a symmetry of any other graph $G'$ that approximates $G$ in the cut distance. In practice, we take $G'$ as coarse-grainings (or clusterings) of $G$, as these are typically guaranteed to be close in cut distance to $G$. We show how to induce approximate symmetries for $G$ via the automorphisms of $G'$. Our main contributions include:
\begin{enumerate}
    \item We formalize the notion of active symmetries and approximate symmetries of GNNs for signals supported on a fixed graph domain, which allows us to study the symmetry group model selection problem. (See Sections \ref{sec:set-up}, \ref{sec:approx})
    \item We theoretically characterize the statistical risk depending on the hypothesis class induced from the symmetry group, and show a bias-variance tradeoff between the reduction in expressivity and the gain in regularity of the model. (See Sections \ref{sec:gen_exact}, \ref{sec:approx})
    \item We illustrate our approach empirically for image inpainting, traffic flow prediction, and human pose estimation. (See Section \ref{sec:experiments} for an overview, and Appendix \ref{app.exp_details} for the details on how to implement equivariant graph networks with respect to different symmetry groups). 
\end{enumerate}

\subsection{Related Work}

\textbf{Graph Neural Networks and Equivariant Networks. } 
Graph Neural Networks (GNNs)  \cite{Sca+2009,Bru+2014,Gil+2017} are typically permutation-equivariant (with respect to node relabeling). 
These include message-passing neural networks (MPNNs) \cite{Kip+2017, Vel+2018, Gil+2017}, spectral GNNs \cite{Bru+2014, Def+2015, Gam+2019, Lev+2019}, and subgraph-based GNNs \cite{chen2020can, Thiede2021autobann, bevilacqua2021equivariant, bouritsas2022improving, frasca2022understanding, Zha+2023b}. Permutation equivariance in GNNs can extend to edge types \cite{gao2023double} and  higher-order tensors representing the graph structure \cite{maron2018invariant, Mor+2019}. Equivariant networks generalize symmetries on graphs to other objects, such as sets \cite{zaheer2017deep}, images \cite{cohen2016group}, shapes \cite{sukurdeep2023elastic, cahill2022group}, point clouds \cite{thomas2018tensor, fuchs2020se, geiger2022e3nn, villar2021scalars, finkelshtein2022simple}, manifolds \cite{kondor2018clebsch, weiler2021coordinate, bronstein2017geometric}, and physical systems \cite{pmlr-v162-wang22aa, villar2023dimensionless} among many others. Notably, many equivariant machine learning problems concern the setting where the domain is fixed, e.g., learning functions on a fixed sphere \cite{cohen2018spherical}, and thus focus on active symmetries rather than passive symmetries such as the node ordering in standard GNNs. Yet learning on a fixed graph domain arises naturally in many applications such as molecular dynamics modeling \cite{han2022equivariant}. This motivates us to consider active symmetries of GNNs for learning functions on a fixed graph. Our work is closely related to Natural Graph Networks (NGNs) in \cite{de2020natural}, which use global and local graph isomorphisms to design maximally expressive GNNs for distinguishing \emph{different graphs}. In contrast, we focus on generalization and thus consider symmetry model selection on a \emph{fixed} graph.

\textbf{Generalization of Graph Neural Networks and Equivariant Networks. } Most existing works focus on the graph-level tasks, where in-distribution generalization bounds of GNNs have been derived using statistical learning-theoretic measures such as Rademacher complexity \cite{pmlr-v119-garg20c} and Vapnik–Chervonenkis (VC) dimension \cite{esser2021learning, morris2023wl}, or uniform generalization analysis based on random graph models
\cite{maskey2022generalization}; Out-of-distribution generalization properties have been investigated using different notions including transferability \cite{JMLR:v22:20-213,ruiz2020graphon} (also known as size generalization \cite{bevilacqua2021size}), and extrapolation \cite{xu2020neural}. In general, imposing symmetry constraints improves the sample complexity and the generalization error \cite{bietti2021sample, elesedy2021provably, mei2021learning, lyle2020benefits, elesedy2022group, elesedy2021provably2, behboodi2022pac}. Recently, \citet{petrache2023approximationgeneralization} investigated approximation-generalization tradeoffs using approximate symmetries for general groups. Their results are based on uniform convergence generalization bounds, which measure the worst-case performance of all functions in a hypothesis class and differ from our non-uniform analysis.

\textbf{Approximate Symmetries. } For physical dynamical problems, \citet{pmlr-v162-wang22aa} formalized approximate symmetries by relaxing exact equivariance to allow for a small equivariance error. For reinforcement learning applications, \citet{finzi2021residual} proposed Residual Pathway Priors to expand network layers into a sum of equivariant layers and non-equivariant layers, and thus relax strict equivariance into approximate equivariance priors. In the context of self-supervised learning, \citet{DUET2023, gupta2023structuring} proposed to learn structural latent representations that satisfy approximate equivariance. Inspired by randomized algorithms, \citet{cotta2023probabilistic} formalizes probabilistic notions of invariances and universal approximation. In \cite{Mallat2012}, scattering transforms (a specific realization of a CNN) are shown to be approximately invariant to small deformations in the image domain, which can be seen as a form of approximate symmetry. Scattering transforms on graphs are discussed in \cite{gama2018diffusion, perlmutter2019understanding, zou2020graph}. Additional discussions on approximate symmetries can be found in \cite[Section 6]{bogatskiy2022symmetry}.

\section{Problem Setup: Learning Equivariant Maps on a Fixed Graph} \label{sec:set-up}

\textbf{Notations.}
We let $\R, \R_+$ denote the reals and the nonegative reals,  $I$ denote the identity matrix and $\mathbbm{1}$ denote the all-ones matrix. For a matrix $Y \in \R^{N \times k}$, we write the Frobenious norm as $\| Y \|_F$. We denote by $X \odot Y$ the element-wise multiplication of matrices $X$ and $Y$. We write $[N]=\{1,\ldots, N\}$ and $\mathcal{S}_N$ as the permutation group on the set $[N]$. Groups are typically noted by calligraphic letters. Given $\mathcal H, \mathcal K$ groups, we denote a semidirect product by $\mathcal H\rtimes \mathcal K$. For a group $\mathcal G$ with representation $\phi$, we denote by $\chi_{\phi|_{\mathcal{G}}}$ the corresponding character (see Definition \ref{defn:character} in Appendix \ref{app:equiv_abelian}). Denoting $\mathcal H \leq \mathcal G$ means that $\mathcal H$ is a subgroup of $\mathcal G$. 

\textbf{Equivariance.} We consider a compact group $\mathcal{G}$ with Haar measure $\lambda$ (the unique $\mathcal G$-invariant probability measure on $\mathcal G$). 
Let $\mathcal G$ act on spaces $\mathcal{X}$ and $ \mathcal{Y}$ by representations $\phi$ and $\psi$, respectively. We say that a map $f: \mathcal{X} \to \mathcal{Y}$ is $\mathcal{G}$-equivariant if $\forall \, g \in \mathcal{G}, x \in \mathcal{X},  \psi(g^{-1}) f( \phi(g) \, x) = f(x)$. Given any map $f: \mathcal{X} \to \mathcal{Y}$, a projection of $f$ onto the space of $\mathcal{G}$-equivariant maps can be computed by averaging over orbits with respect to $\lambda$
\begin{equation}
    (\mathcal{Q}_{\mathcal{G}} f) (x) = \int_{\mathcal{G}} \psi(g^{-1}) \, f\left( \phi(g)
 x \right) \, \dd \lambda(g). \label{eqn:intw_proj}
 \end{equation}
For $u, v \in \mathcal{Y}$, let $\langle u, v \rangle$ be a $\mathcal{G}$-invariant inner product, i.e. 
$\langle \psi(g) u, \psi(g) v \rangle = \langle u, v \rangle, \forall \, g \in \mathcal{G}, \forall \, u, v \in \mathcal{Y}.
$
Given two maps $f_1, f_2: \mathcal{X} \to \mathcal{Y}$, we define their inner product as
$ \langle f_1, f_2 \rangle_{\mu} = \int_{\mathcal{X}} \langle f_1(x), f_2(x) \rangle \, \dd \mu(x)$, where $\mu$ is a $\mathcal{G}$-invariant measure on $\mathcal{X}$.
Let $V$ be the space of all (measurable) map $f: \mathcal{X} \to \mathcal{Y}$ such that $\| f \|_{\mu} = \sqrt{\langle f, f \rangle_{\mu}} < \infty$. 

\textbf{Graphs.} We consider edge-node weighted graphs $G = ([N], A, b)$, where $[N]$ is a finite set of nodes, $A\in [0,1]^{N \times N}$ is the adjacency matrix describing the edge weights, and $b=\{b_1, \ldots, b_N \} \subset \mathbb R$ are the node weights.
An edge weighted graph is a special case of $G$ where all node weights are $1$. A simple graph is a special case of an edge weighted graph, where all edge weights are binary.
Let $\mathcal{A}_{G} $ be the automorphism group of a graph defined as $\mathcal{A}_{G} \coloneqq \{\Pi \in \mathcal{S}_{N}: \Pi \, A \, \Pi^{\top} = A, \Pi \, b = b \}$, which characterizes the symmetries of $G$. Hereinafter, $\mathcal{G}$ is assumed to be a subgroup of $\mathcal{S}_N$. 

\textbf{Graph Signals and Learning Task.} We consider graph signals supported in the nodes of a fixed graph $G$ and maps between graphs signals. Let $\mathcal{X} = \R^{N \times d}, \mathcal{Y} = \R^{N \times k}$ be the input and output graph signal spaces. 
We denote by $f$ a map between graph signals, $f:\mathcal X \to \mathcal Y$. Even though the functions $f$ depend on $G$, we don't explicitly write $G$ as part of the notation of $f$ because $G$ is fixed. Our goal is to learn a target map between graph signals $f^*: \mathcal{X} \to \mathcal{Y}$. To this end, we assume access to a training set $\{ (X_i, Y_i) \}$ where the $X_i$ are i.i.d. sampled from an $\mathcal{S}_{N}$-invariant distribution $\mu$ on $\mathcal{X}$ where $\mathcal{S}_n$ acts on $\mathcal{X}$ by permuting the rows, and $Y_i = f^*(X_i) + \xi_i$ for some noise $\xi_i$. A natural assumption is that $f^*$ is approximately equivariant with respect to $\mathcal{A}_G$ in some sense. Our symmetry model selection problem concerns a 
sequence of hypothesis class $\{ \mathcal{H}_{\mathcal{G}} \}$ indexed by $\mathcal{G}$, where we choose the best class $\mathcal{H}_{\mathcal{G}}^*$ such that the estimator $\hat{f} \in \mathcal{H}_{\mathcal{G}}^*$ gives the best generalization performance.

\textbf{Equivariant Graph Networks.} 
We propose to learn the target function $f^*$ on the fixed graph using \emph{$\mathcal{G}$-equivariant graph networks} (\gnet), which are equivariant to a chosen symmetry group $\mathcal{G}$ depending on the graph domain.
Using standard techniques from representation theory, such as Schur's lemma and projections to isotypic components \cite{fulton2013representation}, we can parameterize \gnet{} by interleaving $\mathcal G$-equivariant linear layers $f_{\mathcal{G}}: \R^{N \times d} \to \R^{N \times k}$ with pointwise nonlinearity, where the weights in the linear map $f_{\mathcal{G}}$ are constrained in patterns depending on the group structure (also known as parameter sharing or weight tying \cite{ravanbakhsh2017equivariance}, see Appendix \ref{app:equiv_abelian} for technical details). 
In practice, we can make \gnet{} more flexible by systematically breaking the symmetry, such as incorporating graph convolutions (i.e., $A f_\mathcal{G}$) and locality constraints (i.e.,  $A \odot f_\mathcal{G}$). 
Compared to standard GNNs (described in Appendix \ref{app.gnns}), \gnet{} uses a more expressive linear map to gather global information (i.e., weights are not shared among all nodes). Importantly, \gnet{} yields a suite of models that allows us to flexibly choose the hypothesis class (and estimator) reflecting the active symmetries in the data, and subsumes standard GNNs that are permutation-equivariant with respect to passive symmetries (but not necessarily equivariant with respect to active symmetries).

\textbf{Graphons.} A graphon is a symmetric measurable function $W: [0,1]^2 \to [0,1]$. Graphons represent (dense) graph limits where the number of nodes goes to infinity; they can also be viewed as random graph models where the value $W(x,y)$ represents the probability of having an edge between the nodes $x$ and $y$. Let $\mathcal{W}$ denote the space of all graphons and $\eta$ denote the Lebesgue measure. Lovász and Szegedy \cite{lovasz2006limits} introduced the cut norm on $\mathcal{W}$ as
\begin{equation}
    \|W\|_{\square} \coloneqq \sup_{\substack{S, T \subset[0,1]\\ S,T \text{measurable}}}\left|\int_{S \times T} W(u, v) \, \dd \eta(u) \, \dd \eta(v)\right|. \label{eqn:cut-norm}
\end{equation}
Based on the cut norm, Borgs et al. \cite{borgs2008convergent} defined the cut distance between two graphons $W, U \in \mathcal{W}$,
\begin{equation}
    \delta_{\square}(W, U) \coloneqq \inf_{f \in S_{[0,1]}} \| W - U^f \|_{\square},
\end{equation}
where $S_{[0,1]}$ is the set of all measure-preserving bijective measurable maps between $[0,1]$ and itself, and $U^f(x,y) = U(f(x), f(y))$. Note that the cut distance is ``permutation-invariant,'' where measure preserving bijections are seen as the continuous counterparts of permutations.

\section{Generalization with Exact Symmetry}\label{sec:gen_exact}

In this section, we assume the target map $f^*$ is $\mathcal{A}_G$-equivariant and study the symmetry model selection problem by comparing the (statistical) risk of different models. Concretely, the risk quantifies how a given model performs on average on any potential input. The smaller the risk, the better the model performs. Thus, we use the \emph{risk gap} of two functions $f, f'$, defined as 
\begin{equation}
    \Delta(f, f') \coloneqq \mathbb{E}\left[\|Y-f(X)\|_F^2\right]-\mathbb{E}\left[\|Y-f'(X)\|_F^2\right], \label{eqn:risk_gap}
\end{equation}
as our model selection metric. Following ideas from Elesedy and Zaidi \cite{elesedy2021provably}, we analyze the risk gap between any function $f \in V$ and its $\mathcal{G}$-equivariant version, for $\mathcal G \subseteq \mathcal{S}_N$.

\begin{restatable}[Risk Gap]{lemma}{LemmaOne} \label{lem:BV}
Let $\mathcal{X} = \R^{N \times d}, \mathcal{Y} = \mathbb{R}^{N \times k} $ be the input and output graph signal spaces on a fixed graph $G$. Let $X \sim \mu$ where $\mu$ is a $\mathcal{S}_{N}$-invariant distribution on $\mathcal{X}$. Let $Y=f^*(X)+\xi$, where $\xi \in \mathbb{R}^{N \times k}$ is random, independent of $X$ with zero mean and finite variance and $f^*: \mathcal{X} \rightarrow \mathcal{Y}$ is $\mathcal{A}_G$-equivariant. Then, for any $f \in V$ and for any compact group $\mathcal{G} \subseteq \mathcal{S}_{N}$, we can decompose $f$ as
$$
f = \bar{f}_{\mathcal{G}} + f^{\perp}_{\mathcal{G}},
$$
where $\bar{f}_{\mathcal{G}} = \mathcal{Q}_{\mathcal{G}} f, f^{\perp}_{\mathcal{G}}= f - \bar{f}_{\mathcal{G}}$.  
Moreover, the risk gap satisfies
$$
\Delta(f, \bar{f}_{\mathcal{G}}) = \mathbb{E}\left[\|Y-f(X)\|_F^2\right]-\mathbb{E}\left[\|Y-\bar{f}_{\mathcal{G}} (X)\|_F^2\right]= \underbrace{ -2 \langle f^*, f^{\perp}_{\mathcal{G}}\rangle_{\mu}}_{\text{mismatch}} + \underbrace{\left\|f^{\perp}_{\mathcal{G}} \right\|_\mu^2}_{\text{constraint}} . \label{eqn:tradeoff}
$$
\end{restatable}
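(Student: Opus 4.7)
The plan is to follow the bias-variance decomposition idea used by Elesedy and Zaidi: reduce the risk gap to a purely deterministic quantity by discarding the noise, expand the squared error using the orthogonal decomposition $f = \bar f_{\mathcal G} + f^\perp_{\mathcal G}$, and then show that the cross-term collapses because $\mathcal Q_{\mathcal G}$ is an orthogonal projection with respect to $\langle\cdot,\cdot\rangle_\mu$.

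First I would substitute $Y = f^*(X) + \xi$ into both expectations defining $\Delta(f,\bar f_{\mathcal G})$. For any $g \in V$, expand
\[
\mathbb E\!\left[\|Y - g(X)\|_F^2\right] = \mathbb E\!\left[\|f^*(X) - g(X)\|_F^2\right] + 2\,\mathbb E\!\left[\langle \xi,\, f^*(X) - g(X)\rangle\right] + \mathbb E\!\left[\|\xi\|_F^2\right].
\]
Since $\xi$ is independent of $X$ with zero mean, the middle term vanishes, and the $\mathbb E\|\xi\|_F^2$ term is common to both expectations and cancels in the difference. Hence $\Delta(f,\bar f_{\mathcal G}) = \|f^* - f\|_\mu^2 - \|f^* - \bar f_{\mathcal G}\|_\mu^2$. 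Substituting $f = \bar f_{\mathcal G} + f^\perp_{\mathcal G}$ and expanding gives
\[
\Delta(f,\bar f_{\mathcal G}) = -2\,\langle f^* - \bar f_{\mathcal G},\, f^\perp_{\mathcal G}\rangle_\mu + \|f^\perp_{\mathcal G}\|_\mu^2.
\]

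The main technical step is then to show that $\langle \bar f_{\mathcal G}, f^\perp_{\mathcal G}\rangle_\mu = 0$, which reduces the first term to $-2\langle f^*, f^\perp_{\mathcal G}\rangle_\mu$ as claimed. For this, I would verify that $\mathcal Q_{\mathcal G}$ is an orthogonal projection onto the subspace of $\mathcal G$-equivariant maps in $V$. Idempotency follows directly from the invariance of the Haar measure $\lambda$: applying $\mathcal Q_{\mathcal G}$ twice yields a double integral over $\mathcal G \times \mathcal G$ that collapses under the change of variables $(g,h)\mapsto (g, hg)$ to a single integral equal to $\mathcal Q_{\mathcal G} f$. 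For self-adjointness with respect to $\langle\cdot,\cdot\rangle_\mu$, I would use the $\mathcal G$-invariance of the inner product on $\mathcal Y$ (to move $\psi(g^{-1})$ across), the $\mathcal S_N$-invariance of $\mu$ (which implies $\mathcal G$-invariance since $\mathcal G \leq \mathcal S_N$, allowing the substitution $x \mapsto \phi(g^{-1})x$), and the inversion invariance of the Haar measure on a compact group. Together these give $\langle \mathcal Q_{\mathcal G} f, h\rangle_\mu = \langle f, \mathcal Q_{\mathcal G} h\rangle_\mu$, so $\mathcal Q_{\mathcal G}$ is an orthogonal projection and $f^\perp_{\mathcal G} = (I - \mathcal Q_{\mathcal G})f$ is orthogonal to $\bar f_{\mathcal G} = \mathcal Q_{\mathcal G} f$.

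The hard part is the self-adjointness verification, since it requires carefully tracking three distinct invariance properties (of the inner product, of $\mu$, and of $\lambda$) and applying Fubini. Note that the statement does not require $\mathcal G \subseteq \mathcal A_G$, so $f^*$ need not be $\mathcal G$-equivariant; this is precisely why the mismatch term $\langle f^*, f^\perp_{\mathcal G}\rangle_\mu$ survives and cannot be simplified further. Once orthogonality of the projection is established, combining with the expansion above yields the claimed identity.
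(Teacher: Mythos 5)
Your proof is correct and takes essentially the same route as the paper: both substitute $Y=f^*(X)+\xi$, use independence and zero mean of $\xi$ to drop the noise terms, expand the quadratic, and reduce the cross-term by orthogonality of $\bar f_{\mathcal G}$ and $f^\perp_{\mathcal G}$. The only difference is that where the paper simply cites Lemma 1 of Elesedy and Zaidi \cite{elesedy2021provably} for the orthogonal decomposition $V = S \oplus A$, you verify the orthogonality yourself by checking idempotency and self-adjointness of $\mathcal Q_{\mathcal G}$ via the Haar-measure, $\mu$-invariance, and inner-product-invariance arguments---which is exactly the content of that cited lemma, so the argument is sound.
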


Lemma~\ref{lem:BV}, proven in Appendix \ref{app:proofs_exact}, adapts ideas from \cite{elesedy2021provably} to equate the risk gap to the symmetry ``mismatch'' between the chosen group $\mathcal{G}$ and the target group $\mathcal{A}_G$, plus the symmetry ``constraint'' captured by the norm of the anti-symmetric part of $f$ with respect to $\mathcal{G}$. Our symmetry model selection problem aims to find the group $\mathcal{G} \subseteq \mathcal{S}_{N}$ such that the risk gap is maximized. Note that for $\mathcal{G} \le  \mathcal{A}_G$, the mismatch term vanishes since $f^*$ is $\mathcal{A}_G$-equivariant, but the constraint term decreases with $\mathcal{G}$; When $\mathcal{G} = \mathcal{A}_G$, we recover Lemma 6 in \cite{elesedy2021provably}. On the other hand, for $\mathcal{G} > \mathcal{A}_G$, the mismatch term can be positive, negative or zero (depending on $f^*$) whereas the constraint term increases with $\mathcal{G}$.

\subsection{Linear Regression}
In this section, we focus on the linear regression setting and analyze the risk gap of using the equivariant estimator versus the vanilla estimator. We consider linear estimator $\hat{\Theta}: \R^{N \times d} \to \R^{N \times k}$ that predicts $\hat{Y} = f_{\hat{\Theta}}(X) = \hat{\Theta}^{\top} X$.
Given a compact group $\mathcal{G}$ and any linear map $\Theta$, we obtain its $\mathcal{G}$-equivariant version via projection to the $\mathcal{G}$-equivariant space (also known as intertwiner average), 
\begin{equation}
\Psi_{\mathcal{G}}(\Theta) =\int_{\mathcal{G}} \phi(g) \, \Theta \, \psi\left(g^{-1}\right) \mathrm{d} \lambda(g)   . \label{eqn:intertwiner}
\end{equation}
We denote $\Psi_{\mathcal{G}}^{\perp}(\Theta) = \Theta - \Psi_{\mathcal{G}}(\Theta)$ as the projection of $\hat{\Theta}$ to the orthogonal complement of the $\mathcal{G}$-equivariant space. Here $\Psi_{\mathcal{G}}$ instantiates the orbit-average operator $\mathcal{Q}_{\mathcal{G}}$ (eqn. \ref{eqn:intw_proj}) for linear functions.

\begin{restatable}[Bias-Variance-Tradeoff]{thm}{ThmTwo} \label{thm:BV}
  Let $\mathcal{X}=\mathbb{R}^{N \times d}, \mathcal{Y}=\mathbb{R}^{N \times k}$ be the graph signals spaces on a fixed graph $G$. Let $\mathcal{S}_N$ act on $\mathcal{X}$ and $\mathcal{Y}$ by permuting the rows with representations $\phi$ and $\psi$. Let $\mathcal{G}$ be a subgroup of $\mathcal{S}_N$ acting with restricted representations $\phi |_{\mathcal{G}}$ on $\mathcal{X}$ and  $\psi |_{\mathcal{G}}$ on $\mathcal{Y}$. 
  Let $X_{[i,j]} \overset{i.i.d.}{\sim} \mathcal{N}\left(0, \sigma_X^2 \right)$ 
  and $Y=f^*(X)+\xi$ where $f^*(x)=\Theta^{\top} x$ is $\mathcal{A}_G$-equivariant and $\Theta \in \mathbb{R}^{N d \times N k}$. Assume $\xi_{[i,j]}$ is random, independent of $X$, with mean 0 and $\mathbb{E}\left[\xi \xi^{\top}\right]=\sigma_{\xi}^2 I <\infty$. Let $\hat{\Theta}$ be the least-squares estimate of $\Theta$ from $n$ i.i.d. examples $\left\{\left(X_i, Y_i\right): i=1, \ldots, n\right\}$, $\Psi_{\mathcal{G}}(\hat{\Theta})$ be its equivariant version with respect to $\mathcal{G}$. Let $\left(\chi_{\psi|_{\mathcal{G}}} \mid \chi_{\phi|_{\mathcal{G}}} \right)=\int_{\mathcal{G}} \chi_{\psi|_{\mathcal{G}}}(g) \chi_{\phi|_{\mathcal{G}}}(g) \mathrm{d} \lambda(g)$ denote the inner product of the characters.  
If $n>Nd+1$ the risk gap is
$$
\mathbb{E}\left[\Delta\left(f_{\hat{\Theta}}, f_{\Psi_{\mathcal{G}}(\hat{\Theta})}\right)\right]= \underbrace{- \,\sigma_X^2 \, \| \Psi_{\mathcal{G}}^{\perp}(\Theta)  \|_F^2 }_{\text{bias}} \, + \, \underbrace{\sigma_{\xi}^2 \frac{N^2 dk -\left(\chi_{\psi|_{\mathcal{G}}} \mid \chi_{\phi|_{\mathcal{G}}}\right)}{n-Nd-1}}_{\text{variance}}.
$$
\end{restatable}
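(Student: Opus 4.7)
The plan is to reduce to Lemma~\ref{lem:BV} applied pointwise to the random estimator $\hat\Theta$, then take expectations and evaluate the two resulting expressions. First, since $\phi,\psi$ act linearly, orbit averaging commutes with parametrization, so $\mathcal Q_{\mathcal G}(f_{\hat\Theta}) = f_{\Psi_{\mathcal G}(\hat\Theta)}$ and $f_{\hat\Theta}^\perp = f_{\Psi_{\mathcal G}^\perp(\hat\Theta)}$. Because $X$ is isotropic Gaussian with $\mathbb E[XX^\top] = \sigma_X^2 I$ (in the vectorized view), the $L^2(\mu)$ inner product on linear maps pulls back to a scaled Frobenius inner product on parameter matrices, $\langle f_A, f_B\rangle_\mu = \sigma_X^2 \langle A, B\rangle_F$. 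This turns the risk-gap formula of Lemma~\ref{lem:BV} into a clean statement about Frobenius norms of coefficient matrices.

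Next, I would take expectation over the training data. Unbiasedness of OLS, $\mathbb E[\hat\Theta] = \Theta$, together with the Frobenius orthogonality of $\Psi_{\mathcal G}$ and $\Psi_{\mathcal G}^\perp$, reduces the mismatch term to $-2\sigma_X^2\|\Psi_{\mathcal G}^\perp(\Theta)\|_F^2$. Decomposing $\hat\Theta = \Theta + (\hat\Theta - \Theta)$ and noting that the cross term vanishes in expectation, the constraint term splits as $\sigma_X^2\|\Psi_{\mathcal G}^\perp(\Theta)\|_F^2 + \sigma_X^2\,\mathbb E\|\Psi_{\mathcal G}^\perp(\hat\Theta - \Theta)\|_F^2$. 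The two contributions combine to give the advertised bias $-\sigma_X^2\|\Psi_{\mathcal G}^\perp(\Theta)\|_F^2$ plus a pure variance term $\sigma_X^2\,\mathbb E\|\Psi_{\mathcal G}^\perp(\hat\Theta - \Theta)\|_F^2$ that still needs to be evaluated.

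The variance reduces to a standard Wishart computation. Conditioning on the design matrix $\mathcal X \in \mathbb R^{n \times Nd}$, $\mathrm{vec}(\hat\Theta - \Theta)$ is centered Gaussian with covariance $\sigma_\xi^2\, I_{Nk}\otimes(\mathcal X^\top \mathcal X)^{-1}$. Since $\mathcal X^\top \mathcal X$ is Wishart $W_{Nd}(n, \sigma_X^2 I)$ and $n > Nd + 1$, the inverse-Wishart mean gives $\mathbb E[(\mathcal X^\top \mathcal X)^{-1}] = \frac{1}{\sigma_X^2 (n - Nd - 1)} I$. Hence the unconditional covariance of $\mathrm{vec}(\hat\Theta - \Theta)$ is $\frac{\sigma_\xi^2}{\sigma_X^2 (n - Nd - 1)} I_{N^2 dk}$, so for any orthogonal projection $P$ on the matrix space,
$$
\mathbb E\|P(\hat\Theta - \Theta)\|_F^2 \;=\; \frac{\sigma_\xi^2}{\sigma_X^2 (n - Nd - 1)}\,\mathrm{rank}(P).
$$
Applying this to $P = \Psi_{\mathcal G}^\perp$ and using the character-theoretic identity $\dim\mathrm{Hom}_{\mathcal G}(\mathcal X, \mathcal Y) = (\chi_{\psi|_{\mathcal G}} \mid \chi_{\phi|_{\mathcal G}})$ (a standard corollary of Schur's lemma), one gets $\mathrm{rank}(\Psi_{\mathcal G}^\perp) = N^2 dk - (\chi_{\psi|_{\mathcal G}} \mid \chi_{\phi|_{\mathcal G}})$, and multiplying by the outer factor $\sigma_X^2$ yields exactly the variance expression in the theorem.

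\textbf{Main obstacle.} The computation is largely a bookkeeping exercise once two structural facts are in place: (i) $\Psi_{\mathcal G}$ is the Frobenius-orthogonal projection onto the $\mathcal G$-equivariant subspace of linear maps, which follows from self-adjointness and idempotency of intertwiner averaging with respect to the Haar measure, and (ii) the dimension of that subspace equals the character inner product. The one place requiring care is that the unconditional covariance of $\mathrm{vec}(\hat\Theta - \Theta)$ is genuinely a scalar multiple of the identity on $\mathbb R^{N^2 dk}$, so that applying an orthogonal projection yields its rank — this is where the Wishart hypothesis $n > Nd + 1$ enters and is the step most likely to trip up a direct computation. Nowhere in the argument do I use $\mathcal A_G$-equivariance of $\Theta$; it affects only the \emph{interpretation} of the bias term (vanishing when $\mathcal G \le \mathcal A_G$, generically nonzero when $\mathcal G > \mathcal A_G$), which is exactly the tradeoff the theorem is designed to expose.
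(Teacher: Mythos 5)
Your proposal is correct and follows essentially the same route as the paper: apply Lemma~\ref{lem:BV} to $f_{\hat\Theta}$, pull the $\mu$-inner product back to a scaled Frobenius inner product on parameter matrices (Proposition~11 of Elesedy--Zaidi), split $\hat\Theta=\Theta+(\hat\Theta-\Theta)$ and kill cross terms by unbiasedness, and evaluate the remaining variance via the inverse-Wishart mean and $\operatorname{rank}(\Psi_{\mathcal G}^\perp)=N^2dk-(\chi_{\psi|_{\mathcal G}}\mid\chi_{\phi|_{\mathcal G}})$. The one difference is cosmetic: the paper delegates the final Wishart/projection-rank step to Theorem~13 of \cite{elesedy2021provably}, whereas you carry it out directly; also, you assert that $\mathrm{vec}(\hat\Theta-\Theta)$ is conditionally Gaussian, which the hypotheses on $\xi$ do not give, but your argument uses only its conditional covariance, so nothing breaks.
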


The bias term depends on the anti-symmetric part of $\Theta$ with respect to $\mathcal{G}$, whereas the variance term captures the difference of the dimension between the space of linear maps $\R^{Nd} \to \R^{Nk}$ (measured by $N^2 dk$), and the space of $\mathcal{G}$-equivariant linear maps (measured by $\left(\chi_{\psi|_{\mathcal{G}}} \mid \chi_{\phi|_{\mathcal{G}}} \right)$; a proof can be found in \cite[Section 2.2]{reeder2014}). The bias term reduces to zero for $\mathcal{G} \le \mathcal{A}_G$ and we recover Theorem 1 in \cite{elesedy2021provably} when $\mathcal{G} = \mathcal{A}_G$. Notably, by using a larger group $\mathcal{G}$, the dimension of the equivariant space measured by $\left(\chi_{\psi|_{\mathcal{G}}} \mid \chi_{\phi|_{\mathcal{G}}} \right)$ is smaller, and thus the variance term increases; meanwhile, the bias term decreases due to extra (symmetry) constraints on the estimator. We remark that $\left(\chi_{\psi|_{\mathcal{G}}} \mid \chi_{\phi|_{\mathcal{G}}} \right)$ depends on $d, k$ as well: For $d=k=1$, $\phi(g) = \psi(g) \in \R^{N \times N}$ are standard permutation matrices; For general $d>1, k>1$, $\phi(g) \in \R^{Nd \times Nd}, \psi(g) \in \R^{Nk \times Nk}$ are block-diagonal matrices, where each block is a permutation matrix.

To understand the significance of the risk gap, we note that (see Appendix \ref{app:proofs_exact}) when $n > Nd +1$, the risk of the least square estimator is
\begin{equation}
 \mathbb{E}\left[\|Y- \hat{\Theta}^{\top} X \|_F^2\right]   = \sigma_{\xi}^2 \frac{Nd}{n - Nd - 1} + \sigma^2_{\xi}. \label{eqn:risk_ls}
\end{equation}
Thus, when $n$ is small enough so that the risk gap in Theorem \ref{thm:BV} is dominated by the variance term, enforcing equivariance gives substantial generalization gain of order $\frac{N^2 dk - \left(\chi_{\psi|_{\mathcal{G}}} \mid \chi_{\phi|_{\mathcal{G}}}\right) }{n - Nd - 1} $.

\begin{wrapfigure}[5]{r}{0.2\linewidth} 
    \centering    
\includegraphics[width=1.0\linewidth]{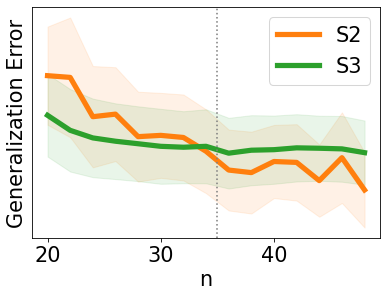}
\end{wrapfigure}%
\strut{}
\vspace*{-0.2cm} 
\begin{example} \label{eg_S3}
 In Appendix \ref{app:eg_31} we construct an example with $\mathcal{X} = \R^3, \mathcal{Y} = \R^3$, and $x \sim \mathcal{N}(0, \sigma_X^2 I_d)$. We consider a target linear function $f^*$ that is $\mathcal S_2$-equivariant and \emph{approximately} $\mathcal S_3$-equivariant, and compare the estimators $f_{\psi_{\mathcal S_2}(\hat \Theta)}$ versus $f_{\psi_{\mathcal S_3}(\hat \Theta)}$. When the number of training samples $n$ is small, using a $\mathcal{S}_3$-equivariant least square estimator yields better test error than the $\mathcal{S}_2$-equivariant one, as shown in figure inset. The dashed vertical line denotes the theoretical threshold $n^* \approx 35$, before which using $\mathcal{S}_{3}$ yields better generalization than $\mathcal{S}_{2}$. This is an example where enforcing more symmetries than the target (symmetry) results on better generalization properties. 
\end{example}%

\section{Generalization with Approximate Symmetries}\label{sec:approx}

Large graphs are typically asymmetric. Therefore, the assumption of $f^*$ being $\mathcal{A}_{G}$-equivariant in Section~\ref{sec:gen_exact} becomes trivial. Yet, graphon theory asserts that graphs can be seen as living in a ``continuous'' metric space, with the cut distance (Definition \ref{defn:graphon}). Since graphs are continuous entities, the combinatorial notion of exact  symmetry of graphs is not appropriate. We hence relax the notion of exact symmetry to a property that is ``continuous'' in the cut distance. The regularity lemma \cite[Theorem 5.1]{lovasz2007szemeredi} asserts that any large graph can be approximated by a coarser graph in the cut distance. Since smaller graphs tend to exhibit more symmetries than larger ones, we are motivated to consider the symmetries of coarse-grained versions of graphs as their approximate symmetries (Definition \ref{defn:coarsen}, \ref{defn:aut_coarsen}). We then present the notion of approximately equivariant mappings (Definition \ref{defn:approx_equi_map}), which allows us to precisely characterize the bias-variance tradeoff (Corollary \ref{cor:BV_approx}) in the approximate symmetry setting based on Lemma \ref{lem:BV}. Effectively, we reduce selecting the symmetry group to choosing the coarsened graph $G'$. 
Moreover, our symmetry model selection perspective allows for exploring different coarsening procedures and choosing the one that works best for the problem.

\begin{figure}
\begin{center}
\includegraphics[width=0.99\textwidth]{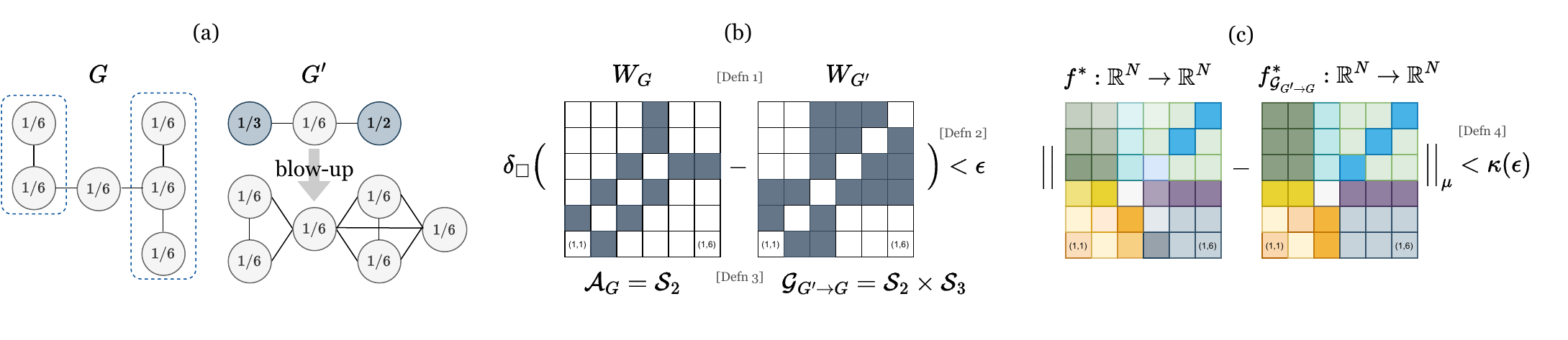}
\end{center}
\caption{Overview of Definitions for Approximate Symmetries: (a) Graph $G$ and its coarsened graph $G'$; (b) Induced graphons $\mathcal{W}_G, \mathcal{W}_{G'}$ with small cut distance $\delta_{\square}$ and their symmetries $\mathcal{A}_G = \SS, \mathcal{A}_{G'} = \{ e\}$. The induced symmetry group $\mathcal{G}_{G \to G'}$ yields more symmetries for $G$; (c) Approximate equivariant mapping $f^*$ that is close to $f^*_{\mathcal{G}_{G \to G'}}$.}
\label{fig:defn_overview}
\end{figure}

\begin{definition}[Induced graphon]\label{defn:graphon}
 Let $G$ be a (possibly edge weighted) graph with node set $[N]$ and adjacency matrix $A=\{a_{i,j}\in[0,1]\}_{i,j=1}^N$. Let 
 $\mathcal{P}_N=\{I_N^n= ( \frac{n-1}{N}, \frac{n}{N})\}_{n=1}^{N}$
 be the equipartition of $[0,1]$ to $N$ intervals (where formally the last interval is the closed interval $[\frac{N-1}{N},1]$). We define the graphon $W_G$ induced by $G$ as
 \[W_G(x,y) = \sum_{i,j=1}^N a_{i,j} \mathbbm{1}_{I_N^i}(x)\mathbbm{1}_{I_N^j}(y),\]
where $\mathbbm{1}_{I_N^i}$ is the indicator function of the set $I_N^i\subset[0,1]$, $i=1,\ldots,N$.
\end{definition}

We induce a graphon from an edge-node weighted graph $G' = ([M], A, b)$, with rational node weights $b_i$,  as follows. Let $N\in\mathbb{N}$ be a value such that the node weights can be written in the form
$b=\{b_m = \frac{q_m}{N} \}_{m=1}^M,$
for some $q_m\in\mathbb{N}_{\geq 0}$, $m=1,\ldots,M$. We \emph{blow-up} $G'$ 
 to an edge weighted graph $\overline{G}_N$ of $\sum q_m$ nodes by splitting each node $m$ of $G'$ into $q_m$ nodes $\{n_m^{j}\}_{j=1}^{q_m}$ of weight $1$, and defining the adjacency matrix $\overline{A}_N$ with entries
$ \overline{a}_{n_{m}^{j},n_{m'}^{j'}} =  a_{m,m'}.$
 Note that for any two choices $N_1, N_2 \in \mathbb{N}$ in the above construction, $\delta_{\square}(W_{\overline{G}_{N_1}},W_{\overline{G}_{N_2}})=0$, where the infimum in the definition of $\delta_{\square}$ is realized on some permutation of intervals in $[0,1]$, as explained below. We hence define 
 the induced graphon of the edge-node weighted graph $G'$ by $W_{G'}:=W_{\overline{G}_N}$ for some $N$, where, in fact, each choice of $N$ gives a representative of an equivalence class of piecewise constant graphons with zero $\delta_{\square}$ distance. 

\begin{definition}\label{defn:coarsen}
Let $G'$ be an edge-node weighted graph with $M$ nodes and node weights 
$b=\{b_m = \frac{q_m}{N'} \}_{m=1}^M,$
satisfying $\{q_m\}_m\subset\mathbb{N}_{\geq 0}$ and $\sum q_m=N$, and let $G$ be an edge weighted graph with $N$ nodes. We say that $G'$
\emph{coarsens}  $G$ up to error $\epsilon$ if 
\[\delta_{\square}(W_G,W_{G'}) < \epsilon.\]   
\end{definition}

Suppose that $G'$ coarsens $G$. This implies the existence of an assignment of nodes of $G$ to nodes of $G'$. Namely, the supremum underlying the definition of $\delta_{\square}$ over the measure preserving bijections $\phi:[0,1]\rightarrow[0,1]$ is realized by a permutation $\Pi:[N]\rightarrow[N]$ applied on the intervals $\{I_N^i\}_{i=1}^N$. With this permutation, the assignment $C_{G\rightarrow G'}$ defined by $[N]\ni\Pi(n_{m}^j)\mapsto m\in [M]$, for every $m=1,\ldots,M$ and $j=1,\ldots,q_m$, is called a \emph{cluster assignment} of $G$ to $G'$.

Let $G'$, with $M$ nodes, be a edge-node weighted graph that coarsens the simple graph $G$ of $N$ nodes. Let $C_{G\rightarrow G'}$ be a cluster assignment of $G$ to $G'$. Let $\mathcal{A}_{G'}$ be the automorphism group of $G'$. Note that two nodes can belong to the same orbit of $\mathcal{A}_{G'}$ only if they have the same node weight. Hence, for every two nodes $m,m'$ in the same orbit of $\mathcal{A}_{G'}$ (i.e., equivalent clusters), there exists a bijection $\psi_{m,m'}:C_{G\rightarrow G'}^{-1}\{m\}\rightarrow C_{G\rightarrow G'}^{-1}\{m'\}$. We choose a set of such bijections, such that for every $m,m'$ on the same orbit of  $\mathcal{A}_{G'}$ 
\[\psi_{m,m'} = \psi_{m',m}^{-1}.\]  
We moreover choose $\psi_{m,m}$ as the identity mapping for each $m=1,\ldots,M$. We identify each element $g$ of $\mathcal{A}_{G'}$ with an element $\overline{g}$ in a permutation group $\overline{\mathcal{A}}_{G'}$ of $[N]$, called the \emph{blown-up symmetry group}, as follows. Given $g\in \mathcal{A}_{G'}$, for every  $m\in [M]$ and every $n\in C_{G\rightarrow G'}^{-1}\{m\}$, define
    \[\overline{g}n \coloneqq \psi_{m,gm}n. \]

\begin{definition}\label{defn:aut_coarsen}
        Let $G'$, with $M$ nodes, be a edge-node weighted graph that coarsen the (simple or weighted) graph $G$ of $N$ nodes. Let $C_{G\rightarrow G'}$ be a cluster assignment of $G$ to $G'$ with cluster sizes $c_1, \ldots, c_M$. Let $\mathcal{A}_{G'}$ be the automorphism group of $G'$, and $\overline{\mathcal{A}}_{G'}$ be the blown-up symmetry group of $[N]$. 
        For every $m=1,\ldots M$, let $\mathcal{S}_{c_m}$ be the symmetry group of $C_{G\rightarrow G'}^{-1}\{m\}$. We call the group of permutations
    \[\mathcal{G}_{G\rightarrow G'}:= \Big( \mathcal{S}_{c_1} \times \mathcal{S}_{c_2} \ldots \times \mathcal{S}_{c_M}\Big) \rtimes \overline{\mathcal{A}}_{G'} \subseteq \mathcal{S}_{N}\]
    the \emph{symmetry group of $G$ induced by the coarsening} $G'$. We call any element of $\mathcal{G}_{G\rightarrow G'}$ an \emph{approximate symmetry} of $G$.
    \end{definition}
Specifically, every element $s\in \mathcal{G}_{G\rightarrow G'}$ can be written in coordinates by
    \[s= s_1s_2\ldots s_M a\sim (s_1,\ldots,s_M,a),\]
    for a unique choice of  $s_j\in\mathcal{S}_{c_j}$, $j=1,\ldots, M$,  and $a\in \overline{\mathcal{A}}_{G'}$ (see Appendix~\ref{app:proofs_approx} for details).  
    
In words, we consider the symmetry of the graph $G$ of $N$ nodes not by its own automorphism group, but via the symmetry of its coarsened graph $G'$. The nodes in the same orbit in $G$ are either in the same cluster of $G'$ (i.e., from the same coarsened node), or in the equivalent clusters of $G'$ (i.e., they belong to the same orbit of the coarsened graph and share the same cluster size). See Figure \ref{fig:defn_overview} (and Figure \ref{fig:human_coarsen} in Appendix \ref{app:equiv_map_coarsen}) for examples.

\begin{definition} \label{defn:approx_equi_map}
    Let $G$ be a graph with $N$ nodes, and $\mathcal{X} = \R^{N\times d}$ be the space of graph signals. Let $X \sim \mu$ where $\mu$ is a $\mathcal{S}_{N}$-invariant measure on $\R^{N\times d}$. We call $f^*:\R^{N \times d} \rightarrow \R^{N \times k}$ an \emph{approximately equivariant mapping} if there exists a function $\kappa:\mathbb{R}_+\rightarrow\mathbb{R}_+$ satisfying $\lim_{\epsilon\rightarrow 0}\kappa(\epsilon)=0$ (called the \emph{equivariance rate}), such that for every $\epsilon>0$ and every edge-node weigted graph $G'$ that coarsen $G$ up to error $\epsilon$,
    \[\|f^* -f^*_{\mathcal{G}_{G\rightarrow G'}}\|_{\mu} \leq \kappa(\epsilon),\]
    where $f^*_{\mathcal{G}_{G\rightarrow G'}} = \mathcal{Q}_{\mathcal{G}_{G\rightarrow G'}} (f)$ is the intertwining projection of $f^*$ with respect to $\mathcal{G}_{G\rightarrow G'}$.
\end{definition}

\begin{example} 
Here we give an example of an approximately equivariant mapping in a natural setting. Suppose that $G$ is a random geometric graph, namely, a graph that was sampled from a metric-probability space $\mathcal{M}$ by randomly and independently sampling nodes $\{x_n\}_{n=1}^N\subset \mathcal{M}$. Hence, $G$ can be viewed as a discretization of $\mathcal{M}$, and the graphs that coarsen $G$ can be seen as coarser discretizations of $\mathcal{M}$. Therefore, the approximate symmetries are \emph{local deformations} of $G$, namely, permutations that swap nodes if they are close in the metric of $\mathcal{M}$. This now gives an interpretation for approximately equivariance mappings of geometric graphs: these are mappings that are stable to local deformations. In Appendix \ref{app.geometric_graph} we develop this example in detail. 
\end{example}

We are ready to present the bias-variance tradeoff in the setting with approximate symmetries. The proofs are deferred to Appendix \ref{app:proofs_approx}.

\begin{restatable}[Risk Gap via Graph Coarsening]{corollary}{CorThree}
  \label{cor:BV_approx}
Let $\mathcal{X} = \R^{N \times d}, \mathcal{Y} = \mathbb{R}^{N \times k} $ be the input and output graph signal spaces on a fixed graph $G$. Let $X \sim \mu$ where $\mu$ is a $\mathcal{S}_{N}$-invariant distribution on $\mathcal{X}$. Let $Y=f^*(X)+\xi$, where $\xi \in \mathbb{R}^{N \times k}$ is random, independent of $X$ with zero mean and finite variance, and $f^*: \R^{N \times d} \to \R^{N \times k}$ be an approximately equivariant mapping with equivariance rate $\kappa$. Then, for any $G'$ that coarsens $G$ up to error $\epsilon$, for any $f \in V$, we have $$
\Delta(f, \bar{f}_{\mathcal{G}_{G \to G'}}) = \underbrace{ -2 \langle f^*,f^{\perp}_{\mathcal{G}_{G \to G'}}\rangle_{\mu}}_{\text{mismatch}} + \underbrace{\left\|f^{\perp}_{\mathcal{G}_{G \to G'}} \right\|_\mu^2}_{\text{constraint}} \ge  - 2 \kappa(\epsilon) \left\|f^{\perp}_{\mathcal{G}_{G \to G'}} \right\|_\mu +  \left\|f^{\perp}_{\mathcal{G}_{G \to G'}} \right\|_\mu^2
$$
\end{restatable}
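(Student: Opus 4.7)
The plan is to derive the equality directly from Lemma~\ref{lem:BV} applied with $\mathcal{G} = \mathcal{G}_{G \to G'}$, which is a subgroup of $\mathcal{S}_N$ by Definition~\ref{defn:aut_coarsen} and hence lies in the scope of the lemma. The entire content of the corollary therefore reduces to bounding the mismatch term $-2\langle f^*, f^{\perp}_{\mathcal{G}_{G\to G'}}\rangle_\mu$ from below using the approximate equivariance hypothesis.

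The key step I would use is the orthogonality of the intertwining projection. Writing $\mathcal{G} = \mathcal{G}_{G\to G'}$ for brevity, I would first observe that the operator $\mathcal{Q}_{\mathcal{G}}$ defined in \eqref{eqn:intw_proj} is the orthogonal projection (in the $\mu$-inner product on $V$) onto the subspace of $\mathcal{G}$-equivariant maps. This uses that $\mu$ is $\mathcal{S}_N$-invariant, hence $\mathcal{G}$-invariant, together with the $\mathcal{G}$-invariance of the inner product on $\mathcal{Y}$ and the left-invariance of the Haar measure $\lambda$; a short Fubini/substitution computation confirms that $\mathcal{Q}_{\mathcal{G}}$ is both self-adjoint and idempotent. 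By construction $f^*_{\mathcal{G}} = \mathcal{Q}_{\mathcal{G}} f^*$ lies in the equivariant subspace while $f^{\perp}_{\mathcal{G}_{G\to G'}} = f - \bar f_{\mathcal{G}}$ lies in its orthogonal complement, so $\langle f^*_{\mathcal{G}}, f^{\perp}_{\mathcal{G}}\rangle_\mu = 0$.

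Given this, I would rewrite the mismatch as
\[\langle f^*, f^{\perp}_{\mathcal{G}}\rangle_\mu \;=\; \langle f^* - f^*_{\mathcal{G}},\, f^{\perp}_{\mathcal{G}}\rangle_\mu,\]
and then apply Cauchy--Schwarz followed by Definition~\ref{defn:approx_equi_map}:
\[\bigl|\langle f^* - f^*_{\mathcal{G}},\, f^{\perp}_{\mathcal{G}}\rangle_\mu\bigr| \;\le\; \|f^* - f^*_{\mathcal{G}}\|_\mu \, \|f^{\perp}_{\mathcal{G}}\|_\mu \;\le\; \kappa(\epsilon)\,\|f^{\perp}_{\mathcal{G}}\|_\mu.\]
Multiplying by $-2$ flips the inequality, and substituting into the equality from Lemma~\ref{lem:BV} yields precisely the claimed lower bound on $\Delta(f,\bar f_{\mathcal{G}_{G\to G'}})$.

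There is no serious obstacle here; the only item worth verifying carefully is the orthogonal-projection property of $\mathcal{Q}_{\mathcal{G}}$, which is a standard consequence of the $\mathcal{G}$-invariance assumptions on $\mu$ and on $\langle \cdot, \cdot \rangle$ that are already imposed in Section~\ref{sec:set-up}. Everything else is a direct application of Lemma~\ref{lem:BV}, the Pythagorean decomposition, and Cauchy--Schwarz.
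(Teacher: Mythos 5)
Your proof is correct and follows essentially the same route as the paper's: invoke Lemma~\ref{lem:BV} with $\mathcal{G}=\mathcal{G}_{G\to G'}$, use the orthogonality $\langle f^*_{\mathcal{G}_{G\to G'}}, f^{\perp}_{\mathcal{G}_{G\to G'}}\rangle_\mu=0$ to rewrite the mismatch term as $\langle f^*-f^*_{\mathcal{G}_{G\to G'}}, f^{\perp}_{\mathcal{G}_{G\to G'}}\rangle_\mu$, and then apply Cauchy--Schwarz together with Definition~\ref{defn:approx_equi_map}. The only cosmetic difference is that you spell out why $\mathcal{Q}_{\mathcal{G}}$ is an orthogonal projection, whereas the paper appeals to the orthogonal decomposition of Lemma 1 in \cite{elesedy2021provably} (already cited in the proof of Lemma~\ref{lem:BV}) for the same fact.
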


Notably, Corollary \ref{cor:BV_approx} illustrates the tradeoff explicitly in the form of choosing the coarsened graph $G'$: If we choose $G'$ close to $G$ such that the coarsening error $\epsilon$ and $\| f^{\perp}_{\mathcal{G}_{G \to G'}}\|_{\mu}$ are small, then the mismatch term is close to zero; meanwhile the constraint term is also small since $\mathcal{A}_{\mathcal{G}}$ is typically trivial. On the other hand, if we choose $G'$ far from $G$ that yields large coarsening error $\epsilon$, then the constraint term $\| f^{\perp}_{\mathcal{G}_{G \to G'}}\|^2_{\mu}$ also increases.

\begin{restatable}[Bias-Variance-Tradeoff via Graph Coarsening]{corollary}{CorFour} \label{cor:BV_coarsen}
 Consider the same linear regression setting in Theorem \ref{thm:BV}, except now $f^*$ is an approximately equivariant mapping with equivariance rate $\kappa$, and $\mathcal{G} = \mathcal{G}_{G \to G'}$ is controlled by $G'$ that coarsens $G$ up to error $\epsilon$. Denote the canonical permutation representations of $\mathcal{G}_{G \to G'}$ on $\mathcal{X}, \mathcal{Y}$ as $\phi', \psi'$, respectively. Let $\left(\chi_{\psi'} \mid \chi_{\phi'} \right)=\int_{\mathcal{G}_{G \to G'}} \chi_{\psi'}(g) \chi_{\phi'}(g) \mathrm{d} \lambda(g)$ denote the inner product of the characters. If $n > Nd + 1$ the risk gap is bounded by
 $$
\mathbb{E}\left[\Delta\left(f_{\hat{\Theta}}, f_{\Psi_{\mathcal{G}_{G \to G'}}(\hat{\Theta})}\right)\right] \geq  - 2 \kappa(\epsilon) \sqrt{\sigma_{\xi}^2 \frac{N^2 dk -\left(\chi_{\psi'} \mid \chi_{\psi'}\right)}{n-Nd-1}} +  \sigma_{\xi}^2 \frac{N^2 dk -\left(\chi_{\psi'} \mid \chi_{\psi'}\right)}{n-Nd-1}.
$$
\end{restatable}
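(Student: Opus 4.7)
My plan is to combine Corollary~\ref{cor:BV_approx}, applied pointwise in the training data to the random estimator $f_{\hat\Theta}$, with the linear-regression variance computation already carried out in the proof of Theorem~\ref{thm:BV}. Writing $\mathcal{G}:=\mathcal{G}_{G\to G'}$ and noting that the residual $f^{\perp}_{\mathcal G}(x)=\Psi^{\perp}_{\mathcal G}(\hat\Theta)^{\!\top} x$ is linear in $x$, Corollary~\ref{cor:BV_approx} gives the almost-sure bound
\[
\Delta\bigl(f_{\hat\Theta},\,f_{\Psi_{\mathcal G}(\hat\Theta)}\bigr)\;\ge\;-2\kappa(\epsilon)\,\|f^{\perp}_{\mathcal G}\|_\mu+\|f^{\perp}_{\mathcal G}\|_\mu^{2}.
\]
I then take expectation over the training sample and use Jensen's inequality on the concave square root, together with the sign $-2\kappa(\epsilon)\le0$, to obtain
\[
\mathbb{E}[\Delta]\;\ge\;-2\kappa(\epsilon)\sqrt{\mathbb{E}\|f^{\perp}_{\mathcal G}\|_\mu^{2}}+\mathbb{E}\|f^{\perp}_{\mathcal G}\|_\mu^{2}.
\]

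The next step is to evaluate $\mathbb{E}\|f^{\perp}_{\mathcal G}\|_\mu^{2}$. Because the entries of $X$ are i.i.d.\ $\mathcal{N}(0,\sigma_X^{2})$, the inner product on mappings is isotropic and $\mathbb{E}_X\|\Psi^{\perp}_{\mathcal G}(\hat\Theta)^{\!\top} X\|_F^{2}=\sigma_X^{2}\|\Psi^{\perp}_{\mathcal G}(\hat\Theta)\|_F^{2}$. Decomposing $\hat\Theta=\Theta+(\hat\Theta-\Theta)$, using unbiasedness $\mathbb{E}\hat\Theta=\Theta$ of least squares to kill the cross term, and applying linearity of $\Psi^{\perp}_{\mathcal G}$, I get
\[
\mathbb{E}\|f^{\perp}_{\mathcal G}\|_\mu^{2}=\sigma_X^{2}\|\Psi^{\perp}_{\mathcal G}(\Theta)\|_F^{2}+\sigma_X^{2}\,\mathbb{E}\|\Psi^{\perp}_{\mathcal G}(\hat\Theta-\Theta)\|_F^{2}.
\]
The second summand is precisely the quantity computed inside Theorem~\ref{thm:BV}: expanding $\Psi^{\perp}_{\mathcal G}$ via Schur orthogonality for the restricted representations $\phi|_{\mathcal G},\psi|_{\mathcal G}$ and using the inverse-Wishart moment $\mathbb{E}[(X^{\!\top}X)^{-1}]=I/(n-Nd-1)$ identifies it with $W:=\sigma_\xi^{2}(N^{2}dk-(\chi_{\psi'}\mid\chi_{\phi'}))/(n-Nd-1)$. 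The first summand equals $\|f^{*,\perp}_{\mathcal G}\|_\mu^{2}$, which by Definition~\ref{defn:approx_equi_map} is at most $\kappa(\epsilon)^{2}$.

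\textbf{Main obstacle.} The step I expect to be delicate is passing from the exact expression $V:=\mathbb{E}\|f^{\perp}_{\mathcal G}\|_\mu^{2}=\sigma_X^{2}\|\Psi^{\perp}_{\mathcal G}(\Theta)\|_F^{2}+W$ to the claimed right-hand side $-2\kappa(\epsilon)\sqrt W+W$, because the map $h(V)=V-2\kappa\sqrt V$ is \emph{not} monotone on $[0,\kappa^{2}]$, so $V\ge W$ does not automatically give $h(V)\ge h(W)$. I plan to handle this by either (i) restricting to the informative regime $W\ge\kappa(\epsilon)^{2}$, where $h$ is increasing on $[W,\infty)$ and $V\ge W$ directly yields $h(V)\ge h(W)$; or (ii) absorbing the bounded slack $\sigma_X^{2}\|\Psi^{\perp}_{\mathcal G}(\Theta)\|_F^{2}\le\kappa(\epsilon)^{2}$ into the equivariance rate $\kappa$ at the cost of an absolute constant. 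Apart from this monotonicity subtlety, the argument is a direct concatenation of Corollary~\ref{cor:BV_approx}, Jensen, and the representation-theoretic variance calculation inherited from Theorem~\ref{thm:BV}.
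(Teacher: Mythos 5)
Your route tracks the paper's own one-line proof verbatim (the paper literally says the corollary "follows immediately from applying Theorem 13 in [Elesedy--Zaidi] to Corollary~\ref{cor:BV_approx} with $\mathcal G=\mathcal G_{G\to G'}$"), and the obstacle you flag is a genuine gap in that argument, not a deficiency in your approach. There is, moreover, a cleaner way to see what is going on: the bias--variance identity proved inside Theorem~\ref{thm:BV}, namely
\[
\mathbb{E}\bigl[\Delta(f_{\hat\Theta},f_{\Psi_{\mathcal G}(\hat\Theta)})\bigr]
= -\sigma_X^2\,\|\Psi^{\perp}_{\mathcal G}(\Theta)\|_F^2 + W,
\qquad
W:=\sigma_\xi^2\,\frac{N^2dk-(\chi_{\psi'}\mid\chi_{\phi'})}{n-Nd-1},
\]
holds for \emph{any} linear target $\Theta$; the $\mathcal A_G$-equivariance hypothesis is used only afterwards, to conclude that the bias term vanishes when $\mathcal G\le\mathcal A_G$. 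Writing $B:=\sigma_X^2\|\Psi^{\perp}_{\mathcal G}(\Theta)\|_F^2=\|f^*-f^*_{\mathcal G}\|_\mu^2$, Definition~\ref{defn:approx_equi_map} gives $B\le\kappa(\epsilon)^2$, hence $\mathbb{E}[\Delta]=W-B\ge W-\kappa(\epsilon)^2$. That is the sharp, unconditional bound this argument supports.

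The inequality as stated, $\mathbb{E}[\Delta]\ge W-2\kappa(\epsilon)\sqrt{W}$, does \emph{not} follow without an extra hypothesis. The two candidate lower bounds satisfy $W-\kappa^2\ge W-2\kappa\sqrt W$ iff $W\ge\kappa^2/4$, and in the complementary regime the stated bound can actually fail: take $B=\kappa(\epsilon)^2$ and $W<\kappa(\epsilon)^2/4$, so $\mathbb{E}[\Delta]=W-\kappa^2<W-2\kappa\sqrt W$. Your Jensen route yields $\mathbb{E}[\Delta]\ge(B+W)-2\kappa\sqrt{B+W}$, which is correct but weaker than the exact identity, and, as you point out, $x\mapsto x-2\kappa\sqrt x$ is decreasing on $[0,\kappa^2]$, so replacing $B+W$ by $W$ is illegitimate precisely in the regime where it matters. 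Both of your suggested repairs (restricting to $W\ge\kappa(\epsilon)^2$, or paying an additive $O(\kappa^2)$ penalty) are reasonable, but neither reproduces the displayed inequality exactly; the cleanest fix is simply to state the bound as $\mathbb{E}[\Delta]\ge W-\kappa(\epsilon)^2$. Your candor about the monotonicity issue is warranted — the paper's one-sentence proof does not address it, and, separately, note the typo $(\chi_{\psi'}\mid\chi_{\psi'})$ in the display where $(\chi_{\psi'}\mid\chi_{\phi'})$ is intended.
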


\section{Experiments} \label{sec:experiments}

We illustrate our theory in three real-world tasks for learning on a fixed graph: image inpainting, traffic flow prediction, and human pose estimation\footnote{The source code is available at \url{https://github.com/nhuang37/Approx_Equivariant_Graph_Nets}.}. For image inpainting, we demonstrate the bias-variance tradeoff via graph coarsening using \gnet{}; For the other two applications, we show that the tradeoff not only emerges from \gnet{} that enforces strict equivariance, but also \gnet{} augmented with symmetry-breaking modules, allowing us to \emph{recover} standard GNN architectures as a special case. Concretely, we consider the following variants (more details in Appendix~\ref{app.gnet_variants}):
\noindent
\begin{enumerate}[noitemsep]
    \item \gnet{} with strict equivariance using equivariant linear map $f_{\mathcal{G}}$.
    \item \gnet{} augmented with graph convolution $A f_{\mathcal{G}} (x)$, denoted as \gnet{}(gc).
    \item \gnet{} augmented with graph convolution and learnable edge weights: \gnet{}(gc+ew).
    \item \gnet{} augmented with graph locality constraints $( A \odot f_{\mathcal{G}}) (x)$  and learnable edge weights, denoted as \gnet{}(pt+ew).
\end{enumerate}

\gnet{} serves as a baseline to validate our generalization analysis for equivariant estimators (see Section~\ref{sec:gen_exact}). Yet in practice, we observe that augmenting \gnet{} with symmetry breaking can further improve performance, thereby justifying the analysis for approximate symmetries (see Section~\ref{sec:approx}). In particular, \gnet{}(gc) and \gnet{}(gc+ew) are motivated by graph convolutions used in standard GNNs; 
\gnet{}(pt+ew) is inspired by the concept of locality in CNNs, where $A \odot f_{\mathcal{G}}$ effectively restricts the receptive field to the $1$-hop neighrborhood induced by the graph $A$. 

\subsection{Application: Image Inpainting}\label{sec:image_inpaint}

We consider a $28 \times 28$ grid graph as the fixed domain, with grey-scale images as the graph signals. The learning task is to reconstruct the original images given \emph{masked} images as inputs (i.e., image inpainting). We use subsets of MNIST \cite{lecun1998mnist} and FashionMNIST \cite{xiao2017fashion}, each comprising 100 training samples and 1000 test samples. The input and output graph signals are $(m_i \odot x_i, x_i)$, where $x_i \in \R^{28 \times 28} \equiv \R^{784}$ denotes the image signals and $m_i$ denotes a random mask (size $14 \times 14$ for MNIST and ${20 \times 20}$ for FashionMNIST). We investigate the symmetry model selection problem by clustering the grid into $M$ patches with size $d \times d$, where $d \in \{28, 14, 7, 4, 2, 1\}$. Here $d = 28$ means one cluster (with $\mathcal{S}_N$ symmetry); $d = 1$ is $784$ singleton clusters with no symmetry (trivial). 

We consider $\mathcal{G}_{G\rightarrow G'}$-equivariant networks \gnet{} with $\texttt{ReLU}$ nonlinearity. We parameterize the equivariant linear layer $f: \R^N \to \R^N$ with respect to $\mathcal{G}_{G\rightarrow G'} = \left( \mathcal{S}_{c_1} \times \ldots \times \mathcal{S}_{c_M} \right) \rtimes \overline{\mathcal{A}}_{G'}$ using the following block-matrix form (assuming the nodes are ordered by their cluster assignment), with $f_{kl}$ denoting block matrices, and $a_k, b_k, e_{kl}$ representing scalars:
\begin{equation}
    f = \begin{bmatrix}
        f_{11} &  \cdots & f_{1M}\\
       
        &   \cdots   &\\
        f_{M1} &  \cdots & f_{MM}
        \end{bmatrix}, \, f_{kk} = a_k I + b_k \mathds{1}\mathds{1}^{\top}, \, f_{kl} = e_{kl} \mathds{1}\mathds{1}^{\top} \text{ for } k \neq l.   \label{eqn:f_linear_equiv_block_main}  
\end{equation}

\begin{wrapfigure}[7]{r}{0.3\textwidth}
\centering
\includegraphics[width=0.28\textwidth]{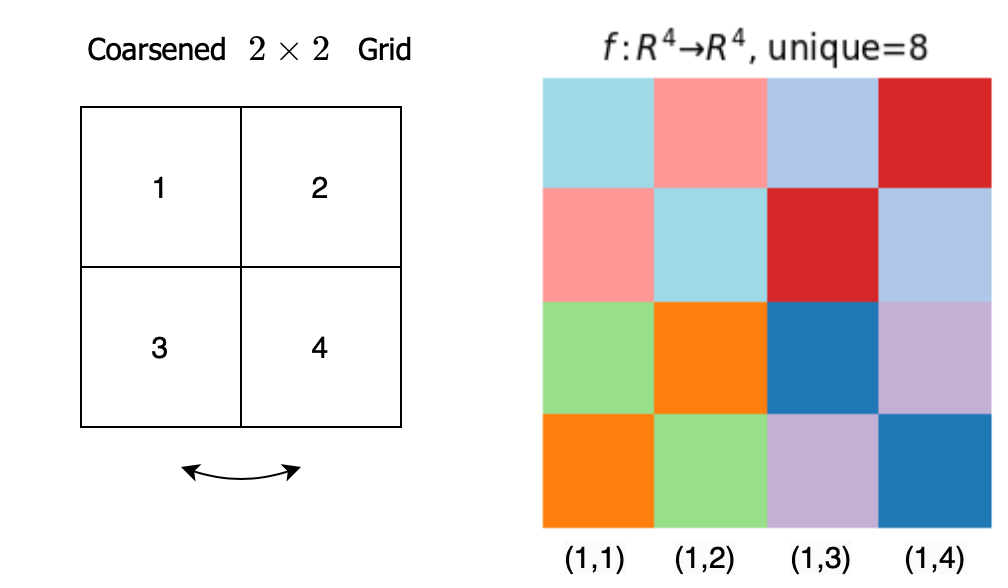} 
\end{wrapfigure} 
The coarsened graph symmetry $\mathcal{A}_{G'}$ induces constraints on $a_k, b_k, e_{kl}$. If $\mathcal{A}_{G'}$ is trivial, then these scalars are unconstrained. In the experiment, we consider a reflection symmetry on the coarsened grid graph, i.e., $\mathcal{A}_{G'} = \mathcal{S}_2$ which acts by reflecting the left (coarsened) patches to the right (coarsened) patches. Suppose the reflected patch pairs are ordered consecutively, then $a_k = a_{k+1}, b_k = b_{k+1}$ for $k \in \{1, 3, \ldots, M-1 \}$, and $e_{kl} = e_{k+1, l-1}$ for $k \in \{1, 3, \ldots, M-1 \}, l \in \{2, 4, \ldots, M\}$ (see Figure inset for an illustration). In practice, we extend the parameterization to $f: \R^{N \times d} \to \R^{N \times k}$. More details can be found in Appendix~\ref{app.image}.

Figure~\ref{fig:bv_tradeoff_clean} shows the empirical risk first decreases and then increases as the group decreases, illustrating the bias-variance tradeoff from our theory. 
Figure~\ref{fig:bv_tradeoff_clean} (left) compares a $2$-layer \gnet{} with a $1$-layer linear \gnet{}, demonstrating that the tradeoff occurs in both linear and nonlinear models. Figure~\ref{fig:bv_tradeoff_clean}  (right) shows that using reflection symmetry of the coarsened graph outperforms the trivial symmetry baseline, highlighting the utility of modeling coarsened graph symmetries. 

\begin{figure}[htb!]
      \begin{subfigure}[b]{0.63\textwidth}
    \includegraphics[width=\textwidth]{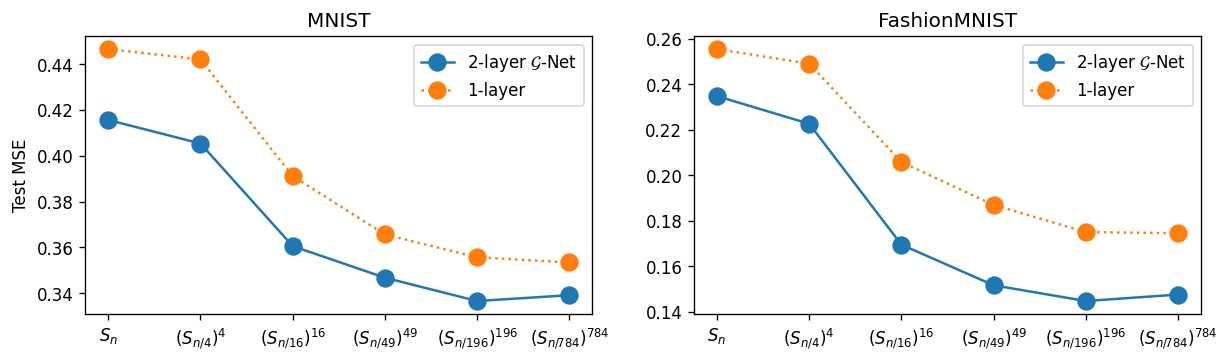}
     \end{subfigure}
     \hfill
     \begin{subfigure}[b]{0.34\textwidth}
     \includegraphics[width=\textwidth]{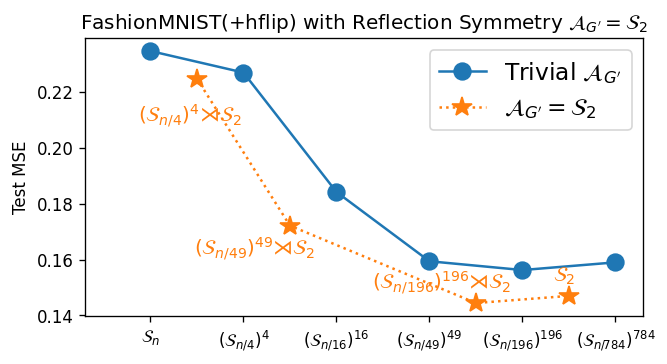}
     \end{subfigure}
 
        \caption{Bias-variance tradeoff via graph coarsening. Left:$2$-layer \gnet{}(blue) and $1$-layer linear $\mathcal{G}$-equivariant functions (orange), assuming the coarsened graph is asymmetric; Right: $2$-layer \gnet{} with both trivial and non-trivial coarsened graph symmetry. See Table~\ref{tab:exp_inpainting} for more numerical details.} \label{fig:bv_tradeoff_clean}

        \centering
\end{figure}

\subsection{Application: Traffic Flow Prediction}\label{subsec:app_cluster}

The traffic flow prediction problem can be formulated as follows: Let $X^{(t)} \in \R^{n \times d}$ represent the traffic graph signal (e.g., speed or volume) observed at time $t$. The goal is to learn a function $h(\cdot)$ that maps $T'$ historical traffic signals to $T$ future graph signals, assuming the fixed graph domain $G$:
$$
\left[X^{\left(t-T^{\prime}+1\right)}, \ldots, X^{(t)} ; G\right] \stackrel{h(\cdot)}{\longrightarrow}\left[X^{(t+1)}, \ldots, X^{(t+T)}\right].
$$

\begin{wrapfigure}[6]{r}{0.22\textwidth}
  \centering
\includegraphics[width=0.19\textwidth]{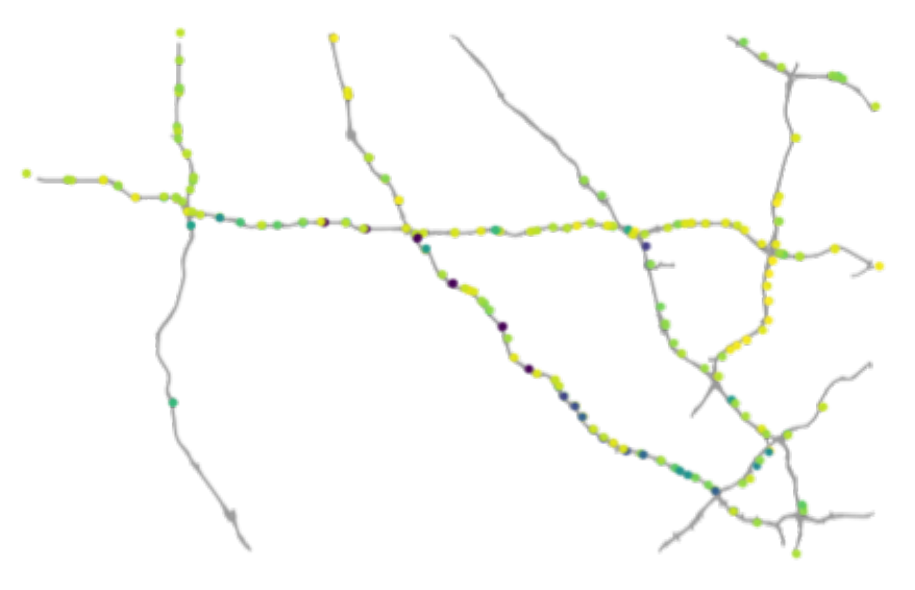} 
\label{fig:traffic_equiv_map}
\end{wrapfigure} 
The traffic graph is typically large and asymmetric. Therefore we leverage our approximate symmetries to study the symmetry model selection problem (Section~\ref{sec:approx}). We use the METR-LA dataset which represents traffic volume of highways in Los Angeles (see figure inset). We use both the original graph $G$ from \cite{li2017diffusion}, and its sparsified version $G_s$ which is more faithful to the road geometry. In $G_s$, the $(i,j)$-edge exists if and only if nodes $i,j$ lie on the same highway. We first validate our Definition \ref{defn:approx_equi_map} in such dataset (see Appendix~\ref{app:traffic_assumption}). 
In terms of the models, we consider a simplified version of DCRNN model proposed in \cite{li2017diffusion}, where the spatial module is modelled by a standard GNN, and the temporal module is modelled via an encoder-decoder architecture for sequence-to-sequence modelling. We follow the same temporal module and extending its GNN module using \gnet{}(gc), which recovers DCRNN when choosing $\mathcal{S}_N$. We then consider different choices of the coarsened graph $G'$ (to induce approximate symmetries). As shown in Table \ref{tab:exp_la}, using $2$ clusters as approximate symmetries yields better generalization error than using $9$ clusters, or the full permutation group $\mathcal{S}_N$.

\begin{figure}
\parbox{.45\linewidth}{
\small
\centering
\resizebox{0.45\textwidth}{!}{ 	\renewcommand{\arraystretch}{1.05}
\begin{tabular}{lccc}
\toprule
\gnet{}(gc)	& $\mathcal{S}_{N} $ & $\mathcal{S}_{c_1}\times \mathcal{S}_{c_2}$ &	$\mathcal{S}_{c_1}\times \ldots \times \mathcal{S}_{c_9}$		 \\	
 \midrule
Graph $G_s$ & $ 3.173 \pm 0.013$ & \bm{$3.150 \pm  0.008$} & $3.20 4\pm 0.006$  \\
Graph $G$ & $3.106 \pm 0.013$ & $\bm{3.092 \pm 0.008}$ & $3.174 \pm 0.013$
 \\
\bottomrule
\end{tabular}
}
\renewcommand{\figurename}{Table}
\caption{Traffic forecasting with different choices of graph clustering. Table shows Mean Absolute Error (MAE) across $3$ runs.} 
\label{tab:exp_la}
}
\hfill
\parbox{.5\linewidth}{
\small
\centering
\resizebox{0.5\textwidth}{!}{ 	\renewcommand{\arraystretch}{1.05}
\begin{tabular}{lcccc}
\toprule
\gnet{}(gc+ew)	& $\mathcal{S}_{16} $ & 	Relax-$\mathcal{S}_{16} $	&	$\mathcal{A}_G = (\SS)^2$	&	Trivial\\	
 \midrule
MPJPE $\downarrow$ & $42.55 \pm 0.88$ &  \bm{$39.87 \pm 0.46$} & $42.18 \pm 0.49$ & $41.60 \pm 0.32$\\
P-MPJPE $\downarrow$ & $34.48 \pm 0.44$ &  \bm{$31.38 \pm 0.14$} & $32.08 \pm 0.20$ & $31.69 \pm 0.17$
 \\
\bottomrule
\end{tabular}
}
\renewcommand{\figurename}{Table}
\caption{Human pose estimation with different choices of symmetries. Table shows mean per-joint position error (MPJPE) and MPJPE after alignment (P-MPJPE) across $3$ runs. \label{tab:exp_human_main}}
}
\end{figure}

\subsection{Application: Human Pose Estimation} \label{sec:app_human}

\begin{wrapfigure}[6]{R}{0.1\textwidth}
  \centering
\includegraphics[width=0.08\textwidth]{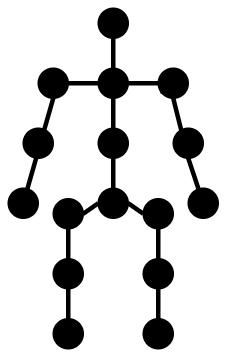} 
  \label{fig:hpe}
\end{wrapfigure}
We consider the simple (loopy) graph $G$ with adjacency matrix $A \in \{0,1 \}^{16 \times 16}$ representing $16$ human joints, illustrated on the right. The input graph signals $X \in \R^{16 \times 2}$ describe the joint 2D spatial location. The goal is to learn a map to reconstruct the 3D pose $ Y \in \R^{16 \times 3}$. We use the standard benchmark dataset Human3.6M \cite{ionescu2013human3} and follow the evaluation protocol in \cite{zhao2019semantic}. The generalization performance is measured by mean per joint position error (MPJPE) and MPJPE after aligntment (P-MPJPE).
 We implement a $4$-Layer \gnet{}(gc+ew), which recovers the same model architecture in SemGCN \cite{zhao2019semantic} when choosing $\mathcal{G} = \mathcal{S}_{16}$ (further details are provided in Appendix~\ref{app:equiv_graph_net}). 

We consider the following symmetry groups: the permutation group $\mathcal{S}_{16}$, the graph automorphism $\mathcal{A}_G$, and the trivial symmetry. Note that the human skeleton graph has $\mathcal{A}_G = (\SS)^2$ (corresponding to the arm flip and leg flip). Additionally, we investigate an approximate symmetry called Relax-$\mathcal{S}_{16}$; This relaxes the global weight sharing in $\mathcal{S}_{16}$ (that only learns $2$ scalars per equivariant linear layer $f_{\mathcal{S}_{16}}: \R^N \to \R^N$, where $f_{\mathcal{S}_{16}}[i,i] = a, f_{\mathcal{S}_{16}}[i,j] = b $ for $i \neq j$) to local weight sharing (that learns $2 \times 16 = 32 $ scalars, where  $f_{\text{Relax-}\mathcal{S}_{16}}[i,i] = a_i, f_{\text{Relax-}\mathcal{S}_{16}}[i,j] = b_i$ for $i \neq j$). Table \ref{tab:exp_human_main} shows that the best performance is achieved at the hypothesis class induced from Relax-$\mathcal{S}_{16}$, which is smaller than $\mathcal{S}_{16}$ but differs from $\mathcal{A}_G$. Furthermore, using \gnet{}(gc+ew) with Relax-$\mathcal{S}_{16}$ gives a substantial improvement over $\mathcal{S}_{16}$ (representing SemGCN in \cite{zhao2019semantic}). This demonstrates the utility of enforcing active symmetries in GNNs that results in more expressive models. More details and additional ablation studies can be found in Appendix~\ref{app.exp_human}.

\section{Discussion}

In this paper, we focus on learning tasks where the graph is fixed, and the dataset consists of different signals on the graph. We developed an approach for designing GNNs based on active symmetries and approximate symmetries induced by the symmetries of the graph and its coarse-grained versions. A layer of an approximately equivariant graph network uses a linear map that is equivariant to the chosen symmetry group; the graph is not used as an input but rather induces a hypothesis class. In practice, we further break the symmetry by incorporating the graph in the model computation, thus combining symmetric and asymmetric components. 

We theoretically show a bias-variance tradeoff between the loss of expressivity due to imposing symmetries, and the gain in regularity, for settings where the target map is assumed to be (approximately) equivariant. For simplicity, the theoretical analysis focuses on the equivariant models without symmetry breaking; Theoretically analyzing the combination of symmetric and asymmetric components in machine learning models is an interesting open problem. The bias-variance formula is computed only for a simple linear regression model with white noise and in the underparamterized setting; Extending it to more realistic models and overparameterized settings is a promising direction. 

As a proof of concept, our approximately equivariant graph networks only consider symmetries of the fixed graph. 
An interesting future direction is to extend our approach to also account for (approximate) symmetries in node features and labels, using suitably generalized cut distance (e.g., graphon-signal cut distance in \cite{levie2023graphonsignal}; see Appendix~\ref{app.geometric_graph} for an overview). Our network architecture consists only of linear layers and pointwise nonlinearity. Another promising direction is to incorporate pooling layers (e.g., \cite{paradamayorga2022graphon, Def+2015}) and investigate them through the lens of approximate symmetries. Finally, extending our analysis to general groups is a natural next step. Our techniques are based on compact groups with orthogonal representation. While we believe that the orthogonality requirement can be lifted straightforwardly, relaxing the compactness requirement appears to be more challenging (see related discussion in \cite{villar2023dimensionless}).

\clearpage
\section*{Acknowledgments}
This research project was started at the BIRS 2022 Workshop ``Deep Exploration of non-Euclidean Data with Geometric and Topological Representation Learning'' held at the UBC Okanagan campus in Kelowna, B.C. The authors thank Ben Blum-Smith (JHU), David Hogg (NYU), Erik Thiede (Cornell), Wenda Zhou (Open AI), Carey Priebe (JHU), and Rene Vidal (UPenn) for valuable discussions, and Hong Ye Tan (Cambridge) for pointing out typos in an earlier version of the manuscript. The authors also thank the anonymous NeurIPS reviewers and area chair for helpful
feedback. Ningyuan Huang is partially supported by the MINDS Data Science Fellowship from Johns Hopkins University. Ron Levie is partially funded by ISF (Israel Science Foundation) grant \#1937/23. Soledad Villar is partially funded by the NSF–Simons Research Collaboration on the Mathematical and Scientific Foundations of Deep Learning (MoDL) (NSF DMS 2031985), NSF CISE 2212457, ONR N00014-22-1-2126 and an Amazon AI2AI Faculty Research Award.

\bibliography{ref.bib}

\clearpage
\appendix

\section{Parameterization of equivariant linear Maps} 

We consider a group $\mathcal G$ acting on spaces $\mathcal X$ and $\mathcal Y$ via representations $\phi$ and $\psi$, respectively. 
Our goal is to find the equivariant linear maps $f:\mathcal X \to \mathcal Y$ such that $f(\phi(g)x)=\psi(g) f(x)$ for all $g\in \mathcal G$ and $x\in \mathcal X$. 
The standard way to do this, used extensively in the equivariant machine learning literature (e.g. \cite{kondor2018clebsch, geiger2022e3nn}), is to decompose $\phi$ and $\psi$ in irreducibles and use Schur's lemma.

In a nutshell, a group representation $\varphi$ is an homomorphism $\mathcal G \to \operatorname{GL}(V)$, where $ \operatorname{GL}(V)$ denotes the General Linear group of the vector space $V$ (sometimes mathematicians say that $V$ is a representation of $\mathcal G$, but we  need to know the homomorphism $\varphi$ too). One way to interpret the group homomorphism (i.e. $\varphi(gh) = \varphi(g)\circ\varphi(h)$) is that the group multiplication corresponds to the composition of linear invertible maps (i.e. matrix multiplication). A linear subspace $W$ of $V$ is said to be a subrepresentation of $\varphi$ if $\varphi(\mathcal G)(W)\subset W$. An irreducible representation is one that only has itself and the trivial subspace as subrepresentations.

Schur's lemma states that if $V$, $W$ are vector spaces over $\mathbb C$ and $\varphi_V$, $\varphi_W$ are irreducible representations, then either (1) $\varphi_V$ and $\varphi_W$ are not isomorphic as representations (and the only equivariant linear map between $V$, $W$ is the zero map), 
or (2) $\varphi_V$ and $\varphi_W$ are isomorphic and the only non-trivial equivariant maps are of the form $\lambda\, I$ where $\lambda\in \mathbb C$ and $I$ is the identity  (See Chapter 1 of \cite{fulton2013representation}). 

Given $\mathcal G$ acting on spaces $\mathcal X$ and $\mathcal Y$ via representations $\phi$ and $\psi$, respectively, one can decompose $\phi$ and $\psi$ in irreducibles over $\mathbb C$ 
\[\phi = \oplus_{k=1}^\ell  a_k \mathcal T_k \quad \psi = \oplus_{k=1}^\ell b_k \mathcal T_k \]
(this notation assumes the same irreducibles appear in both decompositions, which can be done if we allow some of the $a_k$ and $b_k$ to be zero). Then one can parameterize the equivariant linear maps by having one complex parameter per irreducible that appears in both decompositions. 

These ideas can be applied to real spaces too. By Maschke’s theorem we can decompose the representation in irreducibles over $\R$. Then we can check further how to decompose these irreducibles over $\C$, and apply Schur's lemma. We have 3 cases for the decomposition:
\begin{enumerate}
    \item The irreducible over $\R$ is also irreducible over $\C$. In this case we directly apply Schur’s lemma.
    \item The irreducible over $\R$ decomposes in two different irreducibles over $\C$. In this case we can send each $\C$-irreducible to their isomorphic counterpart.
    \item The irreducibles over $\R$ decompose in two copies of the same irreducible over $\C$. In this case we can send each irreducible to any isomorphic copy independently.
\end{enumerate}

Therefore, finding the equivariant linear maps reduces to decomposing the corresponding representations in irreducibles. In the next sections we explain in detail how to do this for the specific problems described in this paper. The appendix is organized as follows: We first show how to parameterize equivariant linear layers for abelian group using isotypical decomposition (Section \ref{app:equiv_abelian}), and then discuss the case for the symmetry group induced by graph coarsening, which further considers parameterizing permutation-equivariant linear maps to model the within-cluster symmetries (Section \ref{app:equiv_map_coarsen}). 

\subsection{Equivariant Linear Maps via Isotypical Decomposition} 
\label{app:equiv_abelian}
In this section, we assume that the graph adjacency matrix $A$ has distinct eigenvalues $\lambda_1 > \lambda_2 > \ldots > \lambda_n$. Then $\mathcal{A}_G$ is an abelian group \cite[Lemma 3.8.1]{Spielman2018}. Under this assumption, we present the construction of $\mathcal{A}_G$-equivariant linear maps using isotypical decomposition (i.e. decomposition into isomorphism classes of irreducible representations) and group characters. We remark that such construction extends to non-abelian groups and refer the interested reader to \cite{sagan2001symmetric}, but we omit it here for the ease of exposition. 


We consider the simplest setting where $f: \R^N \to \R^N$ is a linear function that maps graph signals. Let $x \in \R^N$ be the node features, then $\mathcal{A}_G$-equivariance requires 
\begin{equation}
    f(g \, x) = g\, f(x) \quad \forall \, g \in \mathcal{A}_G . \label{eqn: equiv_map}
\end{equation}

To construct equivariant linear functions $f$, our roadmap is outlined as follows:
\begin{enumerate}
    \item Decompose the vector space $\mathcal{X} = \R^N$ into a sum of components such that different components cannot be mapped to each other equivariantly (also known as the isotypic decomposition);
    \item Given $\mathcal{X} = \oplus_{i} \mathcal{X}_{i}$ an isotypic representation, we then parameterize $f$ by linear maps at each $\mathcal{X}_i$ such that $\forall \, i, f(\mathcal{X}_i) \subseteq \mathcal{X}_i$. 
\end{enumerate}
To this end, we need the following definitions.

\begin{definition}($\mathcal{G}$-module, \cite[Defn 1.3.1]{sagan2001symmetric})
Let $\mathcal{X}$ be a vector space and $\mathcal{G}$ be a group. We say the vector space $\mathcal{X}$ is a $\mathcal{G}$-module or $\mathcal{X}$ carries a representation of $\mathcal{G}$ if there is a group homomorphism $\rho: \mathcal{G} \to GL(\mathcal{X})$, where $GL$ denotes the General Linear group. Equivalently, if the following holds:
\begin{enumerate}
    \item $g v \in \mathcal{X}$, 
    \item $g(c v + d w) = c(gv) + d(gw)$,
    \item $(gh) v = g( hv)$,
    \item $e v = v$
\end{enumerate}
for all $g, h \in \mathcal{G}; v, w \in \mathcal{X}$ and scalars $c,d \in \C$ ($e\in \mathcal{G}$ denotes the identity element).
\end{definition}

In what follows, we consider $\mathcal{X} = \R^N$ carries a representation of $G$.
\begin{definition}(Group characters) \label{defn:character}
Given a group $\mathcal G$ and a representation $\phi:G\to GL(V)$, the group character is a function $\chi: \mathcal G \to \mathbb R$ (or $\mathbb C$) defined as $\chi(g) \coloneqq \operatorname{tr} \phi(g)$.
\end{definition}

\begin{definition}(Group orbits)\label{defn:orbit}
Let $\mathcal{X}$ be a vector space and $\mathcal{G}$ be a group. The group orbit of an element $x \in \mathcal{X}$ is $O(x) \coloneqq \{gx: g \in \mathcal{G} \}$.
\end{definition}

Let $g_1, \ldots, g_s$ be the generators of $\mathcal{A}_G \subset (\SS)^n$, or simply $\mathcal{A}_G \equiv (\SS)^k$ for some $k \le n$. Since $\mathcal{A}_G$ is abelian, any irreducible representation is $1$-dimensional \cite[p.8]{fulton2013representation}. In other words, the irreducible representations of an abelian group are homomorphisms
\begin{equation}
    \rho: \mathcal{A}_G \to \C.
\end{equation}
Since all the elements of the group $\mathcal{A}_G = (\SS)^k$ is of order $1$ or $2$, the homomorphisms are $\rho: \mathcal{A}_G \to \{\pm 1 \} \subset \R$. By Definition \ref{defn:character}, the irreducible characters (i.e., characters of irreducible matrix representation) are also homomorphisms $\rho: \mathcal{A}_G \to \{\pm 1 \}$. In other words, $\chi(g) \in \{\pm 1 \} \, \forall \, g \in \mathcal{A}_G$. Then we can write down the $2^k \times 2^k$ character table, where the rows are the characters $\chi$, and the columns are the group elements $g \in \mathcal{A}_G$ (see Table \ref{tab:ct_z2} as an example). Now, define the projection onto the isotypic component of the representation $X$ as 
\begin{equation}
    P_{\chi} \coloneqq \frac{\operatorname{deg}(X)}{|\mathcal{A}_G|} \sum_{g \in \mathcal{A}_G} \overline{\chi(g)} \, g =\frac{\operatorname{1}}{|\mathcal{A}_G|} \sum_{g \in \mathcal{A}_G} \chi(g) \, g , \label{eqn:chi_proj}
\end{equation}
where the second equality uses the fact that $\mathcal{A}_G$ is abelian.

Intuitively, applying $P_{\chi}$ on $\mathcal{X} = \operatorname{span}(\{ e_1, \ldots, e_N \})$ picks out all $v \in \mathcal{X}$ that stays in the same subspace defined by the group character $\chi$. (Note that for the $(\SS)^k$ case $\chi^{-1}(g) = \chi(g)$ since $\chi(g) \in \{\pm 1 \}$).

Algorithm~\ref{alg:linear} presents the construction of equivariant linear map $f$ with respect to an abelian group. Such construction can parameterize all abelian $\mathcal{A}_G$-equivariant linear maps, as shown in Lemma~\ref{lem:all_equiv_lin_map}.

\begin{algorithm} 
\caption{Parameterizing equivariant linear functions $f: \R^N \to \R^N$ for abelian group}
\label{alg:linear}
\begin{algorithmic}
\Require Abelian group $\mathcal{A}_G = (\SS)^{k}$
\begin{enumerate}
    \item Construct the character table of $\chi_{\text{irreps}}$ for  $\mathcal{A}_G$, i.e. $\chi_i: \mathcal{A}_G \to \{\pm 1 \}$ $i=1,\ldots \ell$;
    \item For each character $\chi_i$ in the character table, compute the projection matrix 
    \begin{equation}
    P_{\chi_i}(\mathcal{X}) = [P_{\chi_i}(e_1); \ldots; P_{\chi_i}(e_N)] \in \R^{N \times N};
    \end{equation} followed by computing the basis from $P_{\chi_i}(\mathcal{X})$ and call it $\mathcal{X}_{\chi_i}=[b_{\chi_i}^{(1)}, \ldots, b_{\chi_i}^{(K_i)}]$.

    \item $\mathcal{X} = \oplus_{i=1}^{\ell} \mathcal{X}_{\chi_i}$ where $\mathcal{X}_{\chi_i}$ are the isotypic component. Then $f$ is any linear function satisfying  that $f(\mathcal{X}_{\chi_i})\subseteq \mathcal{X}_{\chi_i}$ for all $i=1,\ldots, \ell$. In particular, in the basis $[b_{\chi_i}^{(s)}]_{1\leq i \leq \ell, 1\leq s \leq K_i}$ $f$ can be written as a block diagonal matrix $\R^{n \times n}$ with each block $M_{\chi_i}$ being the linear map from $\mathcal{X}_{\chi_i} \to \mathcal{X}_{\chi_i}$,
    \begin{equation}
        f = \begin{bmatrix}
        M_{\chi_1} & & & \\
         & M_{\chi_2} & &\\
        & & \ddots & \\
         & & & M_{\chi_{\ell}}
        \end{bmatrix}.
        \label{eqn:f_linear_equiv}
    \end{equation}
\end{enumerate}
\Return $f$
\end{algorithmic}
\end{algorithm}

\begin{lemma}\label{lem:all_equiv_lin_map}
$f$ is linear, equivariant with respect to the abelian group $\mathcal{A}_G$ if and only if $f$ can be written as \eqref{eqn:f_linear_equiv} in Algorithm \ref{alg:linear}.
\end{lemma}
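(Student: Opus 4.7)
The plan is to prove both directions via the standard isotypic decomposition argument, exploiting that $\mathcal{A}_G$ is abelian so every irreducible representation is one-dimensional, and in fact (since every element has order $1$ or $2$) its characters take values in $\{\pm 1\}\subset\mathbb{R}$. I would begin by verifying the two preliminary facts underlying Algorithm~\ref{alg:linear}. Using orthogonality of characters, the operators $P_{\chi_i}$ defined in~\eqref{eqn:chi_proj} satisfy $P_{\chi_i}P_{\chi_j}=\delta_{ij}P_{\chi_i}$ and $\sum_i P_{\chi_i}=I$, yielding the orthogonal direct sum $\mathcal{X}=\bigoplus_{i=1}^\ell \mathcal{X}_{\chi_i}$ (where components corresponding to characters not appearing in the permutation representation are simply zero). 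A direct computation, using that $\chi_i$ is a homomorphism, gives $gP_{\chi_i}=\chi_i(g)P_{\chi_i}$, so the action of $\mathcal{A}_G$ on each isotypic component is scalar:
\begin{equation*}
gv=\chi_i(g)\,v\qquad\text{for all }v\in\mathcal{X}_{\chi_i},\ g\in\mathcal{A}_G.
\end{equation*}

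For the ($\Leftarrow$) direction, suppose $f$ has the block-diagonal form~\eqref{eqn:f_linear_equiv} in the isotypic basis, i.e.\ $f(\mathcal{X}_{\chi_i})\subseteq\mathcal{X}_{\chi_i}$ for every $i$. Writing an arbitrary $v\in\mathcal{X}$ as $v=\sum_i v_i$ with $v_i\in\mathcal{X}_{\chi_i}$, the scalar action on each component gives
\begin{equation*}
f(gv)=f\Bigl(\sum_i \chi_i(g)\,v_i\Bigr)=\sum_i \chi_i(g)\,M_{\chi_i}(v_i)=\sum_i g\,M_{\chi_i}(v_i)=g\,f(v),
\end{equation*}
where the third equality uses $M_{\chi_i}(v_i)\in\mathcal{X}_{\chi_i}$. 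Hence $f$ is equivariant, and linearity is immediate.

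For the ($\Rightarrow$) direction, suppose $f$ is linear and equivariant. The key step is that $f$ commutes with every projector:
\begin{equation*}
P_{\chi_i}\,f(v)=\frac{1}{|\mathcal{A}_G|}\sum_{g\in\mathcal{A}_G}\chi_i(g)\,g\,f(v)=\frac{1}{|\mathcal{A}_G|}\sum_{g\in\mathcal{A}_G}\chi_i(g)\,f(gv)=f\bigl(P_{\chi_i}v\bigr).
\end{equation*}
Consequently, for $v\in\mathcal{X}_{\chi_j}$ we have $P_{\chi_i}f(v)=f(P_{\chi_i}v)=\delta_{ij}f(v)$, so $f(v)\in\mathcal{X}_{\chi_j}$. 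This shows $f$ preserves each isotypic component and hence has the block-diagonal matrix representation in the basis $[b^{(s)}_{\chi_i}]$, completing the equivalence.

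The main subtlety (rather than a real obstacle) is avoiding complexification: one might expect to invoke Schur's lemma over $\mathbb{C}$, but because every irreducible character of $(\mathcal{S}_2)^k$ is already real-valued in $\{\pm 1\}$, the entire decomposition and the blocks $M_{\chi_i}$ stay over $\mathbb{R}$, so the parameterization in Algorithm~\ref{alg:linear} genuinely describes all real linear equivariant maps with no further restriction needed.
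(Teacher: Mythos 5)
Your proof is correct. The backward direction is essentially the paper's (both rely on the scalar action $gv=\chi_i(g)v$ on each isotypic component, and the paper simply says ``by construction'' where you spell it out), but your forward direction takes a genuinely different route. The paper argues by contradiction: it supposes $f(v_{\chi_1})$ has a nonzero component in a second isotypic piece, computes $f(gv_{\chi_1})$ two ways (once via the eigen-relation, once via equivariance), and invokes the existence of some $g$ with $\chi_1(g)\neq\chi_2(g)$ to force a contradiction. You instead show directly that equivariance makes $f$ commute with each projector $P_{\chi_i}$, so $P_{\chi_j}f(v)=f(P_{\chi_j}v)=f(v)$ for $v\in\mathcal{X}_{\chi_j}$, giving $f(\mathcal{X}_{\chi_j})\subseteq\mathcal{X}_{\chi_j}$ with no case analysis. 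Your route is cleaner and more canonical: the commutation identity $P_{\chi_i}f = f P_{\chi_i}$ is the conceptual heart of Schur-lemma-based equivariance arguments and generalizes immediately to any compact group, whereas the paper's contradiction argument leans on the pointwise distinctness of characters. Your remark about the decomposition staying over $\mathbb{R}$ (since every character of $(\mathcal{S}_2)^k$ is $\pm1$-valued) is a useful clarification that the paper leaves implicit. One small caveat worth being aware of: your identity $gP_{\chi_i}=\chi_i(g)P_{\chi_i}$ is really $gP_{\chi_i}=\chi_i(g)^{-1}P_{\chi_i}$ in general, which collapses to what you wrote only because $\chi_i(g)\in\{\pm1\}$ here — the sign you noted is doing real work in that step, not just in keeping things real.
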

\begin{proof}
By construction in Algorithm \ref{alg:linear}, $f$ is linear and equivariant. To show the converse, we first recall some useful facts and notations. Since $\mathcal{A}_G$ is abelian with all irreducible representations being one-dimensional, for isotypic components $\mathcal{X}_{\chi_1} \ncong \mathcal{X}_{\chi_2}$, we have
\begin{align}
    g \, v_1 &= \lambda_1(g) \, v_1, \quad \forall\, g \in \mathcal{G}, v_1 \in \mathcal{X}_{\chi_1}, \label{eqn: eigen_1}\\
    g \, v_2 &= \lambda_2(g) \, v_2, \quad \forall\, g \in \mathcal{G}, v_2 \in \mathcal{X}_{\chi_2}, \label{eqn: eigen_2}
\end{align}
where there exists some $g \in \mathcal{G}$ such that $\lambda_1(g) \ne \lambda_2(g)$. To show $f$ being linear and equivariant implies for all $v \in \mathcal{X}_{\chi}$, $f(v) \in \mathcal{X}_{\chi}$, we prove by contradiction. Without loss of generality, suppose 
\begin{equation}
 f(v_{\chi_1}) = \alpha_1 v_{\chi_1} + \alpha_2 v_{\chi_2},
\end{equation}
for some scalars $\alpha_1, \alpha_2$ and $v_{\chi_1} \in \mathcal{X}_{\chi_1}, v_{\chi_2} \in \mathcal{X}_{\chi_2}$. Then by \eqref{eqn: eigen_1}, for all $g \in \mathcal{G}$, 
\begin{equation}
  f( g \, v_{\chi_1}) = f ( \lambda_1(g) \, v_{\chi_1}) = \lambda_1(g) f (v_{\chi_1}) = \lambda_1(g) \alpha_1 v_{\chi_1} + \lambda_1(g) \alpha_2 v_{\chi_2}. \label{eqn: contra1}
\end{equation}
Now, since $f$ is equivariant, for all $g \in \mathcal{G}$, 
\begin{equation}
   f( g \, v_{\chi_1}) = g f(  v_{\chi_1}) = g (\alpha_1 v_{\chi_1} + \alpha_2 v_{\chi_2}) = \lambda_1(g) \alpha_1 v_{\chi_1} + \lambda_2(g) \alpha_2 v_{\chi_2}. \label{eqn: contra2}
\end{equation}
But there exists some $g' \in \mathcal{G}$ such that $\lambda_1(g') \ne \lambda_2(g')$, which leads to $f(g' v_{\chi_1}) \ne f(g' v_{\chi_1})$, a contradiction. One can easily extend the proof strategy to the general case for $f(v_{\chi_1}) = \sum_{i=1}^l v_{\chi_i}$.  
\end{proof}

\begin{example}

\begin{table}[t]
\centering
\begin{tabular}{@{}lll@{}}
\toprule
        & $e$ & $\sigma$ \\ \midrule
$\chi_e$ & $1$ & $1$      \\
$\chi_2$ & $1$ & $-1$     \\ \bottomrule
\end{tabular}
\smallskip
\caption{Character table for $\operatorname{aut}(P_4) \cong Z_2$}
\label{tab:ct_z2}
\end{table}

Consider the path graph on $4$ nodes (i.e., $P_4$). We have $\operatorname{aut}(P_4) = \{e, (14)(23) \} \cong Z_2$. 

Steps $1$: Note that $Z_2$ is abelian and thus all irreducible characters $\chi(g) \in \{\pm 1 \}, \forall \, g \in Z_2$. The character table is shown in Table \ref{tab:ct_z2}.

Step $2$: using \eqref{eqn:chi_proj} we have
\begin{align*}
    P_{\chi_e}[e_1; e_2; e_3; e_4] &= \frac{1}{2} \begin{bmatrix}
    1 & 0 & 0 & 1 \\
    0 & 1 & 1 & 0 \\
    0 & 1 & 1 & 0 \\
    1 & 0 & 0 & 1 \\
    \end{bmatrix}  \text{ which yields basis } \mathcal{B}(P_{\chi_e}) = [e_1 + e_4; e_2 + e_3]. \\
    P_{\chi_2}[e_1; e_2; e_3; e_4] &= \frac{1}{2} \begin{bmatrix}
    1 & 0 & 0 & -1 \\
    0 & 1 & -1 & 0 \\
    0 & -1 & 1 & 0 \\
    -1 & 0 & 0 & 1 \\
    \end{bmatrix}  \text{ which yields basis } \mathcal{B}(P_{\chi_2}) =[e_1 - e_4; e_2 - e_3] .
\end{align*}

Step $3$: Parameterize $f: \R^{4} \to \R^{4}$ by $f: \mathcal{B}(P_{\chi_e}) \to \mathcal{B}(P_{\chi_e})$ and $f: \mathcal{B}(P_{\chi_2}) \to \mathcal{B}(P_{\chi_2})$. For all $v \in \R^4$, write $v = c_1 (e_1 + e_4) + c_2 (e_2 + e_3) + c_3 (e_1 - e_4) + c_4 (e_2 - e_3)$, then
\begin{equation}
    f(v) = \begin{bmatrix}
    \alpha_1 & \alpha_2 \\
    \alpha_3 & \alpha_4  \\
    \end{bmatrix} \begin{bmatrix} c_1 \\ c_2 \end{bmatrix}   + \begin{bmatrix}
    \alpha_5 & \alpha_6 \\
    \alpha_7 & \alpha_8  \\
    \end{bmatrix}  \begin{bmatrix} c_3 \\ c_4 \end{bmatrix},
\end{equation}
where $\alpha_1, \ldots, \alpha_8$ are (learnable) real scalars. Now $f$ is linear, equivariant by construction.
\end{example}

\subsection{Equivariant Linear Map for Symmetries Induced by Graph Coarsening} \label{app:equiv_map_coarsen}

In this section, we provide additional details of constructing equivariant linear maps for using the symmetry group induced by graph coarsening (Definition \ref{defn:aut_coarsen}). Recall the symmetry group with $M$ clusters of $G$ (with the associated coarsened graph $G'$) is given by 
  \[\mathcal{G}_{G\rightarrow G'}:= \Big( \mathcal{S}_1 \times \mathcal{S}_2 \ldots \times \mathcal{S}_M\Big) \rtimes \overline{\mathcal{A}}_{G'} \subset \mathcal{S}_{N}.\]
We first assume that $\overline{\mathcal A}_{G'}$ is trivial and show how to parameterize equivariant functions with respect to products of permutations. Then we discuss more general cases, for instance if $\overline{\mathcal A}_{G'}$ is abelian. For the ease of exposition, we consider $X \in \R^N, Y \in \R^N$. 

Suppose $\overline{\mathcal A}_{G'}$ is trivial, we claim that a linear function $f: \R^N \to \R^N$ is equivariant with respect to $\mathcal{G}_{G \to G'}$ if and only if it admits the following block-matrix form: 
\begin{equation}
    f = \begin{bmatrix}
        f_{11} & f_{12} & \cdots & f_{1M}\\
        f_{21} & f_{22} & \cdots & f_{2M }\\
        & & \ddots & \\
        f_{M1} & f_{M2}& \cdots & f_{MM}
        \end{bmatrix}, \, f_{kk} = a_k I_{c_k} + b_k \mathds{1}_{c_k} \mathds{1}^{\top}_{c_k}, \, f_{kl} = e_{kl} \mathds{1}_{c_k} \mathds{1}^{\top}_{c_l} \text{ for } k \neq l,
        \label{eqn:f_linear_equiv_block}  
\end{equation}
where $f_{kl}$ are block matrices, $I_k$ is a size-$k$ identity matrix, $\mathds{1}_k$ is a length-$k$ vector of all ones, and $a_k, b_k, e_{kl}$ are scalars. The subscript $c_k$ denotes the size of the $k$-th cluster. Figure \ref{fig:block_sbm} illustrates the block structure of $f$.

We provide a proof sketch of our claim. Since $\overline{\mathcal A}_{G'}$ is trivial by assumption, it remains to show any function that is equivariant to the product of permutations $\Big( \mathcal{S}_1 \times \mathcal{S}_2 \ldots \times \mathcal{S}_M\Big)$ admits the form in eqn.~\ref{eqn:f_linear_equiv_block}. We justify as follows:
\begin{enumerate}
    \item (within-cluster) $f_{kk}$ is a linear permutation-equivariant function if and only if its diagonal elements are the same and its off-diagonal elements are the same (\cite[Lemma 3.]{zaheer2017deep});
    \item (across-cluster) $f_{kl}$ for $k \neq l$ is a constant matrix since nodes within a cluster are indistinguishable; moreover, $e_{kl}$ are unconstrained since the coarsened symmetry $\overline{\cal{A}}_{G'}$ is trivial.
\end{enumerate}

\begin{figure}[htb!]
  \centering
  \includegraphics[width=0.58\textwidth]{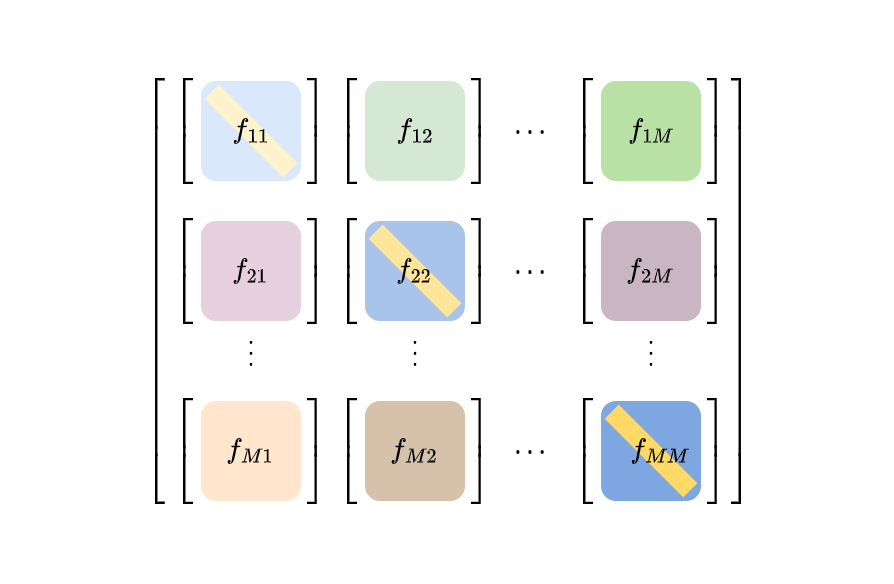} 
  \caption{The block structure of equivariant linear function $f: \R^n \to \R^n$ with respect to $\mathcal{G}_{G \to G'}$ (where $G, G'$ are asymmetric): Each diagonal block $f_{kk}$ is diagonally constant and off-diagonally constant; Each off-diagonal block $f_{kl}$ is a constant matrix.}
  \label{fig:block_sbm}
\end{figure}

As an example, we illustrate the equivariant linear layer for two-cluster graph coarsening. Without loss of generality, assume that the node signals $X$ are ordered according to the cluster assignment (e.g., $X_{[1: |V_1|]}$ are node features for the first cluster, etc). Let $X_{(1)}, X_{(2)}$ denote the node features for the first and second cluster, $W_{(1)}^s, W_{(2)}^s$ denote the weights on the block diagonal for self-feature transformation, $W_{(1)}^n, W_{(2)}^n$ denote the weights on the block diagonal for within-cluster neighbors, and $W_{(12)}^n, W_{(21)}^n$ denote the weights off the block diagonal for across-cluster neighbors. Let $I$ denote the identity matrix, $\mathbf{1}_{(1)}, \mathbf{1}_{(2)}$ denote the all-ones matrices with the same size as the corresponding cluster, and $\mathbf{1}_{(12)}, \mathbf{1}_{(21)}$ denote the all-ones matrices mapping across clusters. Recall $\odot$ denotes the element-wise multiplication of two matrices. Then the equivariant linear layer is parameterized as
\begin{equation}
    I \odot  \begin{bmatrix}
    X_{(1)} W_{(1)}^s  \\
    X_{(2)} W_{(2)}^s 
    \end{bmatrix} +  
   \Big( \begin{bmatrix}
    \mathbf{1}_{(1)} & 0 \\
    0 & \mathbf{1}_{(2)}
    \end{bmatrix} - 
    I  \Big) \odot \, \begin{bmatrix}
    X_{(1)} W_{(1)}^n  \\
    X_{(2)} W_{(2)}^n 
    \end{bmatrix} + 
  \begin{bmatrix}
    0 & \mathbf{1}_{(12)} \\
    \mathbf{1}_{(21)} & 0
    \end{bmatrix}  \odot 
    \, \begin{bmatrix}
    X_{(1)} W_{(12)}^n  \\
    X_{(2)} W_{(21)}^n 
    \end{bmatrix}.
\end{equation}

For cases where $\overline{\mathcal A}_{G'}$ is nontrivial, if $\overline{\mathcal A}_{G'}$ is abelian, we can use a construction by Serre (\cite{serre1977linear} Section 8.2). Observe that the symmetry of $\overline{\mathcal A}_{G'}$ effectively constrains the patterns of $e_{kl}$ in equation~\ref{eqn:f_linear_equiv_block}. We provide an example where $\overline{\mathcal A}_{G'} = \mathcal{S}_2$ in our image inpainting experiment (see Section~\ref{sec:image_inpaint}), and defer the investigation of general cases for future work.

\section{Proofs of Our Theoretical Results} \label{app:proofs}

\subsection{Proofs of Generalization with Exact Symmetries} \label{app:proofs_exact}

\LemmaOne*

Lemma~\ref{lem:BV} is a straightforward extension of Lemma 6 in \cite{elesedy2021provably}, which makes use of Lemma 1 in \cite{elesedy2021provably}.
\begin{lemma1}
Let $U$ be any subspace of $V$ that is closed under $\mathcal{Q}$. Define the subspaces $S$ and $A$ of, respectively, the $\mathcal{G}$-symmetric and $\mathcal{G}$-anti-symmetric functions in $U: S=\{f \in U: f$ is $\mathcal{G}$-equivariant $\}$ and $A=\{f \in U: \mathcal{Q} f=0\}$. Then $U$ admits admits an orthogonal decomposition into symmetric and anti-symmetric parts
$$
U=S \oplus A
$$
\end{lemma1}

\begin{proof}[Proof of Lemma~\ref{lem:BV}]
The first part of Lemma~\ref{lem:BV} $f = \bar{f}_{\mathcal{G}} + f^{\perp}_{\mathcal{G}}$ follows from Lemma 1 in \cite{elesedy2021provably}. 
For the second part, by the assumption that the noise $\xi$ is independent of $X$ with zero mean and finite variance, we can simplify the risk gap as
\begin{align}
    \Delta(f, \bar{f}_{\mathcal{G}}) &\coloneqq \mathbb{E}\left[\|Y-f(X)\|_F^2\right]-\mathbb{E}\left[\|Y-\bar{f}_{\mathcal{G}} (X)\|_F^2\right] \nonumber \\
    &= \mathbb{E}\left[\|f^*(X)-f(X)\|_F^2\right]-\mathbb{E}\left[\|f^*(X)-\bar{f}_{\mathcal{G}} (X)\|_F^2\right].
\end{align}
Substituting $f = \bar{f}_{\mathcal{G}} + f^{\perp}_{\mathcal{G}}$ yields
\begin{align}
 &  \mathbb{E}\left[\|f^*(X)-\bar{f}_{\mathcal{G}}(X) - f^{\perp}_{\mathcal{G}}(X) \|_F^2\right]-\mathbb{E}\left[\|f^*(X)-\bar{f}_{\mathcal{G}} (X)\|_F^2\right]   \nonumber \\
 ={}& -2 \langle f^*(X)-\bar{f}_{\mathcal{G}}(X), f^{\perp}_{\mathcal{G}}(X) \rangle_{\mu} + \mathbb{E} \left[ \|f^{\perp}_{\mathcal{G}}(X) \|_F^2 \right] \nonumber \\
 ={}&  -2 \langle f^*, f^{\perp}_{\mathcal{G}}\rangle_{\mu} + \left\|f^{\perp}_{\mathcal{G}} \right\|_\mu^2 . \label{eqn:proof_lem1}
\end{align}
\end{proof}

We remark that Lemma 6 in \cite{elesedy2021provably} assumes that $f^*$ is $\mathcal{G}$-equivariant, so the first term in \eqref{eqn:proof_lem1} vanishes. We are motivated from the symmetry model selection problem, and thereby relax the assumption of the chosen symmetry group $\mathcal{G}$ can differ from the target symmetry group $\mathcal{A}_{\mathcal{G}}$.

\medskip

\ThmTwo*
Theorem~\ref{thm:BV} presents the risk gap in expectation, which follows from Lemma~\ref{lem:BV}, by taking $f$ as the least-squares estimator and using assumptions in the linear regression setting. To this end, we denote $\bm{X} \in \R^{n \times Nd}, \bm{Y} \in \R^{n \times Nk}, \bm{\xi} \in \R^{n \times Nk}$ as the training data arranged in matrix form, where $\bm{Y} = f^*(\bm{X}) + \bm{\xi}$. Recall that the least-squares estimator of $\Theta$ in the classic regime ($n > Nd$) is given by
 \begin{equation}
     \hat{\Theta} := (\bm{X}^{\top} \bm{X})^{\dagger} \bm{X}^{\top} \bm{Y} \overset{a.e.}{=} \Theta +  (\bm{X}^{\top} \bm{X})^{-1} \bm{X}^{\top} \bm{\xi}, \label{eqn:ls}
 \end{equation}

 while its equivariant map is 
 \begin{equation}
   \Psi_{\mathcal{G}}( \hat{\Theta}) =\int_{\mathcal{G}} \phi(g)  \, \hat{\Theta} \,  \psi\left(g^{-1}\right) \mathrm{d} \lambda(g) .
 \end{equation}

Our proof makes use of the following results in \cite{elesedy2021provably}, which we restate adapted versions here for our setting.

\begin{prop11} \label{lem:inner_prod}
Let $V = \{f_W: f_W(x) = W^{\top} x, W \in \R^{d \times k}, x \in \R^d \}$ denote the space of linear functions. Let $X \sim \mu$ with $ \mathbb{E}[X X^{\top}] = \Sigma$. For any linear functions $f_{W_1}, f_{W_2} \in V$, the inner product on $V$ satisfies
\begin{equation}
    \langle f_{W_1}, f_{W_2} \rangle_{\mu} = \operatorname{Tr} (W^{\top}_1 \Sigma W_2).
\end{equation}
\end{prop11}

\smallskip

\begin{theorem13}[Simplified, Adapted]
Consider the same setting as Theorem~\ref{thm:BV}. For $n > Nd + 1$,
$$\sigma_X^2 \mathbb{E}\left[\left\|\Psi_{\mathcal{G}}^{\perp}\left(\left(\bm{X}^{\top} \bm{X}\right)^{+} \bm{X}^{\top} \bm{\xi}\right)\right\|_{F}^2\right] = \sigma_{\xi}^2 \frac{N^2 dk -\left(\chi_{\psi|_{\mathcal{G}}} \mid \chi_{\phi|_{\mathcal{G}}}\right)}{n-Nd-1}. $$
\end{theorem13}

\begin{proof}[Proof of Theorem~\ref{thm:BV}]
We first plug in the least-squares expressions $\hat{\Theta}, \Psi_{\mathcal{G}}(\hat{\Theta})$ to Lemma~\ref{lem:BV} and treat the mismatch term and constraint term separately; We complete the proof by collecting common terms together. 

For the mismatch term, our goal is to compute
\begin{equation}
 -2  \, \mathbb E \, [\langle \Theta,  \hat{\Theta} -   \Psi_{\mathcal{G}}(\hat{\Theta}) \rangle_{\mu}], 
\end{equation}
where the expectation is taken over the test point $X$ and the training data $\bm{X}, \bm{\xi}$.

To that end, we write
\begin{equation}
  \left( \hat{\Theta} -   \Psi_{\mathcal{G}}(\hat{\Theta})\right) x \overset{a.e.}{=} \Theta^{\top} x + \bm{\xi}^{\top}  \bm{X} (\bm{X}^{\top} \bm{X})^{-1} x - \int_{\mathcal{G}} \psi(g^{-1}) \left( \Theta^{\top} +  \bm{\xi}^{\top}  \bm{X} (\bm{X}^{\top} \bm{X})^{-1}\right) \phi(g) x \, \dd \lambda(g)
   \label{eqn:f_perp_GL} .
\end{equation}

Taking expectation yields
\begin{align}
\mathbb E_{X, \bm{X}, \bm{\xi}} \, [\langle \Theta,  \hat{\Theta} -   \Psi_{\mathcal{G}}(\hat{\Theta}) \rangle_{\mu}] &= \| \Theta\|_{\mu}^2 +  \mathbb{E}_{X, \bm{X}, \bm{\xi}}\left[ \langle \Theta^{\top} \bm{X}, \bm{\xi}^{\top}  \bm{X} (\bm{X}^{\top} \bm{X})^{-1} x \rangle \right] \nonumber \\ &-   \mathbb{E}_{X, \bm{X}, \bm{\xi}}\left[ \langle \Theta^{\top} x, \int_{\mathcal{G}} \psi(g^{-1}) \left( \Theta^{\top} +  \bm{\xi}^{\top}  \bm{X} (\bm{X}^{\top} \bm{X})^{-1}\right) \phi(g) x \, \dd \lambda(g) \rangle \right]. \label{eqn:bias_expand}
\end{align}

Note that $\bm{\xi}$ is independent with $\bm{X}$ and mean 0, so the second term in \eqref{eqn:bias_expand} vanishes. Similarly, the part $\mathbb{E}_{X, \bm{X}, \bm{\xi}} \int_{\mathcal{G}} \psi(g^{-1}) \left(\bm{\xi}^{\top}  \bm{X} (\bm{X}^{\top} \bm{X})^{-1}\right) \phi(g) x \,  \dd \lambda(g) $ also vanishes (by first taking conditional expectation of $\bm{\xi}$ conditioned on $\bm{X}$). Thus, we arrive at
\begin{align}
 \mathbb E \, [\langle \Theta,  \hat{\Theta} -   \Psi_{\mathcal{G}}(\hat{\Theta}) \rangle_{\mu}] &=   \| \Theta \|_{\mu}^2 - \mathbb{E}_{x}\left[ \langle \Theta^{\top} x, \int_{\mathcal{G}} \psi(g^{-1})  \, \Theta^{\top} \, \phi(g) x \, \dd \lambda(g) \rangle \right] \nonumber \\
  &=  \| \Theta \|_{\mu}^2 - \langle \Theta, \Psi_{\mathcal{G}}(\Theta) \rangle_{\mu} \nonumber \\
  &=  \| \Psi_{\mathcal{G}}^{\perp}(\Theta)  \|_{\mu}^2 \nonumber\\
  &= -2 \, \sigma^2_X \| \Psi_{\mathcal{G}}^{\perp}(\Theta) \|^2_F,  \label{eqn:bias} %
\end{align}
where the last equality follows from Proposition 11 in \cite{elesedy2021provably} with the assumption that $\Sigma = \sigma_X^2$. This finishes the computation for the mismatch term.

Now for the constraint term, we have
\begin{align}
\|f^{\perp}_{\mathcal{G}} \|^2_{\mu} &=   \| \Psi_{\mathcal{G}}^{\perp}(\hat{\Theta})  \|_{\mu}^2  \\
&=  \sigma^2_X \,  \mathbb E_{\bm{X}, \bm{\xi}} \| \Psi_{\mathcal{G}}^{\perp}\left(\Theta + (\bm{X}^{\top} \bm{X})^{-1} \bm{X}^{\top} \bm{\xi} \right)  \|^2  \\
&= \sigma^2_X \|  \Psi_{\mathcal{G}}^{\perp}(\Theta) \|^2_F + \sigma^2_X \, \mathbb E_{\bm{X}, \bm{\xi}} \| \Psi_{\mathcal{G}}^{\perp}\left((\bm{X}^{\top} \bm{X})^{-1} \bm{X}^{\top} \bm{\xi} \right)  \|^2 , \label{eqn:variance}
\end{align}
where the last equality follows from linearity of expectation, $\mathbb E[\bm{\xi} ] = 0$ and $\bm{\xi}$ independent of $x$.

Combining the mismatch term in \eqref{eqn:bias} with the constraint term in \eqref{eqn:variance}, the risk gap becomes
\begin{equation}
 \mathbb{E}\left[\Delta\left(f_{\hat{\Theta}}, f_{\Psi_{\mathcal{G}}(\hat{\Theta})}\right)\right] =   - \sigma^2_X \| \, \Psi_{\mathcal{G}_L}^{\perp}(\Theta) \|^2 + \sigma^2_X \, \mathbb E_{\bm{X}, \boldsymbol{\xi}} \| \Psi_{\mathcal{G}_L}^{\perp}\left((\mathbf{X}^{\top} \mathbf{X})^{-1} \mathbf{X}^{\top} \boldsymbol{\xi} \right)  \|^2, \label{eqn:gen_tgt}
\end{equation}
Applying Theorem 13 in \cite{elesedy2021provably}, the second term in \eqref{eqn:gen_tgt} reduces to 
\begin{equation}
   \sigma^2_X \, \mathbb E_{\bm{X}, \boldsymbol{\xi}} \| \Psi_{\mathcal{G}_L}^{\perp}\left((\mathbf{X}^{\top} \mathbf{X})^{-1} \mathbf{X}^{\top} \boldsymbol{\xi} \right)  \|^2 =  \sigma_{\xi}^2 \frac{N^2 dk -\left(\chi_{\psi|_{\mathcal{G}}} \mid \chi_{\phi|_{\mathcal{G}}}\right)}{n-Nd-1},
\end{equation}
from which the theorem follows immediately.

\end{proof}

\medskip
Finally, we state a well-known result for the risk of (Ordinary) Least-Squares Estimator (see \cite{hastie2022surprises, Teresa_2022} and references therein).
\begin{lemma}[Risk of Least-Squares Estimator]\label{lem:LS}
Consider the same set-up as Theorem~\ref{thm:BV}. For $n > Nd + 1$, 
$$
 \mathbb{E}\left[\|Y- \hat{\Theta}^{\top} X \|_F^2\right]   = \sigma_{\xi}^2 \frac{Nd}{n - Nd - 1} + \sigma_{\xi}^2. 
$$
\end{lemma}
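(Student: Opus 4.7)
The plan is to unfold the test-time residual in terms of the true parameter, the test input, the test noise, and the training noise, then take expectations. Write $Y - \hat\Theta^\top X = (\Theta - \hat\Theta)^\top X + \xi$ and substitute $\hat\Theta = \Theta + (\mathbf{X}^\top \mathbf{X})^{-1} \mathbf{X}^\top \boldsymbol{\xi}$ (which holds almost surely since $n > Nd$ and the Gaussian design is full-rank a.s.), giving
$$
Y - \hat\Theta^\top X = \xi - \boldsymbol{\xi}^\top \mathbf{X}(\mathbf{X}^\top \mathbf{X})^{-1} X.
$$
Expanding the Frobenius norm squared and taking expectation, the cross term
$\mathbb{E}\bigl[\langle \xi,\, \boldsymbol{\xi}^\top \mathbf{X}(\mathbf{X}^\top \mathbf{X})^{-1} X\rangle\bigr]$
vanishes because the test noise $\xi$ is independent of the training data $(\mathbf{X},\boldsymbol{\xi})$ and the test input $X$, and has zero mean. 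This leaves two terms to evaluate.

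The first is $\mathbb{E}[\|\xi\|_F^2]$, which under the stated noise model equals the constant $\sigma_\xi^2$ appearing as the irreducible risk. The second is the estimation-error contribution
$\mathbb{E}\bigl[\|\boldsymbol{\xi}^\top \mathbf{X}(\mathbf{X}^\top \mathbf{X})^{-1} X\|_F^2\bigr]$. I would compute this by iterated conditioning: first condition on $\mathbf{X}$ and on the test point $X$, and use independence and zero-mean of $\boldsymbol{\xi}$ together with $\mathbb{E}[\boldsymbol{\xi}\boldsymbol{\xi}^\top] = \sigma_\xi^2 I_n$ to reduce the inner conditional expectation to
$$
\sigma_\xi^2 \, X^\top (\mathbf{X}^\top \mathbf{X})^{-1} \mathbf{X}^\top \mathbf{X} (\mathbf{X}^\top \mathbf{X})^{-1} X = \sigma_\xi^2 \, X^\top (\mathbf{X}^\top \mathbf{X})^{-1} X.
$$
Next, take expectation over $X$, which is independent of $\mathbf{X}$ with $\mathbb{E}[X X^\top] = \sigma_X^2 I_{Nd}$, yielding $\sigma_\xi^2 \sigma_X^2 \, \mathrm{tr}\bigl(\mathbb{E}[(\mathbf{X}^\top \mathbf{X})^{-1}]\bigr)$.

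The main (and only nontrivial) step is the evaluation of $\mathbb{E}[(\mathbf{X}^\top \mathbf{X})^{-1}]$. Since the rows of $\mathbf{X}$ are i.i.d.\ $\mathcal{N}(0,\sigma_X^2 I_{Nd})$, the matrix $\mathbf{X}^\top \mathbf{X}$ follows a Wishart distribution $\mathcal{W}_{Nd}(n,\sigma_X^2 I_{Nd})$, and provided $n > Nd + 1$, the standard inverse-Wishart mean formula gives
$$
\mathbb{E}\bigl[(\mathbf{X}^\top \mathbf{X})^{-1}\bigr] = \frac{1}{\sigma_X^2 (n - Nd - 1)}\, I_{Nd}.
$$
Substituting and taking the trace produces $\sigma_\xi^2 \tfrac{Nd}{n - Nd - 1}$ for the estimation-error contribution. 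Adding the irreducible $\sigma_\xi^2$ yields the claimed identity. The only subtlety is verifying that the moment condition on the inverse Wishart is applicable under the paper's noise convention; this is a standard textbook fact cited via \cite{hastie2022surprises, Teresa_2022} and so the proof reduces to the assembly above.
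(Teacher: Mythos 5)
Your proposal is correct and takes essentially the same approach as the paper. Both proofs reduce the test residual to $\xi$ minus the estimation error $\boldsymbol{\xi}^{\top}\mathbf{X}(\mathbf{X}^{\top}\mathbf{X})^{-1}X$, drop the cross term by independence and zero mean of the noise, and evaluate the remaining quadratic form via $\mathbb{E}[(\mathbf{X}^\top\mathbf{X})^{-1}]$ for a Wishart-distributed Gram matrix. The only cosmetic difference is that the paper frames the intermediate step as an explicit bias--variance decomposition around $\mathbb{E}[\hat\Theta]$ (invoking unbiasedness to kill the bias term), while you substitute $\hat\Theta = \Theta + (\mathbf{X}^\top\mathbf{X})^{-1}\mathbf{X}^\top\boldsymbol{\xi}$ directly; these are the same computation. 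One small thing worth noting: your statement of the inverse-Wishart moment, $\mathbb{E}[(\mathbf{X}^\top\mathbf{X})^{-1}] = \frac{1}{\sigma_X^2(n-Nd-1)}I_{Nd}$, is the standard form, and the $\sigma_X^2$ factors cancel against $\mathbb{E}[XX^\top]=\sigma_X^2 I$ exactly as you say; the paper's displayed formula $\mathbb{E}[(\mathbf{X}^\top\mathbf{X})^{-1}]=\frac{Nd}{n-Nd-1}I$ is really the post-trace scalar and not the matrix moment, so your version is the cleaner one.
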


\begin{proof}[Proof of Lemma~\ref{lem:LS}]
Recall $X, Y$ denote the test sample. We denote the risk of the least-squares estimator \textit{conditional on the training data $\bm{X} \in \R^{n \times Nd}$} as
$\mathcal{R}(\hat{\Theta} \mid \bm{X})$, which has the following bias-variance decomposition:
\begin{align}
 \mathcal{R}(\hat{\Theta} \mid \bm{X}) &=  \mathbb{E}\left[\|Y- \hat{\Theta}^{\top} X \|_F^2 \mid \bm{X} \right] \\
 &= \mathbb{E} \left[\|\Theta^{\top} X + \xi - \hat{\Theta}^{\top} X \|_F^2 \mid \bm{X} \right] \\
 &= \mathbb{E} \left[\| (\Theta - \hat{\Theta})^{\top} X \|_F^2 \mid \bm{X} \right] + \sigma^2_{\xi},
\end{align}
where the last equality follows from $\xi$ being zero mean and independent with $X$. The second term $\sigma^2_{\xi}$ is also known as \emph{irreducible error}. We decompose the first term into 
\begin{equation}
  \mathbb{E} \left[\| (\Theta - \hat{\Theta})^{\top} X \|_F^2 \mid \bm{X} \right] =  \mathbb{E} \left[\| (\Theta - \mathbb{E}[\hat{\Theta}])^{\top} X \|_F^2  +  \| ( \mathbb{E}[\hat{\Theta}] - \hat{\Theta})^{\top} X \|_F^2 \mid \bm{X} \right]. \label{eqn:risk_first}
\end{equation}

Recall that $\hat{\Theta} \overset{a.e.}{=} (\bm{X}^{\top} \bm{X})^{-1} \bm{X}^{\top} \bm{Y} = (\bm{X}^{\top} \bm{X})^{-1} \bm{X}^{\top} (\bm{X} \Theta + \xi) = \Theta + (\bm{X}^{\top} \bm{X})^{-1} \bm{X}^{\top} \xi$. Thus $\mathbb{E}[\hat{\Theta}] = \Theta $ and \eqref{eqn:risk_first} simplifies to $\mathbb{E} \left [  \| ( \mathbb{E}[\hat{\Theta}] - \hat{\Theta})^{\top} X \|_F^2 \mid \bm{X} \right] $. 

We finish computing the risk by taking expectation over $\bm{X}$, and using $\mathbb{E}[\hat{\Theta}] - \hat{\Theta} = (\bm{X}^{\top} \bm{X})^{-1} \bm{X}^{\top} \xi$,
\begin{align}
 \mathbb{E}\left[\|Y- \hat{\Theta}^{\top} X \|_F^2\right]  &=    \mathbb{E}\left[  \mathcal{R}(\hat{\Theta} \mid \bm{X})  \right] \\
 &=  \mathbb{E}_{\bm{X}}\left[ \mathbb{E}_{X, \xi}\left[    \| ( \mathbb{E}[\hat{\Theta}] - \hat{\Theta})^{\top} X \|_F^2 \mid \bm{X} \right] \right] + \sigma^2_{\xi} \\
 &= \mathbb{E} \left[ \| \left( (\bm{X}^{\top} \bm{X})^{-1} \bm{X}^{\top} \xi \right)^{\top} X \|_F^2 \right] + \sigma^2_{\xi} \\
 &= \sigma^2_{\xi} \operatorname{tr} \left( \mathbb E[(\bm{X}^{\top} \bm{X})^{-1}] \, \sigma^2_X I \right) + \sigma^2_{\xi}. \label{eqn: risk_uncond}
\end{align}

By \cite[Lemma 2.3]{gupta1968some}, for $n > Nd + 1$, $\mathbb E[(\bm{X}^{\top} \bm{X})^{-1}] = \frac{Nd}{n - Nd - 1} I$. Putting this in \eqref{eqn: risk_uncond} completes the proof.
\end{proof}

\subsection{Proofs of Generalization with Approximate Symmetries} \label{app:proofs_approx}

In Definition~\ref{defn:aut_coarsen}, we construct the \emph{symmetry group of $G$ induced by the coarsening} $G'$ via a semidirect product,
\[\mathcal{G}_{G\rightarrow G'}= \Big( \mathcal{S}_{c_1} \times \mathcal{S}_{c_2} \ldots \times \mathcal{S}_{c_M}\Big) \rtimes \overline{\mathcal{A}}_{G'}. \]

We explain the construction in more details here. We first recall the definition of semidirect product. Given two groups $\mathcal{G}_1, \mathcal{G}_2$ and a group homomorphism $
\varphi: \mathcal{G}_2 \to \operatorname{aut}(\mathcal{G}_1)$, 
we can construct a new group $\mathcal{G}_1 
\rtimes_{
\varphi} \mathcal{G}_2$, called the semidirect product of $\mathcal{G}_1, \mathcal{G}_2$ with respect to $\varphi$ as follows:
\begin{enumerate}
    \item The underlying set is the Cartesian product $\mathcal{G}_1 \times \mathcal{G}_2$;
    \item The group operation $\circ$ is determined by the homomorphism $\varphi$, such that
    \begin{align*}
        \circ: (\mathcal{G}_1 \rtimes_{\varphi} \mathcal{G}_2) \times  (\mathcal{G}_1 \rtimes_{\varphi} \mathcal{G}_2) & \to  (\mathcal{G}_1 \rtimes_{\varphi} \mathcal{G}_2) \\
        (g_1, g_2) \circ (g_1', g_2') &= (g_1 \, \varphi_{g_2}(g_1'), g_2 g_2'), \quad  g_1, g_1' \in \mathcal{G}_1;  g_2, g_2' \in \mathcal{G}_2,
    \end{align*}
\end{enumerate}
Take $\mathcal{G}_1 = \Big( \mathcal{S}_{c_1} \times \mathcal{S}_{c_2} \ldots \times \mathcal{S}_{c_M}\Big), \mathcal{G}_2 = \overline{\mathcal{A}}_{G'}$. Note that $\mathcal{G}_1$ is a normal subgroup in $\mathcal{G}_{G \to G'}$; Namely, for all $s \in \mathcal{G}_{G \to G'}, g_1 \in \mathcal{G}_1 $, we have $s g_1 s^{-1} \in \mathcal{G}_1$. Thus, the map $g_1 \mapsto s g_1 s^{-1}$ is an automorphism of $\mathcal{G}_1$. In particular, the homomorphism $\varphi_{g_2}(g_1) = g_2 \, g_1 \, g_2^{-1}$ for $g_2 \in \mathcal{G}_2$ describes the action of $\mathcal{G}_2$ on $\mathcal{G}_1$ by conjugation \footnote{In our context, it is sensible to write $\varphi_{g_2}(g_1) = g_2 \, g_1 \, g_2^{-1}$ given that $\mathcal{G}_1, \mathcal{G}_2$ are originally inside a common group $\mathcal{S}_N$. Yet semidirect product applies to two arbitrary groups --- not necessarily inside a common group initially, where $\varphi_{g_2}(g_1)$ is an abstraction of $g_2 \, g_1 \, g_2^{-1}$.}. In the context of a graph $G$ with $N$ nodes and its coarsening $G'$ with $M$ clusters, the homomorphism $\varphi$ describes how across-cluster permutations in $G'$ act
on the original graph $G$.
Note that in the special case where $\mathcal{A}_{G'}$ acts transitively on the coarsened nodes, we recover the wreath product --- a special kind of semidirect product. Our construction of $\mathcal{G}_{G \to G'}$ can be seen as a natural generalization of the wreath product.

\smallskip
\CorThree*
\begin{proof}[Proof of Corollary~\ref{cor:BV_approx}]
We start by simplifying the mismatch term in Lemma~\ref{lem:BV},
\begin{align*}
 -2 \mathbb{E}\left[ \langle f^*(x),f^{\perp}_{\mathcal{G}_{G \to G'}}(x) \rangle \right] &= -2 \mathbb{E}\left[ \langle f^*(x) - f^*_{\mathcal{G}_{G \to G'}}(x) + f^*_{\mathcal{G}_{G \to G'}}(x),f^{\perp}_{\mathcal{G}_{G \to G'}}(x) \rangle \right] \\
 &= -2 \mathbb{E} \left[ \langle \underbrace{f^*(x) - f^*_{\mathcal{G}_{G \to G'}}(x)}_{ \mathcal{G}_L\text{-anti-symmetric part of $f^*$}}, \underbrace{f^{\perp}_{\mathcal{G}_{G \to G'}}(x)}_{ \mathcal{G}_L\text{-anti-symmetric part of $f$} } \rangle \right] \\
  &\ge -2 \, \|  f^* - f^*_{\mathcal{G}_{G \to G'}}\|_{\mu} \, \| f^{\perp}_{\mathcal{G}_{G \to G'}} \|_{\mu} \quad \text{(By Cauchy Schwarz Ineq.) } \\
 &\ge -2 \, \kappa(\epsilon) \,  \| f^{\perp}_{\mathcal{G}_{G \to G'}} \|_{\mu} . \text{ 
 \quad (By Definition \ref{defn:approx_equi_map} Approx. Equiv. Map) }
\end{align*}
Putting this together with the constraint term completes the proof.
\end{proof}

\CorFour*
\begin{proof}[Proof of Corollary~\ref{cor:BV_coarsen}]
It follows immediately from applying Theorem 13 in \cite{elesedy2021provably} to Corollary~\ref{cor:BV_approx} with $\mathcal{G} = \mathcal{G}_{G \to G'}$.
\end{proof}

\section{Examples}
\subsection{Example: Gaussian data model with approximate symmetries} \label{app:eg_31}

Example \ref{eg_S3} considers $\mathcal{G} = \mathcal{S}_3, \mathcal{G} = \mathcal{S}_2, \mathcal{X} = \R^3, \mathcal{Y} = \R^3$, and $x \sim \mathcal{N}(0, \sigma_X^2 I_d)$. The target function is linear, i.e., $f^*(x) = \Theta^{\top} x$ for some $\Theta \in \R^{3 \times 3}$. In other words, we are learning linear functions on a fixed graph domain with $3$ nodes. Suppose the target function is $\mathcal{S}_{2}$-equivariant such that it has the form
 \begin{equation}
 \Theta = \begin{bmatrix} a & b & c \\
 b & a & c \\
 d & d & e
 \end{bmatrix} , \quad a, b, c, d, e \in \R. \label{eqn:example_s2}    
 \end{equation}
 
 Now, we project $\Theta$ in \eqref{eqn:example_s2} to $\mathcal{S}_{3}$-equivariant space using the intertwined average \ref{eqn:intertwiner} with the canonical permutation representation of $\mathcal{S}_3$. A direct calculation yields
 \begin{align}
    \Psi_{\mathcal{S}_{3}}(\Theta) 
     &= \begin{bmatrix}
         \frac{1}{3}(2a + e) & \frac{1}{3}(b+c+d) & \frac{1}{3}(b+c+d)\\
         \frac{1}{3}(b+c+d) & \frac{1}{3}(2a + e)& \frac{1}{3}(b+c+d) \\ 
         \frac{1}{3}(b+c+d) & \frac{1}{3}(b+c+d) & \frac{1}{3}(2a + e) 
     \end{bmatrix}  \\
     \Psi^{\perp}_{\mathcal{S}_{3}}(\Theta) 
     &= \Theta - \Psi_{\mathcal{S}_{3}}(\Theta) 
 = \begin{bmatrix}
         \frac{1}{3}(a - e) & \frac{1}{3}(2b-c-d) & \frac{1}{3}(-b+2c-d)\\
         \frac{1}{3}(2b-c-d) &  \frac{1}{3}(a - e)& \frac{1}{3}(-b+2c-d) \\ 
         \frac{1}{3}(-b-c+2d) & \frac{1}{3}(-b-c+2d) & \frac{1}{3}(-2a + 2e) .
     \end{bmatrix} 
 \end{align}

Therefore, the bias term evaluates to
\begin{equation}
   -  \sigma_X^2 \, \| \Psi^{\perp}_{\mathcal{S}_{3}}(\Theta)   \|^2 = - \sigma_X^2 \left(\frac{2(a-e)^2}{3}+\frac{2(-2 b+c+d)^2}{9}+\frac{2(b-2 c+d)^2}{9}+\frac{2(b+c-2 d)^2}{9} \right). \label{eqn:example_bias}
\end{equation}

For the variance term, recall $\chi_{\psi_{\mathcal{S}_{3}}}, \chi_{\phi_{\mathcal{S}_3}}$ are both the canonical permutation representations of $\mathcal{S}_3$, we have
\begin{equation}
    \left(\chi_{\psi_{\mathcal{S}_{3}}} \mid \chi_{\phi_{\mathcal{S}_{3}}}\right) = \frac{1}{6} (3^2 + 1^2 + 1^2 + 1^2 + 0^2 + 0^2) = 2.
\end{equation}

Therefore, the variance term evaluates to 
\begin{equation}
   \sigma_{\xi}^2 \frac{N^2 -\left(\chi_{\psi|_{\mathcal{G}}} \mid \chi_{\psi|_{\mathcal{G}}}\right)}{n-N-1} = \sigma_{
\xi}^2 \frac{7}{n-4}. \label{eqn:example_variance}
\end{equation}

Putting \eqref{eqn:example_bias} and \eqref{eqn:example_variance} together yields the generalization gap of for the least square estimator $f_{\hat{\Theta}}$ compared to its $\mathcal{S}_{3}$-equivariant version $f_{\Psi_{\mathcal{S}_{3}}(\hat{\Theta})}$.

As a comparison, when choosing the symmetry group of the target function $\mathcal{G} = \mathcal{S}_{2}$, the bias vanishes and note that  $\left(\chi_{\psi_{\mathcal{S}_{2}}} \mid \chi_{\phi_{\mathcal{S}_{2}}}\right) = \frac{1}{2} (3^2+1^2) = 5 $, so generalization gap is
\begin{equation}
   \mathbb{E}\left[\Delta\left(f_{\hat{\Theta}}, f_{\Psi_{\mathcal{S}_2}(\hat{\Theta})}\right)\right]=  \sigma_{\xi}^2 \frac{4}{n-4}. \label{eqn:S2_gap}
\end{equation}

We see that choosing $\mathcal{G} = \mathcal{S}_{3}$ is better if $a \approx e, b \approx c \approx d$ (i.e., $f^*$ is approximately $\mathcal{S}_{3}$-invariant) and the training sample size $n$ small, whereas $\mathcal{S}_{2}$ is better vice versa. This analysis illustrates the advantage of choosing a (suitably) larger symmetry group to induce a smaller hypothesis class when learning with limited data, and introduce useful inductive bias when the target function is approximately symmetric with respect to a larger group. We further illustrate our theoretical analysis via simulations, with details and results shown in Figure \ref{fig:simulation}. 
\begin{figure}[t]
  \centering
  \includegraphics[width=0.95\textwidth]{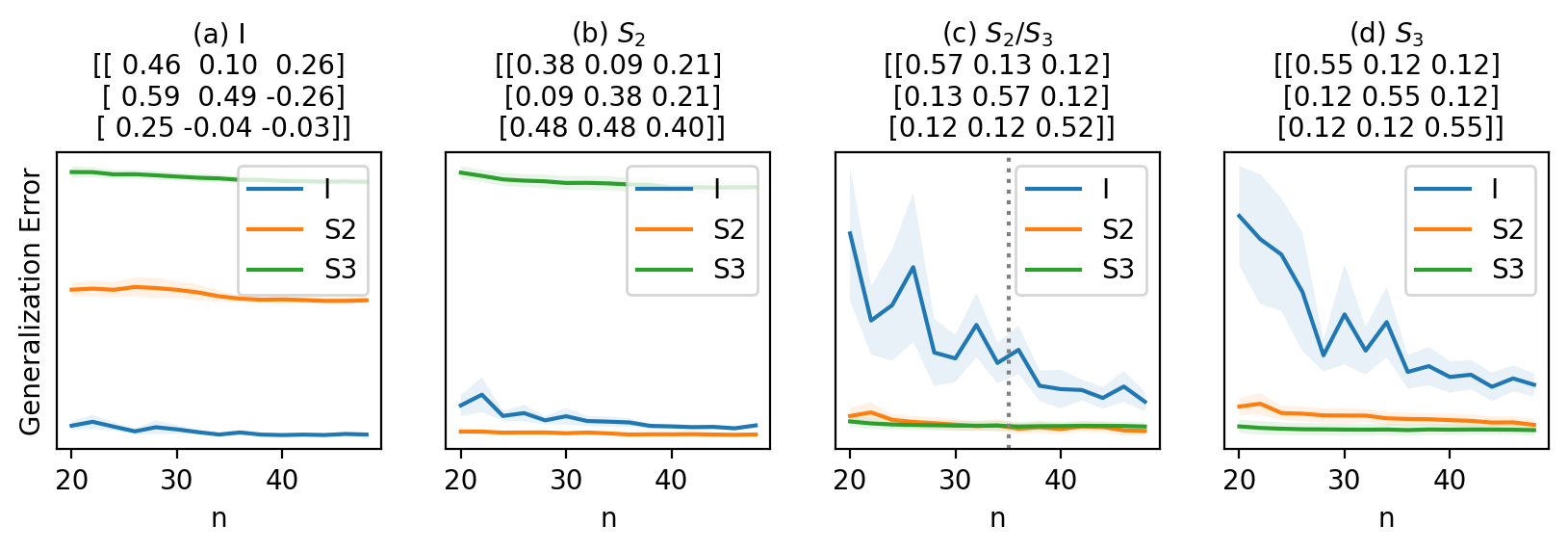} 
  \caption{Choosing the symmetry group corresponding to the target function usually yields the best generalization ((a), (b), (d)), but not always: when the number of training data $n$ is small and the target function $f$ is approximately equivariant with respect to a larger group, choosing the larger symmetry group could yield further generalization gain, as shown in (c) empirically. The dashed gray vertical line highlights the theoretical threshold $n^* \approx 35$, before which using $\mathcal{S}_{3}$ yields better generalization than $\mathcal{S}_{2}$, validating our theoretical analysis. We set $\sigma_X^2 = 1, \sigma_{\xi}^2 = \frac{1}{64}$, conduct $10$ random runs and compute the generalization error based on $300$ test points. We obtain the estimators via stochastic gradient descent, and enforce the symmetry via tying weights. The titles of each subplot indicate the symmetry of the target function, and display the target function values. }
  \label{fig:simulation}
\end{figure}

\subsection{Example: Approximately Equivariant Mapping on a Geometric Graph} \label{app.geometric_graph}

In this section, we illustrate a construction of an approximately equivariant mapping. We focus on a version of Definition \ref{defn:aut_coarsen} that does not take to account the symmetries of $G'$. Namely, we consider a definition of the approximate symmetries as  
\[\mathcal{G}_{G\rightarrow G'}:= \ \mathcal{S}_{c_1} \times \mathcal{S}_{c_2} \ldots \times \mathcal{S}_{c_M} \subset \mathcal{S}_{N}.\]
Equivalently, we restrict the analysis to coarsening graphs $G'$ that are asymmetric. 

\textbf{Background from graphon-signal analysis.} To support our construction, we cite some definitions and results from 
\cite{levie2023graphonsignal}.

\begin{definition}
Let $r>0$.
    The graphon-signal space with signals bounded by $r$ is $\mathcal{WL}_r:=\mathcal{W}\times L_r^{\infty}[0,1]$, where $L_r^{\infty}[0,1]$ is the ball of radius $r$ in $L^{\infty}[0,1]$. The distance in $\mathcal{WL}_r$ is defined for $(W,s),(V,g)\in\mathcal{WL}_r$ by
    \[d_{\square}\big((W,s),(V,g)\big):=\norm{(W,s)-(V,g)}_{\square} := \norm{W-V}_{\square}+\norm{s-g}_1.\]
    Moreover,
    \[\delta_{\square}\big((W,s),(V,g)\big)=\inf_{\phi}d_{\square}\big((W,s),(V^{\phi},g^{\phi})\big),\]
    where $g^{\phi}(x)=g(\phi(x))$ and $\phi$ is a measure preserving bijection.
\end{definition}
Any graph-signal induces a graphon signal in the natural way, as in Definition \ref{defn:graphon}. The cut norm and distance between two graph-signals is defined to be the cut norm and distance between the two induced graphon-siganl respectively. Similarly, the $L_1$
 distance between a signal $q$ on a graph and a signal $s$ on $[0,1]$ is defined to be the $L_1$ distance between the induced signal from $q$ and $s$. The supremum in the definition of cut distance between two induced graphon-signals is realized by some measure preserving bijection. 

\textbf{Sampling graphon-signals.}
The following construction is from \cite[Section 3.4]{levie2023graphonsignal}. 
Let $\Lambda=(\lambda_1,\ldots\lambda_N)\in [0,1]^N$ be $N$ independent uniform random samples from $[0,1]$, and $(W,s)\in\mathcal{WL}_r$. 
 We define the \emph{random weighted graph} $W(\Lambda)$ as the weighted graph with $N$ nodes and edge weight $w_{i,j} = W(\lambda_i,\lambda_j)$ between node $i$ and node $j$. We similarly define the \emph{random sampled signal} $s(\Lambda)$ with value $s_i=s(\lambda_i)$ at each node $i$. Note that $W(\Lambda)$ and $s(\Lambda)$ share the sample points $\Lambda$. We then define a random simple graph as follows. We treat each $w_{i,j} = W(\lambda_i,\lambda_j)$ as the parameter of a Bernoulli variable $e_{i,j}$, where $\mathbb{P}(e_{i,j}=1)=w_{i,j}$ and $\mathbb{P}(e_{i,j}=0)=1-w_{i,j}$. We define the \emph{random simple graph} $\mathbb{G}(W,\Lambda)$ as the simple graph with an edge between each node $i$ and node $j$ if and only if $e_{i,j}=1$. The following theorem is \cite[Theorem 3.6]{levie2023graphonsignal}
\begin{theoremRON}[Sampling lemma for graphon-signals]
    \label{lem:second-sampling-garphon-signal00}
   Let $r>1$. There exists a constant $N_0>0$ that depends on $r$, such that for every $N \geq  N_0$, every  $(W,s)\in \mathcal{WL}_r$,  and for  $\Lambda=(\lambda_1,\ldots\lambda_N)\in [0,1]^N$  independent uniform random samples from $[0,1]$, we have
\begin{equation}
\label{eq:combined-bound2}
\mathbb{E}\bigg(\delta_{\square}\Big(\big(W,s\big),\big(\mathbb{G}(W,\Lambda),s(\Lambda)\big)\Big)\bigg)  < \frac{15}{\sqrt{\log(N)}}.
\end{equation}
\end{theoremRON}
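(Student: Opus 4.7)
The plan is to mirror the classical graphon sampling argument of Lovász, adapted to carry the signal $s$ alongside the graphon $W$. The key decomposition is a triangle inequality
\begin{equation*}
\delta_{\square}\!\big((W,s),(\mathbb{G}(W,\Lambda),s(\Lambda))\big) \;\leq\; \delta_{\square}\!\big((W,s),(W(\Lambda),s(\Lambda))\big) \;+\; \delta_{\square}\!\big((W(\Lambda),s(\Lambda)),(\mathbb{G}(W,\Lambda),s(\Lambda))\big),
\end{equation*}
so the task splits into (i) showing that the \emph{weighted} sampled graph-signal $(W(\Lambda),s(\Lambda))$ is close in cut distance to $(W,s)$, and (ii) showing that replacing the weights by Bernoulli rounding costs little. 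The signal part of (ii) is free, because both graph-signals share the same $s(\Lambda)$; only the $W$-component is re-randomized.

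For step (i), I would first use the weak regularity lemma for graphon-signals (the analogue of Frieze--Kannan extended to pairs $(W,s)$, which follows by partition refinement minimizing the combined cut-plus-$L^1$ norm) to approximate $(W,s)$ by a step graphon-signal $(W_k,s_k)$ with $k=2^{O(1/\epsilon^2)}$ steps and error $\epsilon$ in $\|\cdot\|_{\square}$. Sampling $\Lambda$ then induces cluster sizes concentrated around $N/k$, and a McDiarmid/Hoeffding argument applied blockwise bounds both $\|W-W(\Lambda)\|_{\square}$ and $\|s-s(\Lambda)\|_1$ up to an appropriate measure-preserving rearrangement of $[0,1]$ that matches sampled points to their blocks. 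Choosing $k$ as a function of $N$ so that the partition error and the concentration error balance is what produces the $1/\sqrt{\log N}$ rate: the partition has $2^{O(1/\epsilon^2)}$ parts, so requiring that each part receives $\Omega(\log k)$ samples forces $\epsilon \sim 1/\sqrt{\log N}$.

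For step (ii), conditionally on $\Lambda$, the edges $e_{i,j}$ are independent Bernoullis with means $W(\lambda_i,\lambda_j)$. A standard concentration bound on the cut norm of a random $\{0,1\}$-matrix with independent entries (e.g., the version due to Lovász--Szegedy proven by an $\epsilon$-net over pairs $(S,T)$ of subsets of $[N]$ combined with Hoeffding's inequality) shows that
\begin{equation*}
\|W(\Lambda)-\mathbb{G}(W,\Lambda)\|_{\square} \;=\; O\!\left(1/\sqrt{N}\right)
\end{equation*}
with high probability, which is dominated by the rate from step (i). Combining these two bounds, taking expectation, and absorbing absolute constants into the numerator yields the stated $15/\sqrt{\log N}$ bound for all $N\geq N_0(r)$.

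The main obstacle I anticipate is step (i): specifically, proving the graphon-signal weak regularity lemma with the correct dependence on $r$ (since $s$ is only bounded, not $[0,1]$-valued), and then keeping the measure-preserving bijection used to align $(W,s)$ with $(W(\Lambda),s(\Lambda))$ \emph{consistent} across the $W$ and $s$ coordinates, since $\delta_{\square}$ on $\mathcal{WL}_r$ is defined by a single bijection $\phi$ acting jointly on both. A careful bookkeeping of the joint alignment, together with the observation that the $r$-bound on $s$ only inflates the $L^1$ concentration constants by a factor of $r$, should resolve this.
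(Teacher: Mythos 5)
This statement is not proved in the paper at all: it is imported verbatim as ``Theorem 3.6 from \cite{levie2023graphonsignal}'' and used as a black box in Appendix~B, so there is no in-paper proof for your proposal to be compared against. That said, the strategy you describe is indeed the one used in the cited reference (and in the classical Lov\'asz--Szegedy / Borgs et al.\ sampling lemmas on which it is modeled): triangle inequality through the weighted sampled graph-signal, graphon-signal weak regularity, blockwise concentration, and a Bernoulli-rounding step bounded by $O(1/\sqrt{N})$. Your rate analysis $\epsilon \sim 1/\sqrt{\log N}$ from the $2^{O(1/\epsilon^2)}$ step count is also the right mechanism for where the $1/\sqrt{\log N}$ comes from.

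There is, however, a genuine gap in your step (i). You collapse the comparison of $(W,s)$ to $(W(\Lambda),s(\Lambda))$ into ``regularity plus blockwise McDiarmid/Hoeffding,'' but the actual chain has three terms after introducing the step graphon-signal $(W_k,s_k)$:
\begin{equation*}
\delta_{\square}\big((W,s),(W_k,s_k)\big)+\delta_{\square}\big((W_k,s_k),(W_k(\Lambda),s_k(\Lambda))\big)+\delta_{\square}\big((W_k(\Lambda),s_k(\Lambda)),(W(\Lambda),s(\Lambda))\big).
\end{equation*}
A McDiarmid/Hoeffding argument on cell occupancies controls only the \emph{middle} term, i.e.\ sampling a \emph{fixed} step function against its own empirical cell sizes. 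The third term requires bounding $\|W(\Lambda)-W_k(\Lambda)\|_{\square}=\|(W-W_k)(\Lambda)\|_{\square}$ in terms of $\|W-W_k\|_{\square}$, and this is precisely the ``second sampling lemma'' of Alon--Fern\'andez de la Vega--Kannan--Karpinski (used as Lemma~10.11 in Lov\'asz's book and adapted to graphon-signals in the cited reference). Its proof is not a blockwise Hoeffding bound; it is a ghost-sample/symmetrization argument that yields an $O(k^{-1/4})$-type error and a sub-Gaussian tail $1-4e^{-\sqrt{k}/10}$, and it is the genuinely hard part of the proof. Your sketch does not name this step or supply a substitute for it, so as written the argument does not close. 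The signal coordinate of the same third term, $\|s(\Lambda)-s_k(\Lambda)\|_1$, is fine and does reduce to a one-dimensional Hoeffding bound, and your concerns about keeping a single measure-preserving bijection for both coordinates and tracking the $r$-dependence are real but comparatively minor once the second sampling lemma is in place.
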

By Markov's inequality and (\ref{eq:combined-bound2}),  for any $0<p<1$, there is an event of probability $1-p$ (regarding the choice of $\Lambda$) in which 
 \begin{equation}
  \label{eq:sampling}  \delta_{\square}\Big(\big(W,s\big),\big(\mathbb{G}(W,\Lambda),s(\Lambda)\big)\Big)<\frac{15}{p\sqrt{\log(N)}}. 
 \end{equation}
 
\textbf{Stability to deformations of mappings on geometric graphs.} Let $\mathcal{M}$ be a metric space with an atomless standard probability measure defined over the Borel sets (up to completion of the measure). Such a probability space is equivalent to the standard probabiltiy space $[0,1]$ with Lebesgue measure. Namely, there are co-null sets  $A\subset \mathcal{M}$ and $B\subset [0,1]$, and a measure preserving bijection $\phi:A\rightarrow B$.
Hence, graphon analysis applied as-is when replacing the domain $[0,1]$ with $\mathcal{M}$. Suppose that we are interested in a target function $f_{\mathcal{M}}:L^1(\mathcal{M})\rightarrow L^1(\mathcal{M})$ that is stable to deformations in the following sense.
\begin{definition}
\label{def:deformation}
Let $\epsilon>0$.
A measurable bijection  $\nu:\mathcal{M}\rightarrow\mathcal{M}$ is called a \emph{deformation up to} 
 $\epsilon$, if there exists an event $B_{\epsilon}\subset \mathcal{M}$ with probability greater than $1-\epsilon$ such that for every $x\in B_{\epsilon}$
\[d_{\mathcal{M}}\big(\nu(x),x\big) < \epsilon.\] 
 
    The mapping $f_{\mathcal{M}}:L^1(\mathcal{M})\rightarrow L^1(\mathcal{M})$ is called \emph{stable to deformations} with \emph{stability constant} $C$, if for any deformation $\nu$ up to $\epsilon$,
and every $s\in L^1(\mathcal{M})$,
we have
\[\|f_{\mathcal{M}}(s) - f_{\mathcal{M}}(s\circ \nu)\circ \nu^{-1}\|_1 < C\epsilon.\]
\end{definition}

Suppose that we observe a discretized version of the domain $\mathcal{M}$, defined as follows. There is a graphon $W:\mathcal{M}^2\rightarrow [0,1]$ defined as
\begin{equation}
    \label{eq:Ggraphon}
    W(x,y) = r\big(d(x,y)\big),
\end{equation}
where $r:\mathbb{R}_+\rightarrow[0,1]$ is a decreasing function with support $[0,\rho]$. 
Instead of observing $W$, we observe a graph $G=\mathbb{G}(W,\Lambda)$ with node set $[N]$, sampled from $W$ on the random independent points $\Lambda = \{\lambda_n\}_{n=1}^N\subset \mathcal{M}$ as above.
Suppose moreover that any graph signal is sampled from a signal in $L^1(\mathcal{M})$, on the same random points $\Lambda$, as above. Suppose that the target $f_{\mathcal{M}}$ on the continuous domain is well approximated by some mapping $f^*:L^1[N]\rightarrow L^1[N]$ on the discrete domain in the following sense. 
For every $s\in L^1(\mathcal{M})$, let $s_G$ be the graph signal sampled on the random samples $\{\lambda_n\}_n$. Then there is an event of high probability such that
\[\|f^*s_G - \{\big(f_{\mathcal{M}}(s)\big)(x_n)\}_n\|_1 < e\]
for some small $e$.
We hence consider $f^*$ as the target mapping of the learning problem. 
One example of such a scenario is when there exists some Lipschitz continuous mapping $\Theta:\mathcal{WL}_r\rightarrow\mathcal{WL}_r$ with Lipschitz constant $L$, such that $f_{\mathcal{M}}=\Theta(W,\cdot)$ and $f^*=\Theta(G,\cdot)$. Indeed, by (\ref{eq:sampling}), for some $p$ as small as we like, there is an event of probability $1-p$ in which, up to a measure preserving bijection,
\begin{align}\norm{f_{\mathcal{M}}s-f^*s_G}_1 &\leq
\delta_{\square}\Big(\big(W,f_{\mathcal{M}}s\big),\big(G,f^*s_G\big)\Big) \nonumber \\ 
&\leq L\delta_{\square}\Big(\big(W,s\big),\big(G,s_G\big)\Big)<\frac{15L}{p\sqrt{\log(N)}}=e. \label{eqn: graphon_signal_dist_e}
\end{align}
A concrete example is when $\Theta$ is a message passing neural network (MPNN) with Lipschitz continuous message and update functions, and normalized sum aggregation \cite[Theorem 4.1]{levie2023graphonsignal}. 

Let $G'$ be a graph that coarsens $G$ up to error $\epsilon$.
In the same event as above, by (\ref{eq:sampling}), up to a measure preserving bijection,
\begin{equation}
    \label{eq:bound1}
    \delta_{\square}(W_{G'},W)\leq \delta_{\square}(W_{G'},W_G) + \delta_{\square}(W_{G},W) \leq \epsilon + e = u.
\end{equation}

We next show an approximation property that we state here informally: Since $W(x,y) \approx 0$ for $x$ away from $y$, we must have $W_{G'}(x,y) \approx 0$ as well for a set of high measure. Otherwise, $\delta_{\square}(W_{G'},W)$ cannot be small. By this, any approximate symmetry of $G$ is a small deformation, and, hence, $f^*$ is an approximately equivariant mapping. 

\textbf{Equivariant mappings on geometric graphs.} In the following, we construct a scenario in which $f^*$ can be shown to be approximately equivariant in a 
restricted sense. For simplicity, we assume $f^*(s_G) \in L^2[0,1]$, and restrict to the case  $r=\mathbbm{1}_{[0,\rho]}$ in the geometric graphon $W$ of (\ref{eq:Ggraphon}). Denote the induced graphon $W_{G'}=T$. 
Given $h>0$, define the \emph{$h$-diagonal}
\[d_{h} = \{(x,y)\in\mathcal{M}^2\ |\ d_{\mathcal{M}}(x,y)\leq h\}.\]
In the following, all distances are assumed to be up to the best measure preserving bijection.

If there is a domain $S'\times T'\in \mathcal{M}^2$ outside the $\rho$-diagonal 
in which $T(x,y)>c$ for some $c>0$, by reverse triangle inequality, we must have
\[\|W-T\|_{\square} \geq\int_{S'}\int_{T'} T(x,y)dydx = c\mu(S')\mu(T').\]
Hence, since by (\ref{eq:bound1}), $\norm{W-T}_{\square}< u$, for every $S'\times T'$ that does not intersect $d_{\rho}$, we must have
\[\int_{S'}\int_{T'}T(x,y) dy dx \leq u.\]
In other words, for any two sets $S,T$ with distance more than $\rho$ ($\inf_{s\in S,t\in T} d_{\mu}(s,t)>\rho$), we have
\[\int_{S}\int_{T}T(x,y) dy dx \leq u.\]
This formalizes the statement  ``$W_{G'}(x,y)\approx 0$ for $x$ away from $y$'' from above.

Next, we develop the analysis for the special case  $\mathcal{M}=[0,1]$ with the standard metric and Lebesgue probability measure. We note that the analysis can be extended to  $\mathcal{M}=[0,1]^D$ for a general dimension $D\in\mathbb{N}$. For every $z\in[0,1]$, we have
\[\int_{[z+\rho/\sqrt{2},1]}\int_{[0,z-\rho/\sqrt{2}]}T(x,y) dy dx \leq u,\]
and
\[\int_{[0,z-\rho/\sqrt{2}]}\int_{[z+\rho/\sqrt{2},1]}T(x,y) dy dx \leq u.\]

Let $\nu>0$. We take a grid $\{x_j\}\in[0,1]$ of spacing $\sqrt{2}\nu$. The sets 
\[\bigcup_j[x_j+\rho/\sqrt{2},1]\times [0,x_j-\rho/\sqrt{2}] ~, \quad \bigcup_j[0, x_j-\rho/\sqrt{2}]\times [x_j+\rho/\sqrt{2},1]\]
 cover $d_{\nu}^c$ (where $d_{\nu}^c$ is the complement of $d_{\nu}$). Hence,
\[\iint_{d_{\nu}^c}T(x,y)dydx \leq \sum_{j=1}^{1/\sqrt{2}\nu}\int_{[x_j+\rho/\sqrt{2},1]}\int_{[0,x_j-\rho/\sqrt{2}]}T(x,y) dy dx\]
\[+\sum_{j=1}^{1/\sqrt{2}\nu}\int_{[0,x_j-\rho/\sqrt{2}]}\int_{[x_j+\rho/\sqrt{2},1]}T(x,y) dy dx \]
\[\leq \frac{2}{\sqrt{2}\nu}u.\]
We take $\frac{2}{\sqrt{2}\nu}u = t$, for $u\ll t \ll 1$, namely, $\nu = \sqrt{2} \frac{u}{t}$. 
For example, we may take $t=\sqrt{2} u^{1/3}$, and $\nu=u^{2/3}$, assuming that $\rho<u^{{1/3}}$. Hence, we have
\[\iint_{d_{u^{2/3}}^c}T(x,y) \leq \sqrt{2}u^{1/3}.\]
To conclude, the probability of having an edge between  nodes $\lambda_i$ and $\lambda_j$ in $\overline{G'}_N$ which are further away than $u^{2/3}$, namely, $d_{\mathcal{M}}(\lambda_i,\lambda_j)>u^{2/3}$, is less than $\sqrt{2}u^{1/3}$. 

Suppose that $G'$ is asymmetric. This means that symmetries of $\mathcal{G}_{G\rightarrow G'}$ can only permute between nodes that have an edge between them in the blown-up graph $\overline{G'}_N$. The probability of having an edge between nodes further away than $u^{2/3}$ is less than $\sqrt{2}u^{1/3}$. Hence, a symmetry in $\mathcal{G}_{G\rightarrow G'}$ can be seen as a small deformation, where for each node $\lambda_i$ and a random uniform $g\in \mathcal{G}_{G\rightarrow G'}$, the probability that $\lambda_i$ it is mapped by $g$ to a node of distance less than $u^{2/3}$ is more than $1-\sqrt{2}u^{1/3}$.

 Any symmetry $g$ in $\mathcal{G}_{G\rightarrow G'}$ induces a measure preserving bijection $\nu$ in $\mathcal{M}=[0,1]$, by permuting the intervals of the partition $\mathcal{P}_N$ of Definition \ref{defn:graphon}. As a result, the set of points that are mapped further away than $u^{2/3}$ under $\nu$ has probability upper bounded by $\sqrt{2}u^{1/3}$, and symmetries in $\mathcal{G}_{G\rightarrow G'}$ can be seen as a small deformation $\nu$ according to Definition \ref{def:deformation} (in high probability). This means that, for any $g \in \mathcal{G}_{G \to G'}$,  
\[\|f_{\mathcal{M}}(s) - f_{\mathcal{M}}(s\circ g)\circ g^{-1}\|_1 < C\sqrt{2}u^{1/3},\]
so by the triangle inequality, combining with equation \ref{eqn: graphon_signal_dist_e}, we have 
\begin{equation}
\label{eq:appE1}
    \|f^*(s_G) - g^{-1}f^*(gs_G)\|_1 < 2e + C\sqrt{2}u^{1/3}=\epsilon'.
\end{equation}

Equation (\ref{eq:appE1}) leads to
\begin{align}
\label{eq:appE2}
\|f^*(s_G)-\mathcal{Q}_{\mathcal{G}_{G\rightarrow G'}}(f^*)(s_G)\|_1 & = \|f^*(s_G) -\frac{1}{\abs{\mathcal{G}_{G\rightarrow G'}}}\sum_{g\in \mathcal{G}_{G\rightarrow G'}}  g^{-1}f^*(gs_G)\|_1 \\
 &  \leq \frac{1}{\abs{\mathcal{G}_{G\rightarrow G'}}}\sum_{g\in \mathcal{G}_{G\rightarrow G'}}\|f^*(s_G) - g^{-1}f^*(gs_G)\|_1 < \epsilon'.
\end{align}
Since for any $q\in L^2[0,1]\cap L^{\infty}[0,1]$ we have $\norm{q}_2^2\leq \norm{q}_{\infty}\norm{q}_1$, we can bound
\begin{align}
\|f^*(s_G)-\mathcal{Q}_{\mathcal{G}_{G\rightarrow G'}}(f^*)(s_G)\|_2 &< \sqrt{\|f^*(s_G)-\mathcal{Q}_{\mathcal{G}_{G\rightarrow G'}}(f^*)(s_G)\|_{\infty}} \sqrt{\epsilon'} \nonumber\\
&< \sqrt{\|f^*(s_G) \|_{\infty} + \| \mathcal{Q}_{\mathcal{G}_{G\rightarrow G'}}(f^*)(s_G)\|_{\infty}} \sqrt{\epsilon'} \nonumber\\
&< \sqrt{2\norm{f^*(s_G)}_{\infty}}\sqrt{\epsilon'},   
\end{align}
where the last inequality follows from $\|\mathcal{Q}_{\mathcal{G}_{G\rightarrow G'}}(f^*)(s_G)\|_{\infty} \leq \| f^*(s_G) \|_{\infty}$.

Denote $\norm{f^*}_{\infty} := \int \norm{f^*(s_G)}_{\infty} d\mu(s_G)$, and suppose that $\norm{f^*}_{\infty}$ is finite. 
Hence, if $\mu$ is a probability measure, we have
\[\norm{f^*-\mathcal{Q}_{\mathcal{G}_{G\rightarrow G'}}(f^*)}_{\mu}< \sqrt{2\norm{f^*}_{\infty}}\sqrt{\epsilon'}.\]
This shows an example of approximately equivariant mapping based on random geometric graph, where the approximation rate is also a function of the size of the graph $N$, and goes to zero as $N\rightarrow\infty$ and $\epsilon\rightarrow 0$.

In future work, we will extend this example to more general metric space $\mathcal{M}$ and to non-trivial symmetry groups $\overline{\mathcal{A}}_{G'}$. Intuitively, most random geometric graphs are ``close to asymmetric''.  This means that for ``most'' $G'$, the symmetries of  $\overline{\mathcal{A}}_{G'}$ can only permute between nodes connected by an edge, and so are the symmetries of $\mathcal{G}_{G\rightarrow G'}$. For this, we need to extend Definition \ref{def:deformation} by treating $G'$ probabilistically.

\section{Experiment Details}\label{app.exp_details}

In this section, we provide additional details of our experiments. We first give a brief introduction of standard graph neural networks (Section~\ref{app.gnns}), followed by in-depth explanations of our applications in image inpainting (Section~\ref{app.image}), traffic flow prediction (Section~\ref{app:traffic_more}), and  human pose estimation (Section~\ref{app.exp_human}). All experiments were conducted on a server with 256 GB RAM and 4 NVIDIA RTX A5000 GPU cards.

\subsection{Graph Neural Networks (GNNs)} \label{app.gnns}

We consider standard message-passing graph neural networks (MPNNs) \cite{Kip+2017, Vel+2018, Gil+2017} defined as follows. A $L$-layer MPNN maps input $X \in \R^{N \times d}$ to output $Y \in \R^{N \times k}$ following an iterative scheme: At initialization, $\mathbf{h}^{(0)} = X$; At each iteration $l$, the embedding for node $i$ is updated to
\begin{equation}
    \mathbf{h}^{(l)}_i = \phi \left( \mathbf{h}^{(l-1)}_i,   \sum_{j \in \mathcal{N}(i)} \psi \left(\mathbf{h}^{(l-1)}_i, \mathbf{h}^{(l-1)}_j, A_{[i,j]} \right)\right), \label{eqn: MPNN}
\end{equation} 
where $\phi, \psi$ are the update and message functions, $\mathcal{N}(i)$ denotes the neighbors of node $i$, and $A_{[i,j]}$ represents the $(i,j)$-edge weight. MPNNs typically have two key design features: (1) $\phi, \psi$ are \emph{shared} across all nodes in the graph, typically chosen to be a linear transformation or a multi-layer perceptions (MLPs), known as \emph{global weight sharing}; (2) the graph $A$ is used for (spatial) convolution. 

\subsection{Application: Image Inpainting} \label{app.image}

We provide additional details of the data, model, and optimization procedure used in the experiment.

\textbf{Data.} We consider the datasets MNIST \cite{lecun1998mnist} and FashionMNIST \cite{xiao2017fashion}. For each dataset, we take $100$ training samples and $1000$ test samples via stratified random sampling. The input and output graph signals are $(m_i \odot x_i, x_i)$ ($\odot$ is entrywise multiplication). Here $x_i \in \R^{28 \times 28} \equiv \R^{784}$ denotes the image signals and $m_i$ denotes a random mask (size $14 \times 14$ for MNIST and ${20 \times 20}$ for FashionMNIST). For experiments with reflection symmetry on the coarsened graph, we further transform each image in the FashionMNIST subset using horizontal flip with probability $0.5$ (FashionMNIST+hflip). We remark that it is possible to model the dihedral group $\mathcal{D}_4$ on the square as the coarsened graph symmetry, yet we only consider the reflection symmetry $\mathcal{S}_2$ to simplify the parameterization (since it is Abelien while $\mathcal{D}_4$ is not) while capturing the most relevant symmetries.

\textbf{Model.} We consider a $2$-layer $\mathcal{G}_{G\rightarrow G'}$-equivariant networks \gnet{}, a composition of $f_{\text{out}} \circ \texttt{ReLU} \circ f_{\text{in}}$, where $f_{\text{in}}, f_{\text{out}}$ denote the input/output equivariant linear layers. The input and output feature dimension is $1$ (since the signals are grayscale images), and the hidden dimension is set as $28$. For comparison, we also consider a simple $1$-layer $\mathcal{G}_{G\rightarrow G'}$-equivariant networks \gnet{}. 

We train the models with ADAM (learning rate $0.01$, no weight decay, at most $1000$ epochs). We report the best test accuracy at the model checkpoint selected by the best validation accuracy (with a $80/20$ training-validation split). 

We supplement Figure~\ref{fig:bv_tradeoff_clean} with Table ~\ref{tab:exp_inpainting} for further numerical details.

\begin{figure}[htb!]
\resizebox{1.\textwidth}{!}{ 	\renewcommand{\arraystretch}{1.05}
\begin{tabular}{lccccccc}
\toprule
\textbf{MSE} ($\times 1e^{-2}$) $\downarrow$	 & $\mathcal{S}_{28^2}
\color{lightgray}
= \mathcal S_n
\color{black}$
&$(\mathcal{S}_{14^2})^4
\color{lightgray}
= \mathcal (S_{n/4})^4
\color{black}$
&	$(\mathcal{S}_{7^2})^{16}	
\color{lightgray}
= \mathcal (S_{n/16})^{16}
\color{black}$
&	$(\mathcal{S}_{4^2})^{49}
\color{lightgray}
= \mathcal (S_{n/49})^{49}
\color{black}$
&	$(\mathcal{S}_{2^2})^{196}
\color{lightgray}
= \mathcal (S_{n/196})^{196}
\color{black}$
& Trivial
$\color{lightgray}
= \mathcal (S_{n/784})^{784}
\color{black}$
\\ 
 \midrule
MNIST & $41.56 \pm 0.16$ &	$40.53 \pm 0.26$ &	$36.06 \pm 0.24$ &	$34.68 \pm 0.5$ &	\bm{$33.67 \pm 0.07$} &	$33.92 \pm 0.04$  \\

Fashion & $23.48 \pm 0.14$ &	$22.26 \pm 0.02$ &	$16.94 \pm 0.08$ &	$15.16 \pm 0.1$ &	\bm{$14.47 \pm 0.11$} &	$14.75 \pm 0.11$  \\
\bottomrule
\end{tabular} }
\renewcommand{\figurename}{Table}
\caption{Image inpainting using \gnet{} with different levels of coarsening. Table shows mean squared error (MSE) across $3$ runs on the test set, supplementing Figure \ref{fig:bv_tradeoff_clean} (Left, blue curves). 
\label{tab:exp_inpainting}}
\end{figure}

\subsection{Application: Traffic Flow Prediction} \label{app:traffic_more}

\textbf{Data.} The METR-LA traffic dataset, \cite{li2017diffusion}, contains traffic information collected from 207 sensors in the highway of Los Angeles County from Mar 1st 2012 to Jun 30th 2012 \cite{jagadish2014big}. We use the same traffic data normalization and $70/10/20$ train/validation/test data split as \cite{li2017diffusion}. We consider two different traffic graphs constructed from the pairwise road network distance matrix: (1) the sensor graph $G$ introduced in \cite{li2017diffusion} based on applying a thresholded Gaussian kernel (degree distribution in Figure \ref{fig:deg_dense}); (2) the sparser graph $G_s$ based on applying the binary mask where the $(i,j)$ entry is nonzero if and only if nodes $i,j$ lie on the same highway (degree distribution in Figure \ref{fig:deg_sp}). We construct the second variant to more faithfully model the geometry of the highway, illustrated in Figure \ref{fig:faithful}. 

\textbf{Graph coarsening. } We choose $2$ clusters based on highway intersection and flow direction, indicated by colors (Figure \ref{fig:graph_cluster_2} (b)), and $9$ clusters based on highway labels (Figure \ref{fig:graph_cluster_9} (c)). 

\begin{figure}[htb!]
\begin{subfigure}[b]{0.3\textwidth}
       \includegraphics[width=\textwidth]{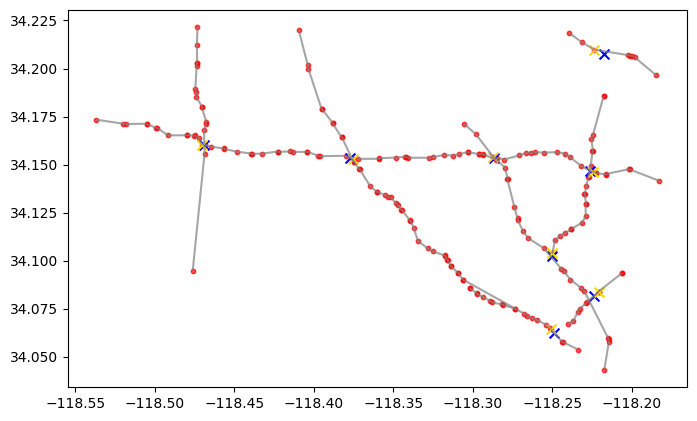}
         \caption[]{Our faithful traffic graph} 
         \label{fig:faithful}
     \end{subfigure}
     \hfill
     \begin{subfigure}[b]{0.3\textwidth}
     \includegraphics[width=\textwidth]{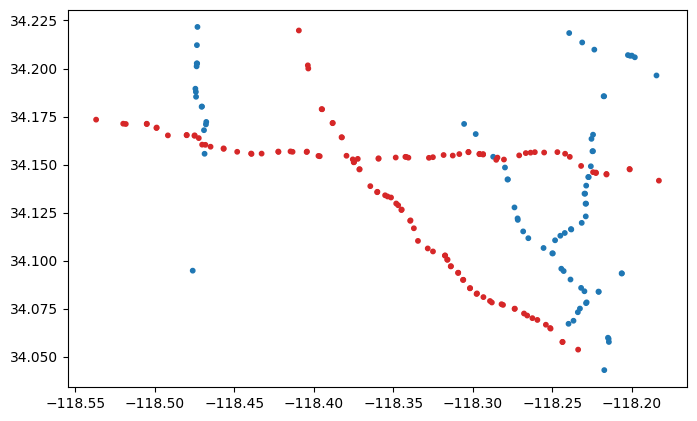}
         \caption[]{Graph clustering (2 clusters)} 
         \label{fig:graph_cluster_2}
     \end{subfigure}
     \hfill
     \begin{subfigure}[b]{0.3\textwidth}
     \includegraphics[width=\textwidth]{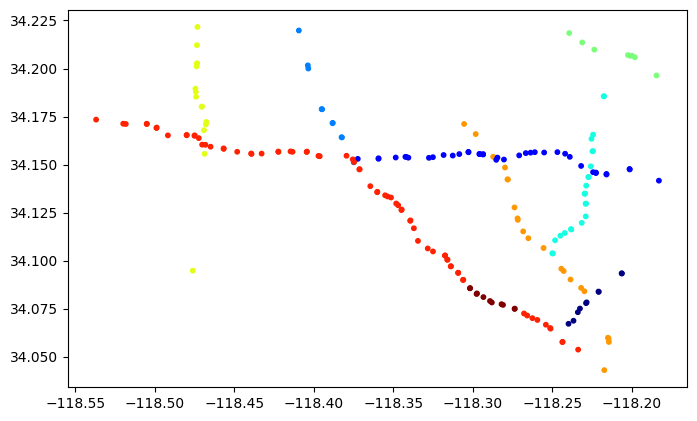}
         \caption[]{Graph clustering (9 clusters)} 
         \label{fig:graph_cluster_9}
     \end{subfigure}
     
     \vskip\baselineskip
     \centering
     \begin{subfigure}[b]{0.3\textwidth}
     \includegraphics[width=\textwidth]{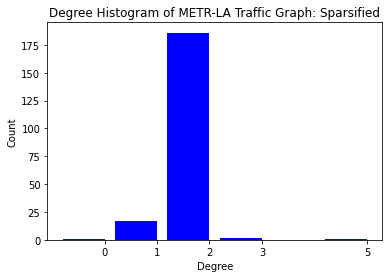}
         \caption[]{Our faithful graph degree distribution} 
         \label{fig:deg_sp}
     \end{subfigure}
     \quad
     \begin{subfigure}[b]{0.3\textwidth}
     \includegraphics[width=\textwidth]{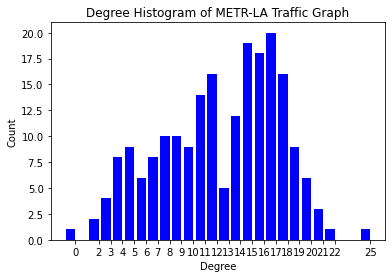}
         \caption[]{Original sensor graph degree distribution} 
         \label{fig:deg_dense}
     \end{subfigure}
        \caption{METR-LA traffic graph: visualization, clustering, and degree distribution}
        \label{fig:metr-la-compare}
\end{figure}

\textbf{Model.} We use a standard baseline, DCRNN proposed in \cite{li2017diffusion}. DCRNN is built on a core recurrent module, DCGRU cell, which iterates as follows: Let $x_{i,t}, h_{i,t}$ denote the $i$-th node feature and hidden state vector at time $t$; Let $X_t, R_t, H_{t-1}$ be the matrices of stacking feature vectors $x_{i,t}, r_{i,t}, h_{i,t-1}$ as rows.
\begin{align} 
z_{i,t} & =\sigma_g\left(W_z \, x_{i,t}+ U_z \, h_{i,t-1}+b_z\right)  \\ 
r_{i,t}  & =\sigma_g\left(W_r \, x_t+U_r \, h_{t-1}+b_r\right) \\ 
\hat{h}_{i,t} & =\phi_h\left( [A \, X \, W_h]_{[i,:]}^{\top} + [A  \left(R_{t} \odot H_{t-1}  \right) U_h]_{[i,:]}^{\top} +b_h\right) \label{eqn:conv_gru}  \\ 
h_{i,t} & =z_t \odot h_{t-1}+\left(1-z_t\right) \odot \hat{h}_t,
\end{align}
where $W_z, U_z, b_z, U_r, W_r, b_r, W_h, U_h, b_h$ are learnable weights and biases, $\sigma_g$ is the sigmoid function, $\phi_g$ is the hyperbolic tangent, and $h_{i,0} = 0$ for all $i$ at initialization. The crucial different from a vanilla GRU lies in eqn \eqref{eqn:conv_gru} where graph convolution replaces matrix multiplication. 

We then modify the graph convolution in \eqref{eqn:conv_gru} from global weight sharing to tying weights among clusters of nodes, similar to the implementation in Appendix~\ref{app.exp_human} for Relax-$\mathcal{S}_{16}$. For example, in the case of two clusters (orbits), we change $X W_h$ to 
\begin{equation}
 \texttt{swap} \left( \texttt{concat}[X_{c_1} W_{h,c_1} ; X_{c_2} W_{h,c_2}] \right),   
\end{equation}
 where $X_{c_i}$ denotes the submatrix of $X$ including the rows of nodes from cluster $i$ only, and $W_{h, c_1}, W_{h, c_2}$ are two learnable matrices. In words, we perform cluster-specific linear transformation, combine the transformed features, and reorder the rows (i.e., \texttt{swap}) to ensure compatibility with the graph convolution.

\textbf{Experiment Set-up. } For our experiments, we use DCRNN model with $1$ RNN layer and $1$ diffusion step. We choose $T' = 3$ (i.e., $3$ historical graph signals) and $T = 3$ (i.e., predict the next $3$ period graph signals). We train all variants for $30$ epochs using ADAM optimizer with learning rate $0.01$. We report the test set performance selected by the best validation set performance.

\subsubsection{Assumption Validation: Approximate Equivariant Map}  \label{app:traffic_assumption}

Before applying our construction of approximate symmetries, we validate the assumption of the target function $f^*$ being an approximately equivariant mapping using a trained DCRNN model as a proxy. We proceed as follows:

\textbf{Data.} We use the validation set of METR-LA (traffic graph signals in LA), which has $207$ nodes and consists of $14,040$ input and output signals. Each input $X \in \R^{207 \times 2}$ represents the traffic volume and speed in the past at the $207$ stations, and output $Y \in \R^{207}$ representing future traffic volume.

\textbf{Model.} We use a trained DCRNN model on our faithful graph, with input being $3$ historical signals $\bm{X} = (X_{T-3}, X_{T-2}, X_{T-1}) \in \R^{3 \times 207 \times 2}$ to predict the future signals $\bm{Y} = (X_{T}, X_{T+1}, X_{T+2}) \in \R^{3 \times 207}$. We denote this model as $f$. It gives reasonable performance with Mean Absolute Error $\approx 3$, and serves as a good proxy for the target (unknown) function $f^*$. 

\textbf{Neighbors.}
We take our faithful traffic graph that originally has $397$ non-loop edges, and only consider a subset of $260$ edges by thresholding the distance values to eliminate geometrically far-away nodes. This defines our $260$ neighboring node pairs.

\textbf{Equivariance error.} For each node pair $(i,j)$, we swap their input signals by interchanging the $(i,j)$-th slices in the node dimension of the tensor $\bm{X}$, denoted as $\bm{X}_{(i,j)}$, and check if the swapped output $\hat{\bm{Y}}_{(i,j)} = f ( \bm{X}_{(i,j)} )$ is close to the original output $\hat{\bm{Y}} = f(\bm{X})$ with $(i,j)$-th slices swapped. We measure “closeness” via the relative equivariant error at the node pair. Concretely, let $\bm{X}[i, j]$ denote the tensor slices at the $(i,j)$ node pair, and $\bm{X}[j,i]$ being the swapped version by interchanging $(i,j)$-th slices. The relative different is computed as 
$$ \big| \hat{\bm{Y}}_{(i,j)}[j, i] - \hat{\bm{Y}}[i, j]  \big| / \hat{\bm{Y}}[i, j] ,$$
where $/$ denotes element-wise division. We then compute the mean relative equivariance error over all instances in the validation set, which equals to 5.17\%. This gives concrete justification to enforce approximate equivariance in the traffic flow prediction problems.

\subsection{Application: Human Pose Estimation} \label{app.exp_human}

\subsubsection{Equivariant Layer for Human Skeleton Graph} \label{app:equiv_graph_net}

We apply the constructions in Section~\ref{app:equiv_abelian} to our human skeleton graph. We first show how to parameterize all linear $\mathcal{A}_G$-equivariant functions. Observe that $\mathcal{A}_G \cong (\SS)^2 = \{e, a, l, a l \}$, where the nontrivial actions correspond to the \textbf{a}rm flip with respect to the spine, the \textbf{l}eg flip with respect to the spine, and their composition. To fix ideas, we first treat both input and output graph signals as vectors, and construct $\mathcal{A}_G$-equivariant linear maps $f: \R^{16} \to \R^{16}$. 

Step $1$: Obtain the character table for $(\SS)^2$ 

\begin{table}[htb!]
\centering
\begin{tabular}{@{}lllll@{}}
\toprule
        & $e$ & $a$ & $l$ & $al$ \\ \midrule
$\chi_e$ & $1$ & $1$ & $1$ & $1$      \\
$\chi_2$ & $1$ & $1$ & $-1$ & $-1$    \\
$\chi_3$ & $1$ & $-1$ & $1$ & $-1$    \\
$\chi_4$ & $1$ & $-1$ & $-1$ & $1$    \\
\bottomrule
\end{tabular}
\smallskip
\caption{Character table for $(\SS)^2$}
\label{tab:double_ladder}
\end{table}

Step $2$: Construct the basis for isotypic decomposition. Here we choose to index the leg joint pairs as $(1,4), (2,5), (3,6)$, arm joint pairs as $(10,13), (11,14), (12,15)$, and spline joints $0, 7, 8, 9$.
\begin{align}
    B &= [ \mathcal{B}(P_{\chi_e}) ;  \mathcal{B}(P_{\chi_2}) ; \mathcal{B}(P_{\chi_3}) ; \mathcal{B}(P_{\chi_4}) ] \text{ where}  \nonumber \\ 
     \mathcal{B}(P_{\chi_e}) &= [(e_1 + e_4)/\sqrt{2}; \ldots; (e_{12} +  e_{15})/\sqrt{2}; e_0; e_7; e_8; e_9] \in \R^{16 \times 10}.  \nonumber \\
    \mathcal{B}(P_{\chi_2}) &= [(e_1 - e_4)/\sqrt{2}; (e_2 - e_5)/\sqrt{2}; (e_{3} - e_{6})/\sqrt{2}] \in \R^{16 \times 3};\nonumber  \\
     \mathcal{B}(P_{\chi_3}) &= [(e_{10} - e_{13})/\sqrt{2}; (e_{11} - e_{14})/\sqrt{2}; (e_{12} - e_{15})/\sqrt{2}] \in \R^{16 \times 3};\nonumber  \\
      \mathcal{B}(P_{\chi_4}) &= \emptyset
    \label{eqn:basis_iso}
\end{align}

Step $3$: Parameterize $f: \R^{16} \to \R^{16}$ by $f: \mathcal{B}(P_{\chi_e}) \to \mathcal{B}(P_{\chi_e})$ and $f: \mathcal{B}(P_{\chi_2}) \to \mathcal{B}(P_{\chi_2})$, i.e. for all $v \in \R^{16}$, let $v = \mathcal{B}(P_{\chi_e}) \, \boldsymbol{c_e} + \mathcal{B}(P_{\chi_2}) \,  \boldsymbol{c_2} + \mathcal{B}(P_{\chi_3}) \,  \boldsymbol{c_3}$, then
\begin{equation}
    f(v) = W_e \,  \boldsymbol{c_e} + W_2 \, \boldsymbol{c_2} + W_3 \, \boldsymbol{c_3}, \label{eqn:general_equiv}
\end{equation}
where $W_e \in \R^{10 \times 10}, W_2 \in \R^{3 \times 3}, W_3 \in \R^{3 \times 3}$ are (learnable) weight matrices. Now $f$ expresses all linear, equivariant maps w.r.t $(\SS)^2$. 

The following calculation based on $f: \R^{16} \to \R^{16}$ shows how much degree of freedom (measured by learnable parameters) is gained by relaxing the symmetry from global (group $\mathcal{S}_{16}$), exact $\mathcal{A}_G \cong (\SS)^2$, to trivial group (i.e., no symmetry).
\begin{align}
    f_{\mathcal{S}_{16}} &= w \, \bbI_{16} + w' (\boldsymbol{1} - \bbI_{16}), \quad (2 \text{ parameters}); \label{eqn:df_mpnn} \\
    f_{\mathcal{A}_G} &= W_e \oplus W_2 \oplus W_3, \quad (118 \text{ parameters on the isotypic components}); \\
    f_{\text{triv.}} &= W, \quad (256 \text{ parameters}).\label{eqn:compare}
\end{align}

To parameterize equivariant linear function $f: \R^{16 \times d} \to \R^{16 \times d'}$, we proceed by decoupling the input space into $\R^{10 \times d}, \R^{3 \times d}, \R^{3 \times d}$ and the output space into $\R^{10 \times d'}, \R^{3 \times d'}, \R^{3 \times d'}$. Now the learnable weight matrices for multidimensional input/output become $W_e \in \R^{10d \times 10d'}, W_2 \in \R^{3d \times 3d'}, W_3 \in \R^{3d \times 3d'}$. The construction is summarized in Algorithm \ref{alg:autg-net}. 

\begin{algorithm}[htb!] 
\caption{Equivariant layer $f_{\mathcal{A}_G}: \R^{16 \times d} \to \R^{16 \times d'}$ for $\mathcal{A}_G \cong (\SS)^2$}
\label{alg:autg-net}
\begin{algorithmic}
\Require The basis $B \in \R^{16 \times 16}$ in \eqref{eqn:basis_iso} for isotypic decomposition of $\mathcal{A}_G =(\SS)^2$, input $h^{(l)} \in \R^{16 \times d}$.
\State \textbf{Initialize:} The learnable weights $W_{e}^{(l)} \in \R^{10d' \times 10d}; W_2^{(l)}, W_3^{(l)} \in \R^{3d' \times 3d}; M^{(l)} \in \R^{16 \times 16}$.

\begin{enumerate}
\item Project $h^{(l)}$ to the isotypic component: $z^{(l)} = B^{\top} h^{(l)} $;
\item Perform block-wise linear transformation: 
\begin{itemize}
   \item $z_e = W_e \,  \operatorname{flatten}(z^{(l)}_{[:,:10]})  $
   \item $z_2 = W_2 \,  \operatorname{flatten}(z^{(l)}_{[:,10:13]})$
    \item $z_3 = W_3 \,  \operatorname{flatten}(z^{(l)}_{[:,13:]})$
    \item $z^{(l+1)} = \operatorname{concat}[z_e; \, z_2; \, z_3] \in \R^{16 \times d'}$
\end{itemize}
\item Project back to the standard basis: $\bar{h}^{(l+1)} = B \, z^{(l+1)} $.
\item Perform pointwise nonlinearity: $h^{(l+1)} = \sigma (  \bar{h}^{(l+1)}).$
\end{enumerate}
\Return $h^{(l+1)}$
\end{algorithmic}
\end{algorithm}

\begin{figure}[htb!]
  \centering  \includegraphics[width=0.68\textwidth]{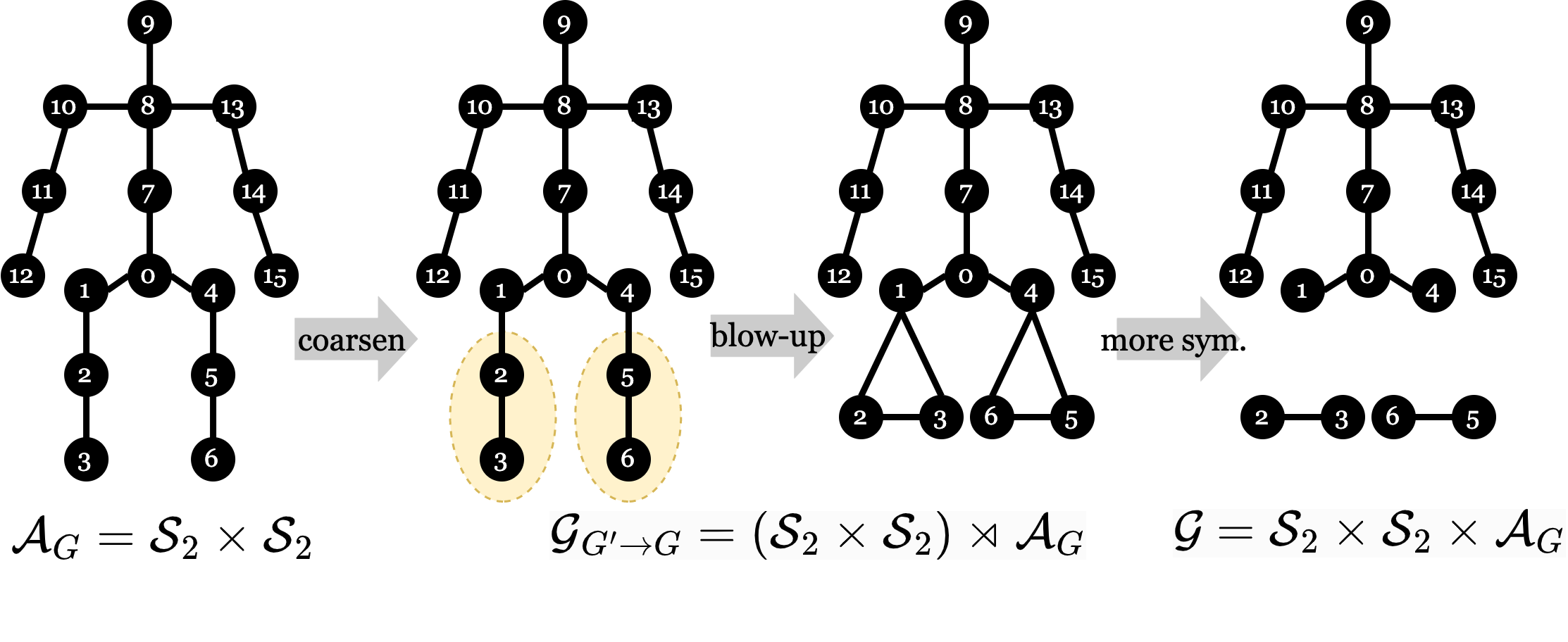} 
  \caption{Human skeleton graph $G$, its coarsened graph $G'$ (clustering leg joints), and blow-up of $G'$}
  \label{fig:human_coarsen}
\end{figure}

\newpage

\subsubsection{Experiment Details} \label{app.gnet_variants}

\textbf{Data.} We use the standard benchmark dataset, Human3.6M \cite{ionescu2013human3}, with the same protocol as in \cite{zhao2019semantic}: We train the models on $1.56$M poses (from human subjects $S1, S5, S6, S7, S8$) and evaluate them on $0.54$M poses (from human subjects $S9, S11$). We use the method described in \cite{pavllo20193d} to normalize the inputs (2D joint poses) to $[-1,1]$ and align the targets (3d joint poses) with the root joint.

\textbf{Model.} We give a detailed description of \gnet{} and its variants used in the experiments. Figure inset illustrates the architecture of \gnet{}.

\begin{wrapfigure}[10]{r}{0.15\textwidth}
 \centering
\includegraphics[width=0.14\textwidth]{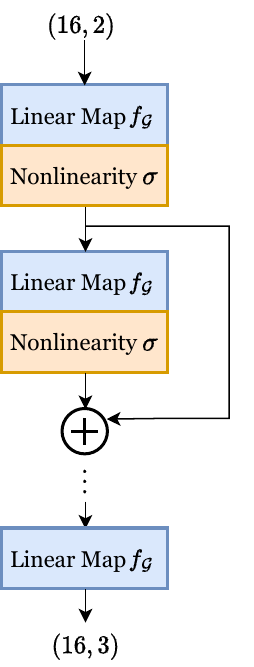} 
 \label{fig:gnet-model}
\end{wrapfigure}
For the human skeleton graph with $N=16$, we have $f_{\mathcal{G}}: \R^{16 \times d} \to \R^{16 \times k}$, where $d, k$ represent the input dimension and output dimension (for each layer). Let $f_{\mathcal{G}}[i,j]: \R^{16} \to \R^{16}$ denote its $(i,j)$-th slice. 
\vspace{0.5em}

1. \gnet{} with strict equivariance using equivariant linear map $f_{\mathcal{G}}$ (see Table~\ref{tab:exp_gnet}): 
\begin{itemize}
    \item $\mathcal{S}_{16}$:  $f_{\mathcal{S}_{16}}[i,j] \in \R^{ 16 \times 16}$ is a diagonal matrix, with one learnable scalar $a$ on diagonal and another learnable scalar $b$ off diagonal. 
    \item Relax-$\mathcal{S}_{16}$: We relax $f_{\mathcal{S}_{16}}[i,j]$ by having $16$ different pairs of scalars $(a_i, b_i), i \in [16]$, such that each node $i$ can map to itself and communicate to its neighbors in a different way (controlled by $(a_i, b_i)$), while still treat all neighbors equally (by using the same $b_i$ for nodes $j \neq i$).
    \item $\mathcal{A}_G=\SS^2$: We use Algorithm~\ref{alg:autg-net}.
    \item Trivial: We allow $f[i,j] \in \R^{16 \times 16}$ to be arbitrary, i.e., it has $16 \times 16$ learnable scalars.
\end{itemize}
We remark that for $\mathcal{S}_{16}$ and Relax-$\mathcal{S}_{16}$, we implement them by tying weights; for $\mathcal{A}_{G}$,  we implement them by projecting to isotypic component as shown in Algorithm~\ref{alg:autg-net}.
\vspace{0.5em}

2. \gnet{} augmented with graph convolution $A f_{\mathcal{G}} (x)$, denoted as \gnet{}(gc) (see Table~\ref{tab:exp_gnet}): We apply the equivariant linear map $f_{\mathcal{G}}$ in 1. and obtain the output $f_{\mathcal{G}}(x) \in \R^{16 \times k}$; We then apply graph convolution by multiplication from the left, i.e., $A f_{\mathcal{G}}(x) \in \R^{16 \times k}$.
\vspace{0.5em}

3. \gnet{} augmented with graph convolution and learnable edge weights, denoted as \gnet{}(gc+ew) (see Table~\ref{tab:exp_human_main}): We further learn the edge weights for the adjacency matrix $A$, by $\texttt{softmax} (M \odot A)$ where $M \in \R^{16 \time 16}$ represents the learnable edge weights, and $M_{i,j}$ is nonzero when $A_{i,j} \neq 0$ and $0$ elsewhere. This is inspired from SemGCN \cite{zhao2019semantic}. Besides the groups discussed above, we also implemented Relax-$(\mathcal{S}_6)^2$ which corresponds to tying weights among the coarsened graph orbits, consists of $4$ spline nodes (singleton orbits) and $2$ orbits for the left/right arm and leg nodes.
\vspace{0.5em}

4. \gnet{} augmented with graph locality constraints $( A \odot f_{\mathcal{G}}) (x)$ and learnable edge weights, denoted as \gnet{}(pt+ew) (see Table~\ref{tab:exp_gnet}): We perform pointwise multiplication $A \odot f_{\mathcal{G}}[i,j]$ at each $(i,j)$-th slice of $f_{\mathcal{G}}$. In practice, we also allow learnable edge weights as done in 3.

\textbf{Experimental Set-up.} We design \gnet{} to have $4$ layers (with batch normalization and residual connections in between the hidden layers), $128$ hidden units, and use ReLU nonlinearity. This allows \gnet{}(gc+ew) to recover SemGCN \cite{zhao2019semantic} when choosing $\mathcal{G} = \mathcal{S}_{16}$. We train our models for at most $30$ epochs with early stopping. For comparison purpose, we use the same optimization routines as in SemGCN \cite{zhao2019semantic} and perform the hyper-parameter search of learning rates $\{0.001, 0.002\}$. 

\textbf{Evaluation.} Table~\ref{tab:exp_gnet} shows results of \gnet{} and its variants when varying the choice of $\mathcal{G}$. We observe that using the automorphism group $\mathcal{A}_G$ does not give the best performance, while imposing no symmetries (Trivial) or a relaxed version of $\mathcal{S}_{16}$ yields better results. Here, enforcing no symmetry achieves better performance since the human skeleton graph is very small with $16$ nodes only. As shown in other experiments with larger graphs (e.g. image inpainting), enforcing symmetries indeed yields better performance.

\begin{table}[htb!]
\small
\centering
\caption{3D human pose prediction using \gnet{} and its variants. Error ($\pm$ std) measured by Mean Per-Joint Position Error (MPJPE) and MPJPE after rigid alignment (P-MPJPE) across $3$ runs. All methods use the same hidden dimension $d=128$. Bold type indicates the top-$2$ performance among each variant. ``NA'' indicates the loss fails to converge.  \label{tab:exp_gnet}}
\resizebox{0.75\textwidth}{!}{ 	\renewcommand{\arraystretch}{1.05}
\begin{tabular}{lccccc}
\toprule
\textbf{MPJPE} $\downarrow$	 & $\mathcal{S}_{16}$ & Relax-$\mathcal{S}_{16}$ &		$\mathcal{A}_G = (\SS)^2$	&	Trivial	 \\ 
 \midrule
\gnet{} & NA & $ \bm{47.97 \pm 0.47 }$ & $48.30 \pm 0.69$ & $\bm{42.86 \pm 0.64}$ \\
\gnet{}(gc) & NA & $54.50 \pm 4.33$ &   $\bm{49.40 \pm 1.37}$  & $\bm{43.24 \pm 0.82}$  \\
\gnet{}(pt+ew) & $41.54 \pm 0.47$ & $\bm{40.44 \pm 0.61}$ &  $40.63 \pm 0.26$ & $\bm{38.41 \pm 0.31}$ \\
\bottomrule
\end{tabular} }
$ $ 
\resizebox{0.75\textwidth}{!}{ 	\renewcommand{\arraystretch}{1.05}
\begin{tabular}{lccccc}
\toprule
\textbf{P-MPJPE} $\downarrow$	 & $\mathcal{S}_{16}$ & Relax-$\mathcal{S}_{16}$ &	$\mathcal{A}_G = (\SS)^2$	&Trivial	 \\ 	
 \midrule
\gnet{} & NA &$ \bm{36.45 \pm 0.56 }$  & $37.17 \pm 0.59$ &  $\bm{32.59 \pm 0.62}$ \\
\gnet{}(gc) & NA & $40.61 \pm 0.99 $  & $\bm{37.62 \pm 1.32}$ &  $\bm{33.05 \pm 0.81}$  \\
\gnet{}(pt+ew) & $32.31 \pm 0.03$ & $ \bm{31.11\pm 0.68}$ & $31.35 \pm 0.14$ & $\bm{29.68 \pm 0.22}$ \\
\bottomrule
\end{tabular} }
\end{table}

\textbf{Additional Evaluation.} Table~\ref{tab:exp_human} shows the experiments when we keep the number of parameters roughly the same across different choices of $\mathcal{G}$.
\begin{table}[htb!]
\small
\centering
\caption{3D human pose prediction using \gnet{}(gc+ew), where the models induced from each choice of $\mathcal{G}$ are set to have roughly the same number of parameters. $d$ denotes the number of hidden units.} 
\label{tab:exp_human}
\begin{tabular}{cccccc}
\toprule
\gnet{}	&	Number of Parameters & Number of Epochs  & MPJPE		&	P-MPJPE \\
\midrule
$\mathcal{S}_{16}$ & 0.27M ($d=128$) & $50$  &  $43.48$ &  $34.96$ \\

Relax-$\mathcal{S}_{16}$ & 0.27M ($d=32$) & $20$  & $\bm{40.08}$ & $\bm{32.08}$ \\
$\mathcal{A}_G = (\SS)^2$ & 0.22M ($d=16$) & $30$  & $44.10$   &   $34.12$  \\
Trivial &  0.22M $(d=10)$  & $30$ & $45.05$   &  $34.79$  	\\
\bottomrule
\end{tabular} 
\end{table}

\end{document}